\newcommand{\best}[1]{\textbf{#1}}
\newcommand{\dy}{\delta_y}
\newcommand{\dyj}{\delta_{y,j}}
\newcommand{\groupsformal}{set of groups}
\newcommand{\groupsformals}{sets of groups}
\newcommand{\mcobjs}{\objs_{\mathrm{mc}}}
\newcommand{\agobjs}{\objs_{\mathrm{ag}}}
\newcommand{\arxiv}[1]{}
\newcommand{\condsupports}{{\cS'}}
\newcommand{\conds}{{S'}}
\newcommand{\protfeatures}{{\cX'}}
\newcommand{\lobj}{objective}
\newcommand{\lobjs}{objectives}
\newcommand{\lobjv}{objective value}
\newcommand{\lobjvs}{objective values}
\newcommand{\calg}{\cA_{\text{sub}}}
\newcommand{\agnosticoracle}{\cA} %
\newcommand{\predictors}{\cP}
\newcommand{\predictorsnumcls}{\cP}
\newcommand{\numcls}{k}
\newcommand{\nummom}{r}
\newcommand{\supports}{\cS}
\newcommand{\weakbaseline}{B_{\text{weak}}}
\newcommand{\weakregret}{\regret_{\text{weak}}}
\newcommand{\hyp}{h}
\newcommand{\rhyp}{\minweight}
\newcommand{\hyps}{\cH}
\newcommand{\regret}{\mathrm{Reg}}
\newcommand{\levelsets}{V_\lambda}
\newcommand{\lvl}{\lambda}
\newcommand{\loss}{\ell}
\newcommand{\rloss}{\maxweight}
\newcommand{\risk}{\cL}
\newcommand{\support}[1]{\mathrm{Support}(#1)}
\newcommand{\ncost}{{c}}
\newcommand{\nncost}{\hat{c}}
\newcommand{\cost}{{c}}
\newcommand{\advcost}{c_{\text{adv}}}
\newcommand{\costs}{\cC}
\newcommand{\minus}{\scalebox{0.75}[1.0]{$-$}}
\newcommand{\objs}{\cG}
\newtheorem{theorem}{Theorem}[section]
\newtheorem{example}[theorem]{Example}
\newtheorem{lemma}[theorem]{Lemma}
\newtheorem{definition}[theorem]{Definition}
\newtheorem{proposition}[theorem]{Proposition}
\newtheorem{remark}[theorem]{Remark}
\newtheorem{fact}[theorem]{Fact}
\newenvironment{sketch}{%
	\proof}{\endproof}
\renewcommand{\epsilon}{\varepsilon}
\renewcommand{\hat}{\widehat}
\renewcommand{\tilde}{\widetilde}
\newcommand{\cart}[2]{#1 \times #2}
\newcommand{\norm}[1]{\left\lVert#1\right\rVert}
\newcommand{\para}[1]{\left(#1\right)}
\newcommand{\paraflat}[1]{(#1)}
\newcommand{\parantheses}[1]{\left(#1\right)}
\newcommand{\squarebrackets}[1]{\left[#1\right]}
\newcommand{\curlybrackets}[1]{\left\{#1\right\}}
\newcommand{\bset}[1]{\curlybrackets{#1}}
\newcommand{\bsetflat}[1]{\{#1\}}
\newcommand{\abs}[1]{\left|#1\right|}
\newcommand{\absflat}[1]{|#1|}
\newcommand{\setsize}[1]{\left| #1 \right|}
\DeclareMathOperator*{\Exp}{\mathbb{E}}
\newcommand{\EE}[1]{\Exp\left[#1\right]}
\newcommand{\EEs}[2]{\Exp_{#1}\left[#2\right]}
\newcommand{\EEslim}[2]{\Exp\limits_{#1}\left[#2\right]}
\newcommand{\EEsc}[3]{\Exp_{#1}\left[#2 \mid #3\right]}
\newcommand{\EEc}[2]{\Exp\left[#1\left|#2\right.\right]}
\renewcommand{\cite}[1]{\citep{#1}}
\newcommand{\bigO}[1]{O\parantheses{#1}}
\newcommand{\bigOsmol}[1]{O(#1)}
\newcommand{\bigOtilde}[1]{\tilde{O}\parantheses{#1}}
\newcommand{\bigOtildesmol}[1]{\tilde{O}(#1)}
\newcommand{\reals}{\mathbb{R}}
\newcommand{\integers}{\mathbb{Z}}
\newcommand{\simplex}{\Delta}
\newcommand{\simiid}{\mathrel{\stackrel{\makebox[0pt]{\mbox{\normalfont\tiny i.i.d.}}}{\sim}}}
\DeclareMathOperator*{\argmax}{arg\,max}
\DeclareMathOperator*{\argmin}{arg\,min}
\newcommand{\asseq}{\coloneqq}
\newcommand{\features}{\mathcal{X}}
\newcommand{\labels}{\mathcal{Y}}
\newcommand{\vcd}{\mathrm{VC}}
\newcommand{\dist}{D}
\newcommand{\distribution}{D}
\newcommand{\dists}{\cD}
\newcommand{\tsv}[2]{#1\vphantom{#1}^{\parantheses{#2}}}
\newcommand{\action}{a}
\newcommand{\act}{a}
\newcommand{\actionset}{A}
\newcommand{\actions}{A}
\newcommand{\minweight}{p}
\newcommand{\maxweight}{q}
\newcommand{\cA}{\mathcal{A}}
\newcommand{\cC}{\mathcal{C}}
\newcommand{\cD}{\mathcal{D}}
\newcommand{\cG}{\mathcal{G}}
\newcommand{\cH}{\mathcal{H}}
\newcommand{\cI}{\mathcal{I}}
\newcommand{\cL}{\mathcal{L}}
\newcommand{\cP}{\mathcal{P}}
\newcommand{\cS}{\mathcal{S}}
\newcommand{\cW}{\mathcal{W}}
\newcommand{\cX}{\mathcal{X}}
\newcommand{\bx}{{\mathbf{x}}}
\title{
A Unifying Perspective on Multicalibration: Game Dynamics for Multi-Objective Learning\thanks{Authors are ordered alphabetically.}
}
\author[]{Nika Haghtalab}
\author[]{Michael I. Jordan}
\author[]{Eric Zhao}
\affil[]{University of California, Berkeley\\{ \texttt{\{nika,jordan,eric.zh\}@berkeley.edu}}}
\date{}
\begin{document}
\allowdisplaybreaks
\maketitle

\begin{abstract}
	We provide a unifying framework for the design and analysis of multicalibrated predictors.
	By placing the multicalibration problem in the general setting of multi-objective learning---where learning guarantees must hold simultaneously over a set of distributions and loss functions---we exploit connections to game dynamics to achieve state-of-the-art guarantees for a diverse set of multicalibration learning problems.
	In addition to shedding light on existing multicalibration guarantees and greatly simplifying their analysis, our approach also yields improved guarantees, such as obtaining stronger multicalibration conditions that scale with the square-root of group size and improving the complexity of $k$-class multicalibration by an exponential factor of $k$.
	Beyond multicalibration, we use these game dynamics to address emerging considerations in the study of group fairness and multi-distribution learning.
\end{abstract}

\section{Introduction}
Multicalibration has emerged as a powerful tool for addressing fairness considerations and other resource allocation issues in machine learning.
Based on calibrated forecasting~\cite{dawid1982well,foster1998asymptotic}---which requires that among instances $x$ on which a predictor $h$ predicts the probability $h(x)=v$, a fraction $v$ truly have a positive outcome---multicalibration yields more fine-grained guarantees by seeking calibration across large and possibly overlapping collections of sub-populations~\cite{hebert-johnsonMulticalibrationCalibrationComputationallyIdentifiable2018}.
Multicalibration has been studied in numerous settings, including those with rich label sets (multi-class multicalibration \cite{gopalanLowDegreeMulticalibration2022}), adversarial rather than stochastic data (online multicalibration \cite{guptaOnlineMultivalidLearning2022}), and problems where no Bayes classifier exists (agnostic multicalibration \cite{shabatSampleComplexityUniform2020}).
The concept of multicalibration has also been applied to the estimation of other quantities, such as higher moments \cite{jungMomentMulticalibrationUncertainty2021}, and been strengthened in various ways, such as providing conditional guarantees \cite{bastaniPracticalAdversarialMultivalid2022}.

Multicalibration's versatility has led to the development of numerous specialized algorithms, each tailored to a unique multicalibration problem and requiring its own individualized analysis.
Promising attempts to provide an overarching conceptual framework for multicalibration, such as outcome indistinguishability~\citep{dwork_outcome_2021}, have had limited success in unifying these various algorithms.
In this paper, we tackle this challenge, developing \emph{a general-purpose algorithmic framework to guide the design of multicalibrated learning algorithms for a wide range of settings and considerations.}

Our approach is a dynamical systems and game-theoretic approach, taking as its point of departure the classical connection between no-regret learning and minimax problems~\citep[see, e.g.,][]{DBLP:conf/colt/FreundS96,FosterVohra}.
We formulate multicalibration problems  broadly as multiobjective optimization problems that admit a natural minmax representation. We explore the connection between minmax problems and dynamical systems to demonstrate that many multicalibration algorithms can be formulated as particular instances of two-player zero-sum games where players independently either run no-regret algorithms or best-response algorithms.
A wide range of multicalibration algorithms that exhibit varying trade-offs can be obtained by plugging in different no-regret and best-response algorithms.
This unified framework both recovers existing guarantees and in many cases improves upon them.

Although approaching multicalibration---a min-max optimization problem---with no-regret learning game dynamics seems like an obvious approach, no prior multicalibration work has succeeded in using a game-theoretic framing as a unifying principle for multicalibration.
The primary challenge is not reducing multicalibration to min-max optimization, which is straightforward (Facts~\ref{fact:multicalibration-as-multiobjective-learning},\ref{fact:online-multicalibration-as-online-multiobjective-learning}), but rather solving the resulting equilibrium computation problem in a way that connects to practical algorithms. The needs of multicalibration (such as determinism, large and complex predictor space, etc.) differ significantly in this regard from earlier applications of general-purpose no-regret algorithms and game dynamics. Most notably, the minimizing player’s action set is the set of all predictors, which scales exponentially in the domain size, which requires constructing a novel and highly non-trivial online learning strategy (Theorem~\ref{theorem:multicalibration-objectives}). Another example is that we need to obtain a deterministic solution from the game dynamics despite not having convexity, which required a novel form of no-regret/best-response game dynamics (Lemma 3.4). It is therefore quite surprising that every known multicalibration algorithm and guarantee can be cleanly recovered---and improved upon---with this game-theoretic learning dynamics framework.

Our primary contributions can be categorized into three areas.

\noindent
\textbf{1) Unifying framework.}
In Section~\ref{section:dynamics}, we give an overview of game dynamics as a concise but general framework for obtaining multi-objective learning guarantees.
To use these dynamics as a generic solution to 
various multicalibration problems,
we introduce a powerful general-purpose no-regret algorithm (Theorem~\ref{theorem:multicalibration-objectives}) and a distribution-free best-response algorithm (Theorem~\ref{theorem:multicalibration-distribution-free-best-response}), for calibration-like objectives.
This approach
allows us to unify the diverse---and often, seemingly unrelated---algorithms that have been studied in the multicalibration literature, offering a general template for their derivation and analysis.

\noindent
\textbf{2) New guarantees.}
In Sections~\ref{section:calibration} and \ref{section:considerations}, we use our framework to improve guarantees for various multicalibration settings (see Table~\ref{tab:summary}) by simply plugging in different no-regret and best-response algorithms.

These improvements include Theorem~\ref{theorem:deterministic-multicalibration-guarantee}'s exponential (in $\numcls$) reduction in the complexity of $\numcls$-class multicalibration over \cite{gopalanLowDegreeMulticalibration2022}, Theorem~\ref{theorem:non-deterministic-multicalibration-guarantee} polynomial (in $1/\epsilon$) reduction in the complexity of learning a succinct multicalibrated predictor over \cite{dwork_outcome_2021}, Theorem~\ref{theorem:deterministic-conditional-multicalibration-guarantee} the first conditional multicalibration results for the batch setting, and Theorem~\ref{theorem:agnostic-deterministic-multicalibration-guarantee} the first agnostic multicalibration guarantee that improve over uniform convergence \cite{shabatSampleComplexityUniform2020} and removes the dependence on $|\cX|$.
We also show with Theorem~\ref{theorem:deterministic-conditional-multicalibration-guarantee-weak} that, with only a $1/\epsilon$ increase in sample complexity (less than a cube-root factor increase), we can obtain multicalibration guarantees that, for each group, requires an error tolerance that scales with the \emph{square-root} of the probability of observing a group.
Note that, in contrast, existing multicalibration algorithms only guarantee constant error tolerance.

In Section~\ref{section:new-considerations}, we demonstrate that our framework can be extended to analyze problems beyond multicalibration, such as multi-group learning \citep{tosh_simple_2022}.

\noindent
\textbf{3) Simplified analyses.}
In Sections~\ref{section:calibration} and \ref{section:considerations}, we also demonstrate that our framework can recover the guarantees of various existing multicalibration algorithms, including online multicalibration \cite{guptaOnlineMultivalidLearning2022} and moment multicalibration \cite{jungMomentMulticalibrationUncertainty2021}, while avoiding intricate and problem-specific arguments.

\newcommand{\tdim}{d}
\renewcommand{\arraystretch}{2}
\begin{table}
	\centering
	\setlength\tabcolsep{6pt}
	\renewcommand{\arraystretch}{1.3}
	\footnotesize{
		\begin{tabular}{|l l l l l l|}
			\hline
			Problem                            & Complexity & Dynamic & Previous Results                                                                               & Our Results                                             & Reference                                                                                   \\
			\hline
			\hline
			\multirow{1}{*}{MC (Det)}          & Oracle     & NRBR    & $\bigOsmol{\numcls \epsilon^{-2}}$ \cite{gopalanLowDegreeMulticalibration2022}                 & $\bigOsmol{\ln(\numcls)  \epsilon^{-2}}$                & \multirow{1}{*}{Thm~\ref{theorem:deterministic-multicalibration-guarantee}}                 \\
			\hline
			\multirow{2}{*}{MC (Sqrt Guarantees, Det)}          & Sample     & NRBR    & $\tilde{O} (\frac{1}{\epsilon^6}(\sqrt{\numcls} \ln(\numcls \setsize{\supports} )$              & $\tilde{O}(\frac{1}{\epsilon^4}(\ln(\numcls) \ln(k \setsize{\supports} ) $                & \multirow{1}{*}{Thm~\ref{theorem:deterministic-conditional-multicalibration-guarantee-weak}}                 \\ 
   &      &     & \; \; $ + \numcls^{3/2} \ln(\lambda )))$                & \; \; $+ k \ln(\lambda) ))$                &     \\
			\hline
			\multirow{1}{*}{Agnostic MC (Det)} & Oracle     & NRBR    & $\bigOsmol{\setsize{\cX} \epsilon^{-2}}$ \cite{shabatSampleComplexityUniform2020}              & $\bigOsmol{\epsilon^{-2}}$                              & \multirow{1}{*}{Thm~\ref{theorem:agnostic-deterministic-multicalibration-guarantee}}        \\
			\hline
			\multirow{1}{*}{Agnostic MC}       & Sample     & NRNR    & $\bigOsmol{\setsize{\cX} \epsilon^{-2}}$ \cite{shabatSampleComplexityUniform2020}              & $\bigOsmol{(d + k )\epsilon^{-2}}$                      & \multirow{1}{*}{Thm~\ref{theorem:agnostic-non-deterministic-multicalibration-guarantee}}    \\
			\hline
			\multirow{1}{*}{Cond. MC (Det)}    & Oracle     & NRBR    & $\bigOsmol{\setsize{\supports}^2 \epsilon^{-2}}$                                               & $\bigOsmol{\setsize{\supports}  \epsilon^{-2}}$         & \multirow{1}{*}{Thm~\ref{theorem:deterministic-conditional-multicalibration-guarantee}}     \\
			\hline
			\multirow{1}{*}{Cond. MC}          & Sample     & NRNR    & $\bigOsmol{\setsize{\supports}^2 (d + k) \epsilon^{-2}}$                                       & $\bigOsmol{\setsize{\supports} (d + k)  \epsilon^{-2}}$ & \multirow{1}{*}{Thm~\ref{theorem:non-deterministic-conditional-multicalibration-guarantee}} \\
			\hline
			MC (Succinct)                      & Sample     & NRNR    & $\bigOtildesmol{(\tdim + \numcls) \epsilon^{-3}}$  \cite{gopalanLowDegreeMulticalibration2022} & $\bigOsmol{(\tdim + \numcls) \epsilon^{-2}}$            &
			Thm~\ref{theorem:non-deterministic-multicalibration-guarantee}                                                                                                                                                                                                                                                     \\
			\hline
			\hline
			Online MC                          & Regret     & BRNR    & $\bigOsmol{\sqrt{\log(\setsize{\supports}) T}}$  \cite{guptaOnlineMultivalidLearning2022}      & (Matching)                                              &
			Thm~\ref{theorem:online-multicalibration-guarantee}                                                                                                                                                                                                                                                                \\
			\hline
			$r$th Moment MC (Det)                   & Oracle     & NRBR    & $\bigOsmol{\nummom  \epsilon^{-4}}$ \cite{jungMomentMulticalibrationUncertainty2021}           & (Matching)                                              & Thm~\ref{theorem:deterministic-moment-multicalibration-guarantee}                           \\
			\hline
		\end{tabular}
	}
	\caption{\small
		This table summarizes the sample complexity and agnostic learning oracle complexity rates we obtain for multicalibration (MC), compared with the previous state of the art.
		$\supports$ denotes the set of groups for which we desire multicalibration, $\tdim$ the VC dimension of $\supports$, $\numcls$ the number of label classes, and $\epsilon$ the error tolerance.
		(Det) and (Succinct) respectively denote cases in which only deterministic or only succinct predictors are acceptable.
  ``Sqrt guarantees'' refers to requiring a multicalibration tolerance of $\epsilon \sqrt{\Pr(x \in S)}$ given a group $S \in 2^\cX$; when not specified, only a weaker multicalibration tolerance of $\epsilon$ is required.
	}
	\label{tab:summary}
\end{table}

\subsection{Related Work}
The study of calibration originated in online (adversarial) forecasting \cite{dawid1982well,hartCalibratedForecastsMinimax2022}, with classical literature having also studied calibration across multiple sub-populations~\citep{DBLP:conf/itw/FosterK06}.
Multicalibration, on the other hand, is classically studied in the stochastic setting; i.e., where $(x_i, y_i) \simiid \dist$, in which case calibration is trivially satisfied by the predictor $h(x) = \EE{y}$.
Due to this difference in formulation in the literature on calibration and that on multicalibration, the specific technical tools that have been developed in these areas have largely remained distinct.
Our work can be viewed as bridging this gap by showing that game-theoretic dynamics provides a unified foundation for studying multicalibration, just as no-regret learning underpins the study of calibration.

Motivated by fairness considerations, a formal definition of multicalibration was presented by \cite{hebert-johnsonMulticalibrationCalibrationComputationallyIdentifiable2018}, and has found a wide range of applications and conceptual connections to Bayes optimality, conformal predictions, and computational indistinguishability~\citep{hebert-johnsonMulticalibrationCalibrationComputationallyIdentifiable2018,jungMomentMulticalibrationUncertainty2021,gopalanOmnipredictors2022,guptaOnlineMultivalidLearning2022,dwork_outcome_2021,jungBatchMultivalidConformal2022}.
Algorithms for multicalibration have largely developed along two lines: one studying oracle-efficient boosting-like algorithms~\citep[see, e.g.,][]{hebert-johnsonMulticalibrationCalibrationComputationallyIdentifiable2018,kimMultiaccuracyBlackBoxPostProcessing2019,gopalanLowDegreeMulticalibration2022, dwork_outcome_2021} and another studying algorithms with flavors of online optimization~\citep[see, e.g.,][]{guptaOnlineMultivalidLearning2022,noarovOnlineMultiobjectiveMinimax2021}.
Our work establishes that the contrast between these lines of work and the algorithms they develop are entirely attributable to different choices of game dynamics.

Multi-objective and multi-distribution learning are concepts that have found broad applications in addressing fairness, collaboration, and robustness challenges. Haghtalab et al.~\cite{haghtalabOnDemandSamplingLearning2022} obtained tight sample complexity bounds for this general problem of learning predictors with near-optimal accuracy across multiple populations (distributions), which is a natural extension of collaborative learning \cite{blum_collaborative_2017}, group Distributionally Robust Optimization~\cite{sagawa_distributionally_2020} and agnostic federated learning~\cite{mohri_agnostic_2019}.
Multi-objective learning also relates to recent notions of learning over sub-populations that are mutually compatible \cite{rothblum_multi-group_2021,tosh_simple_2022,blum2020advancing} (see \cite{haghtalabOnDemandSamplingLearning2022} for an in-depth discussion).
In Section~\ref{section:new-considerations}, we use our framework to match and improve guarantees relative to this literature.

\section{Preliminaries}
\label{section:preliminaries}
We use $\features$ to denote a feature space and $\labels$ a label space, where $\labels = [\numcls]$ in $\numcls$-class classification.
A data distribution $\dist$ is a probability distribution supported on labeled datapoints $\cart{\features}{\labels}$.
We use $\hyps$ to denote a set of hypotheses and $\objs$ a set of \lobjs{}, where an \lobj{}---or equivalently, a loss---is a function $\loss: \hyps \times (\cart{\features}{\labels}) \to [0, 1]$ that takes a hypothesis and datapoint and returns a penalty value.
We denote expected \lobjvs{} by $\risk_{\dist, \loss}(\hyp) \asseq \EEs{(x, y) \sim \dist}{\loss \para{\hyp, (x, y)}}$.
For non-deterministic $\rhyp \in \simplex(\hyps)$ and $\rloss \in \simplex(\dists \times \objs)$, we overload notation to write $\risk_{\rloss}(\rhyp) \asseq \EEs{\hyp \sim \rhyp, (\dist, \loss) \sim \rloss}{\risk_{\dist, \loss}(\hyp)}$. %
We often use the shorthands $\tsv{x}{1:T} \asseq \tsv{x}{1}, \dots, \tsv{x}{T}$ and $\tsv{\bsetflat{f(\tsv{x}{t})}}{1:T} = f(\tsv{x}{1}), \dots, f(\tsv{x}{T})$.
We also write $f(a, \cdot)$ to denote the function $x \mapsto f(a, x)$ or $f_{(\cdot)}(a)$ to denote $x \mapsto f_x(a)$.
For $y\in[k]$, $\dy\in\{0,1\}^k$ denotes its one-hot delta function, while for $y \in [k], j\in[k]$ we write $\dyj = 1[y = j]$ .

\subsection{Multicalibration}
We use $\predictorsnumcls = (\Delta{\labels})^\features$ to denote the set of all $\numcls$-class \emph{predictors} which are maps from features to label distributions.
To differentiate between predictors and distributions over predictors, we refer to $h \in \predictorsnumcls$ as a \emph{deterministic} predictor, and $p \in \Delta(\cP)$, as a \emph{non-deterministic} predictor.\footnote{Importantly, determinism of a predictor $h$ does not imply that $h \in \labels^\features$ as opposed to $\Delta(\labels)^\features$.}
Calibration is a property of predictors $h\in \predictorsnumcls$ requiring, for example in binary classification, that among instances $x$ assigned prediction probability $\hyp(x) = [1-v,v]$ a fraction $v$ are truly labeled $1$.
\emph{Multicalibration}~\cite{hebert-johnsonMulticalibrationCalibrationComputationallyIdentifiable2018} is the finer-grained notion that requires calibration on subgroups of one's domain.
This set of subgroups is typically finite or of finite VC dimension.

In practice, we work with approximate notions of calibration/multicalibration and discretize the range of probability assignment.
In $\numcls$-class prediction, we partition the $\numcls$-dimensional hypercube into $\lvl^\numcls$ equal cubes $\levelsets^\numcls$, where $\levelsets \asseq \bset{[0, 1/\lambda), [1/\lambda, 2/\lambda), \dots}$.
For any interval $v \in \levelsets^\numcls$, we use $h(x) \in v$ to denote that prediction $\hyp(x)$ falls pointwise in the buckets of $v$, i.e., $h(x)_j \in v_j$ for all $j \in [\numcls]$.
We next formally define multicalibration.%
\footnote{
    This discretized (binned) definition of multicalibration follows convention and can be readily translated into others \cite{hebert-johnsonMulticalibrationCalibrationComputationallyIdentifiable2018}.
    We also follow convention in measuring in the $\ell_\infty$ norm, the $\ell_2$ norm is also commonly used in calibration.
    Some definitions of multicalibration~\citep[e.g.,][]{hebert-johnsonMulticalibrationCalibrationComputationallyIdentifiable2018} appear as conditional expectations, but for constant probability domain subgroups, which makes them equivalent to our definition.
}
\begin{definition}
	\label{def:multicalibration}
	Fix $\epsilon > 0$, $\lvl \in \integers_+$, and a \groupsformal{} $\supports \subseteq 2^{\features}$.
	A (possibly non-deterministic) $\numcls$-class predictor $\rhyp \in \simplex(\predictorsnumcls)$ is $(\supports, \epsilon, \lvl)$-\emph{multicalibrated} for some data distribution $\dist$ if
	\begin{align*}
		\forall S \in \supports, v \in \levelsets^\numcls, j \in [\numcls]:\;
		\abs{\EEs{(x,y) \sim \dist, \hyp \sim \rhyp}{(\hyp(x)_j - \dyj ) \cdot 1[\hyp(x) \in v, x \in S]}} \leq \epsilon.
	\end{align*}
	That is, $\rhyp$ is calibrated on every level set $v \in \levelsets^\numcls$ of every group $S \in \supports$ for every class $j \in [\numcls]$.
	We are often specifically interested in a deterministic solution; that is, where $\rhyp \in \predictorsnumcls$.
\end{definition}
The batch setting, where we want to find a multicalibrated predictor for some fixed data distribution $\dist$, is the most commonly studied.
Multicalibration can also be defined for online settings where the data distribution changes adversarially over time.
\begin{definition}[Online multicalibration]
	\label{def:online_multicalibration}
	Fix $\epsilon > 0$, $\lvl \in \integers_+$, and a \groupsformal{} $\supports \subseteq 2^{\features}$.
	In online multicalibration, at every timestep $t \in [T]$, a learner chooses a $\numcls$-class predictor $\tsv{\rhyp}{t} \in \simplex(\predictorsnumcls)$.
	Nature, which observes $\tsv{\rhyp}{1:t}$, responds with any choice of data distribution $\tsv{\dist}{t}$.
	The learner's predictors $\tsv{\rhyp}{1:T}$ are $(\supports, \epsilon, \lvl)$-\emph{online multicalibrated} on $\tsv{\dist}{1:T}$ if
	\begin{align*}
		\forall S \in \supports, v \in \levelsets^\numcls, j \in [\numcls]: \Bigg| \frac{1}{T} \sum_{t = 1}^T  \EEs{\substack{\hyp \sim \tsv{\rhyp}{t} \\ (x, y) \sim \tsv{\dist}{t}}}{1[x \in S] \cdot 1[\hyp(x) \in v] \cdot (\hyp(x)_j - \dyj)} \Bigg| \leq \epsilon.
	\end{align*}
\end{definition}

\subsection{Multi-Objective Learning}
We use multi-objective learning (a generalization of multi-distribution learning introduced by~\cite{haghtalabOnDemandSamplingLearning2022}) as a tool for studying multicalibration and other related problems.
The goal of multi-objective learning is to find a hypothesis that simultaneously minimizes a set of \lobjvs{}.
\begin{definition}[Multi-objective learning]
	A \emph{multi-objective learning} problem $(\dists, \objs, \hyps)$ consists of a set of \lobjs{} $\objs$, a hypothesis class $\hyps$, and a set of data distributions $\dists$.
	An $\epsilon$-optimal solution to $(\dists, \objs, \hyps)$ is a (potentially non-deterministic) hypothesis $\rhyp \in \simplex(\hyps)$ where
	\begin{align}
		\label{eq:def-multiobj}
		\max_{\dist \in \dists, \loss \in \objs} \risk_{\dist, \loss}(\rhyp) \leq \min_{\hyp^* \in \hyps} \max_{\dist^* \in \dists, \loss^* \in \objs} \risk_{\dist^*, \loss^*}(\hyp^*) + \epsilon.
	\end{align}
	We usually prefer a \emph{deterministic solution} $\rhyp \in \hyps$ over a \emph{non-deterministic solution} $\rhyp \in \simplex(\hyps)$.
\end{definition}
\noindent
We mainly consider \emph{single-distribution} multi-objective problems, where $\dists = \bsetflat{\dist}$, though \emph{multi-distribution} multi-objective learning arises in conditional multicalibration (Section~\ref{subsection:conditional-multicalibration}) and group fairness (Section~\ref{section:new-considerations}).
We can also consider multi-objective learning in online settings with adversarial data distributions.
\begin{definition}[Online multi-objective learning]
	An \emph{online multi-objective learning} problem $(\dists, \objs, \hyps)$ consists of a set of distributions $\dists$, \lobjs{} $\objs$ and hypothesis class $\hyps$.
	At each timestep $t \in [T]$, a learner first picks a hypothesis $\tsv{\rhyp}{t} \in \simplex(\hyps)$.
	Nature, who sees $\tsv{\rhyp}{1:t}$, responds with a data distribution $\tsv{\dist}{t} \in \dists$.
	We say the hypotheses $\tsv{\rhyp}{1:T}$ are $\epsilon$-optimal on the distributions $\tsv{\dist}{1:T}$ if
	\begin{align}
		\label{eq:def-online-multiobj-max-min}
		\frac{1}{T} \max_{\loss \in \objs} \sum_{t=1}^T \risk_{\tsv{\dist}{t}, \loss}(\tsv{\rhyp}{t}) \leq \max_{\dist^* \in \dists} \min_{\hyp^* \in \hyps} \max_{\loss^* \in \objs} \risk_{\dist^*, \loss^*}(\hyp^*) + \epsilon.
	\end{align}
\end{definition}
In many problems we consider, like online multicalibration, Nature can pick from any data distribution; that is, $\dists$ is unrestricted.
Let us also note that the baseline in the right-hand-side of \eqref{eq:def-online-multiobj-max-min} differs from that of \eqref{eq:def-multiobj}.
This is intentional: when $\dists$ is unrestricted, the min-max baseline of \eqref{eq:def-multiobj} may be large, since there may be no hypothesis that is simultaneously good for all distributions.
Instead, the max-min baseline of \eqref{eq:def-online-multiobj-max-min} is the best hypothesis $\hyp^*$ for the most difficult distribution $\dist^*$.

\paragraph{Multicalibration as multi-objective learning.}
(Batch) Multicalibration is a single-distribution multi-objective learning problem, whose objectives penalize over-estimation and under-estimation of label probabilities on subsets of the domain.
\begin{restatable}{fact}{multicalibrationAsMultiobjectiveLearning}
	\label{fact:multicalibration-as-multiobjective-learning}
	Let $\dist$ be a data distribution for some $\numcls$-class prediction problem and fix $\epsilon > 0$, $\lvl \in \integers_+$, and a \groupsformal{} $\supports \subseteq 2^{\features}$.
	For every direction $i \in \bsetflat{\pm 1}$, level set $v \in \levelsets^\numcls$, group $S \in \supports$, and class $j \in [\numcls]$, we define an \lobj{} $\loss_{i, j, S, v}: \predictorsnumcls \to [0, 1]$ where
	\begin{align}
		\label{eq:def-multicalibration-objs}
		\loss_{i, j, S, v}(\hyp, (x, y)) & = 0.5 + 0.5 \cdot i \cdot 1[\hyp(x) \in v, x \in S] \cdot (\hyp(x)_j - \dyj)
	\end{align}
	and $\mcobjs \asseq \bset{\loss_{i, j, S, v}}_{i,j,S,v}$ is the set of these \lobjs{}.
	Predictor $\rhyp \in \simplex(\predictorsnumcls)$ is a $\epsilon$-optimal solution to the multi-objective learning problem $(\bsetflat{\dist}, \mcobjs, \predictorsnumcls)$ if and only if $\rhyp$ is $(\supports, 2\epsilon, \lvl)$-multicalibrated for $\dist$.
\end{restatable}
\begin{proof}
	In the multi-objective learning problem $(\dists, \mcobjs, \predictorsnumcls)$, the multi-objective value of a predictor $\rhyp$, $\risk^*(\rhyp) \asseq \max_{\dist^* \in \dists, \loss^* \in \objs} \risk_{\dist^*, \loss^*}(\rhyp)$, is exactly the (rescaled and shifted) magnitude of the predictor's multicalibration violation.
	Formally,
 \begin{align*}
    \risk^*(\rhyp) = \frac 12+\frac 12 \max_{\substack{j \in [\numcls], S \in \supports, v \in \levelsets^\numcls}} \abs{\EEs{\substack{(x,y) \sim \dist, \hyp \sim \rhyp}}{(\hyp(x)_j - \dyj) \cdot 1[\hyp(x) \in v, x \in S]}}.
 \end{align*}
    Since we can write the absolute value of an abstract value $w$ as $|w| = \max_{i \in \bsetflat{\pm 1}} i \cdot w$, we next observe that the optimal multi-objective value of the problem $(\dists, \mcobjs, \predictorsnumcls)$ is $0.5$.
	\begin{align*}
		\min_{\hyp^* \in \predictorsnumcls} \risk^*(\hyp^*) = \frac 12 + \frac 12 \max_{i \in \bsetflat{\pm 1}} i \cdot \squarebrackets{\max_{\substack{j \in [\numcls], S \in \supports \\ v \in \levelsets^\numcls}} \EEs{\substack{(x,y) \sim \dist \\ \hyp \sim \rhyp}}{(\hyp(x)_j - \dyj) \cdot 1[\hyp(x) \in v, x \in S]}}
		= \frac 12.
	\end{align*}
	This fact is because multicalibration \lobjs{} $\mcobjs$ are symmetric around $0.5$ (where $i=+1$ and $i=-1$ penalize over and under estimation), leading to the worst loss to be at least $0.5$.
	Furthermore, the Bayes classifier $\hyp^*$ neither overestimates nor underestimates the label distribution and therefore achieves the loss of $0.5$ exactly.
	Combining these inequalities, we have
	\begin{align*}
		\risk^*(\rhyp) - \min_{\hyp^* \in \predictorsnumcls} \risk^*(\hyp^*) = \frac 12 \max_{\substack{j \in [\numcls], S \in \supports \\ v \in \levelsets^\numcls}} \abs{\EEs{\substack{(x,y) \sim \dist \\ \hyp \sim \rhyp}}{(\hyp(x)_j - \dyj) \cdot 1[\hyp(x) \in v, x \in S]}}.
	\end{align*}
	Therefore $\risk^*(\rhyp) - \min_{\hyp^* \in \predictorsnumcls} \risk^*(\hyp^*) = \epsilon$ if and only if our multicalibration violation is $2 \epsilon$.
\end{proof}
Online multicalibration is similarly an online multi-objective learning problem.
\begin{restatable}{fact}{onlineMulticalibrationAsOnlineMultiobjectiveLearning}
	\label{fact:online-multicalibration-as-online-multiobjective-learning}
	Let $\tsv{\dist}{1:T}$ be data distributions for some $\numcls$-class prediction problem, $\dists$ be the set of all data distributions, and $\mcobjs$ be as defined in \eqref{eq:def-multicalibration-objs}.
	Fix $\epsilon > 0$, $\lvl \in \integers_+$, and a \groupsformal{} $\supports \subseteq 2^{\features}$.
	A sequence of predictors $\tsv{\rhyp}{1:T} \in \simplex(\predictorsnumcls)$ is $\epsilon$-optimal on $\tsv{\dist}{1:T}$ for the online multi-objective learning problem $(\dists, \mcobjs, \predictorsnumcls)$ if and only if $\tsv{\rhyp}{1:T}$ is $(\supports, 2\epsilon, \lvl)$-online multicalibrated on $\tsv{\dist}{1:T}$.
\end{restatable}
\begin{proof}
	Note that online multicalibration is an online multi-objective learning problem by construction.
	To analyze its optimality condition, we expand the definition of the \lobjs{} in $\mcobjs$ as
	\begin{align*}
		& \max_{\loss \in \objs} \frac{1}{T} \sum_{t=1}^T \risk_{\tsv{\dist}{t}, \loss}(\tsv{\rhyp}{t})  = \frac 12 + \frac 12 \max_{\substack{j \in [\numcls], S \in \supports \\ v \in \levelsets^\numcls}} \Bigg|  \EEs{\substack{\hyp \sim \tsv{\rhyp}{t}\\ (x, y) \sim \tsv{\dist}{t}}}{\frac{1}{T}\sum_{t = 1}^T 1[x \in S] \cdot 1[\hyp(x) \in v] \cdot (\hyp(x)_j - \dyj)} \Bigg|.
	\end{align*}
	We again observe that the optimal value of the problem is $0.5$; formally,
	\begin{align*}
		\max_{\dist^* \in \dists} \min_{\hyp^* \in \hyps} \max_{\loss^* \in \objs} \risk_{\dist^*, \loss^*}(\hyp^*) = 0.5.
	\end{align*}
	This is because choosing $\hyp^*$ to be the Bayes classifier for $\dist^*$ achieves the value of $0.5$, which given the absolute value in the second term is the minimum achievable value. %
	Thus, $\tsv{\rhyp}{1:T}$ is $\epsilon$-optimal on $\tsv{\dist}{1:T}$ if and only if $\tsv{\rhyp}{1:T}$ is $(\supports, 2\epsilon, \lvl)$-optimal on $\tsv{\dist}{1:T}$.
\end{proof}
\section{Tools for Solving Multi-Objective Learning using Game Dynamics}
\label{section:dynamics}

A common approach to multi-objective learning is to imagine a game between a minimizing player who proposes hypotheses and a maximizing player who proposes \lobjs{} and data distributions.
It is well-established (inspired by min-max equilibria, e.g., ~\cite{freund_decision-theoretic_1997}) that a solution can be obtained when both players play no-regret algorithms.
{However,} considerations that arise in multicalibration motivate us to study a broader range of dynamics and their implications than has been commonly explored.

\paragraph{Online learning.}
In an online learning problem, at each timestep $t \in [T]$, a learner chooses an action $\tsv{\action}{t} \in \actionset$ which an adversary observes and responds to with a cost function $\tsv{\cost}{t}: \actionset \to [0, 1]$.
We will usually assume costs to be linear maps.
The learner's \emph{regret} is defined as 
\[
\regret(\tsv{\action}{1:T}, \tsv{\cost}{1:T}) \asseq \sum_{t=1}^T \tsv{\cost}{t}(\tsv{\action}{t}) - \min_{\action^* \in \actionset} \sum_{t=1}^T \tsv{\cost}{t}(\action^*),\]
which no-regret learning algorithms like Hedge~\citep{freund_decision-theoretic_1997} can bound.
\begin{lemma}[Hedge Regret Bound \cite{kale2007efficient}]
	\label{lemma:hedge-basic-regret-bound}
	In an online learning problem where the action set is the simplex $\actions = \simplex_\numcls$ and costs are linear, the actions chosen by Hedge have a regret of at most $2 \sqrt{\ln(\numcls) T}$.
\end{lemma}
We write no-regret algorithms as a function of a sequence of cost functions $\text{Alg}: ([0, 1]^\actionset)^* \to \actionset$.
For example, when $\actionset = \simplex_\numcls$, the output of the Hedge algorithm is defined as $\text{Hedge}(\tsv{\cost}{1}, \dots, \tsv{\cost}{t}) = [w_1 / \norm{w}_1, \dots, w_\numcls / \norm{w}_1]$ where $w_i = \exp(-\eta \sum_{\tau=1}^t \tsv{\cost}{t}(\delta_i))$, for a specific choice of $\eta$.

We sometimes define regret against a different baseline $B \in \reals$, with 
\[
\regret_B(\tsv{\action}{1:T}, \tsv{\cost}{1:T}) \asseq \sum_{t=1}^T \tsv{\cost}{t}(\tsv{\action}{t}) - B.\]
For example, we often consider the min-max baseline $\weakbaseline \asseq T \cdot \min_{\action^* \in \actionset} \max_{\cost^* \in \costs} \cost^*(\action^*)$, where $\costs$ is the set of cost functions that the adversary chooses from.
We also often encounter \emph{stochastic cost functions}, functions of form $\cost: \actionset \times (\features \times \labels) \to [0, 1]$ for which we want to minimize expected value $\risk_{\dist, \cost} \asseq \EEs{(x,y) \sim \dist}{\cost(\cdot, (x, y))}$ on some distribution $\dist$.
A stochastic cost $\cost$ is linear if, for every $x \in \features, y \in \labels$, $\cost(\cdot, (x, y))$ is a linear map.
The regret of online learning algorithms on stochastic costs concentrates quickly.

\begin{restatable}[Stochastic Approximation \cite{nemirovski2009robust}, Lemma 3.1]{lemma}{martingaledynamics}
	\label{lemma:martingaledynamics}
	Consider an online learning problem on the simplex $\simplex_\numcls$ with linear stochastic costs.
	Suppose, after each timestep $t \in [T]$---that is, after picking the action $\tsv{\action}{t}$---we estimate the expected cost $\risk_{\dist, \tsv{\cost}{t}}$ with $\tsv{\nncost}{t}(\action) \asseq \tsv{\ncost}{t}(\action, (\tsv{x}{t}, \tsv{y}{t}))$, where $(\tsv{x}{t}, \tsv{y}{t}) \simiid \dist$.
	With probability at least $1 - \delta$, $\abs{\regret(\tsv{\action}{1:T}, \tsv{\bset{\risk_{\dist, \tsv{\cost}{t}}}}{1:T}) - \regret\paraflat{\tsv{\action}{1:T}, \tsv{\nncost}{1:T}}} \leq \bigOsmol{\sqrt{T \ln(\numcls / \delta)}}$.
\end{restatable}

\paragraph{Best responses.}
An action $\action$ is an $\epsilon$-\emph{best response} to a cost function $\cost: \actions \to [0, 1]$ if $\cost(\action) \leq \min_{\action^* \in \actionset} \cost(\action^*) + \epsilon$.
An \emph{agnostic learning oracle} $\agnosticoracle_\epsilon: (\actions \to [0, 1]) \times 2^\actions \to \actions$ is a function that, given a cost function $\cost$ and subset of actions $\actions' \subseteq \actions$, computes an $\epsilon$-best response $\agnosticoracle_\epsilon(\cost, \actions')$ to $\cost$ from $\actions'$.
When $\actions' = \actions$, we may write $\agnosticoracle_\epsilon(\cost)$ without the second argument.
Agnostic learning oracles are given this name because they are usually used to find best-responses to the expected values of stochastic cost functions using some number of samples.
Through this paper, we design learning algorithms that only interact with a data distribution through querying an agnostic learning oracle; the \emph{oracle complexity} of such an algorithm is defined as the number of agnostic oracle calls that the algorithm makes. Using standard sample complexity bounds, all our algorithms also have a corresponding sample complexity that circumvent the use of agnostic learning oracles.

Another concept we encounter in online settings is the \emph{distribution-free best response}.
An action $\action$ is a \emph{distribution-free} $\epsilon$-\emph{best response} to a stochastic cost $\cost$ if
\begin{align}
\label{eq:distfreebr}
	\smash{\max_{x \in \features, y \in \labels} \cost(\action, (x, y)) \leq \max_{x \in \features, y \in \labels} \min_{\action^* \in \actionset} \cost(\action^*, (x, y)) + \epsilon.}
\end{align}

\paragraph{Game dynamics.}
We consider no-regret dynamics between a learner (minimizing player) who chooses hypotheses $\tsv{\rhyp}{t} \in \simplex(\hyps)$ and an adversary (maximizing player) who chooses data distributions and \lobjs{} $\tsv{\rloss}{t} \in \simplex(\dists \times \objs)$ where the learners loss is $\risk_{\tsv{\rloss}{t}}(\tsv{\rhyp}{t})$.
When both players are no-regret, the time-average actions they picked quickly converge to an approximate solution for the multi-objective learning problem $(\dists, \objs, \hyps)$.
This method of \emph{no-regret dynamics} has long played a role in empirical convergence to notions of equilibria~\cite{freund_decision-theoretic_1997}. Here, we review these dynamics and their convergence guarantees. While the proofs of these lemmas are standard at a high level (and are deferred to Appendix~\ref{appendix:maindynamics}), they differ in fundamental ways from past work. In particular, these lemmas consider \emph{weak regret}, \emph{single timestep solutions} (instead of time-averaged ones), and the consequences of having \emph{distribution-free best responses}, all of which play important roles in multicalibration.
\begin{restatable}[No-Regret vs. No-Regret (NRNR)]{lemma}{nrnr-dynamics}
	\label{lemma:nrnr-dynamics}
	Consider a multi-objective learning problem $(\dists, \objs, \hyps)$, where a learner and adversary chose $\tsv{\rhyp}{1:T} \in \simplex(\hyps)$ and $\tsv{\rloss}{1:T} \in \simplex(\dists \times \objs)$.
	If both players are no-regret, $\weakregret\para{\tsv{\rhyp}{1:T}, \tsv{\bsetflat{\risk_{\tsv{\rloss}{t}}(\cdot)}}{1:T}} \leq T \epsilon$ and $\regret\paraflat{\tsv{\rloss}{1:T}, \tsv{\bset{\minus \risk_{(\cdot)}(\tsv{\rhyp}{t})}}{1:T}} \leq T \epsilon$, then the non-deterministic hypothesis $\overline{\rhyp} = \text{Uniform}(\tsv{\rhyp}{1:T})$ is a $2\epsilon$-optimal solution.
\end{restatable}

{The next dynamic focuses on obtaining a solution \emph{from a single timestep, rather than time-averaged solutions.} To obtain this, we consider a dynamics in which the learner goes first and is no-regret, the adversary observes the learner's action and then best responds.}
\begin{restatable}[No-Regret vs. Best-Response (NRBR)]{lemma}{nrbr-dynamics}
	\label{lemma:nrbr-dynamics}
	Consider a multi-objective learning problem $(\dists, \objs, \hyps)$, where a learner chose $\tsv{\rhyp}{1:T} \in \simplex(\hyps)$  and an adversary chose $\tsv{\rloss}{1:T} \in \simplex(\dists \times \objs)$.
	If the learner is no-regret, $\weakregret\para{\tsv{\rhyp}{1:T}, \tsv{\bsetflat{\risk_{\tsv{\rloss}{t}}(\cdot)}}{1:T}} \leq T \epsilon$, and the adversary $\epsilon$-best-responded %
 to the costs $\tsv{\bset{\minus \risk_{(\cdot)}(\tsv{\rhyp}{t})}}{1:T}$ 
 using
 $\tsv{\rloss}{1:T}$, then there is a $t \in [T]$ where $\tsv{\rhyp}{t}$ is a $2 \epsilon$-optimal solution.
\end{restatable}
{Once the existence of a single-round solution $\tsv{\rhyp}{t}$ is established by Lemma~\ref{lemma:nrbr-dynamics}, it is easy to find which time step corresponds to this solution by using a few samples and testing all $\tsv{\rhyp}{1:T}$, as follows.}
\begin{restatable}{lemma}{find}
	\label{lemma:find}
	Suppose a set of hypotheses $\tsv{\rhyp}{1:T}$ contains an $\epsilon$-optimal solution.
	We can find a $5\epsilon$-optimal solution $\tsv{\rhyp}{t} \in \tsv{\rhyp}{1:T}$ using $\bigOsmol{\epsilon^{-2} \setsize{\dists} \ln(\setsize{\dists} \setsize{\objs} T / \delta)}$ samples with probability $1 - \delta$.
\end{restatable}
{The next dynamic considers difficult distribution-free problems, such as online multi-objective learning. To enable learning in these scenarios, we consider a no-regret adversary that
chooses \lobjs{} $\tsv{\rloss}{1:T} \in \simplex(\objs)$ and a learner who first observes
$\tsv{\rloss}{t}$ and plays a distribution-free best-response. The following lemma considers the consequences of these interactions.}

\begin{restatable}[Best-Response vs. No-Regret (BRNR)]{lemma}{brnr-dynamics}
	\label{lemma:brnr-dynamics}
	Consider an online multi-objective learning problem $(\dists, \objs, \hyps)$, where a learner chose $\tsv{\rhyp}{1:T} \in \simplex(\hyps)$, an adversary chose $\tsv{\rloss}{1:T} \in \simplex(\objs)$, and  $\tsv{\dist}{1:T} \in \dists$ is any sequence.
  	Assume that the adversary is no-regret, i.e., $\regret\paraflat{\tsv{\rloss}{1:T}, \tsv{\bsetflat{\minus \risk_{\tsv{\dist}{t}, (\cdot)}(\tsv{\rhyp}{t})}}{1:T}} \leq T \epsilon$, and the learner's actions $\tsv{\rhyp}{1:T}$ are distribution-free $\epsilon$-best-responses to the stochastic costs $\tsv{\rloss}{1:T}$, i.e., $\max_{x, y} \risk_{(x,y), \tsv{\rloss}{t}}(\tsv{\rhyp}{t}) \leq \max_{x,y} \min_{\rhyp^*} \risk_{(x,y), \tsv{\rloss}{t}}(\rhyp^{*}) + \epsilon$.
   	Then, the hypotheses $\tsv{\rhyp}{1:T}$ are $2 \epsilon$-optimal on $\tsv{\dist}{1:T}$.
\end{restatable}
The question of which dynamic should be used and their implementation hinges on the type of solution desired and what online learning and best-response guarantees are possible for each player.
NRNR dynamics often offer maximum sample efficiency, since calculating $\epsilon$-best-responses may be more sample intensive, but produce a time-averaged solution.
NRBR dynamics, though less sample-efficient due to the adversary's repeated best-response computation, provides a single timestep solution and is crucial for, e.g., deterministic multicalibration.
BRNR dynamics, where learners follow (that is, pick their action after the adversary) and have greater ease being no-regret, are crucial for online settings.

\subsection{No-regret and Best Response Computation in Multicalibration}
In this section, we introduce algorithms for obtaining (weak) no-regret and best response guarantees in multicalibration, so that we can apply the previously discussed dynamics.

Since the adversary picks from the---usually, small---set of \lobjs{} $\mcobjs$, it can achieve no-regret using  standard algorithms like Hedge.
However, the learner picks from the---very large---space of all predictors $\predictorsnumcls$; if it used Hedge, its regret would grow linearly in domain size $\setsize{\features}$.
An important aspect of multicalibration is that its complexity must be independent of the domain size $\features$ (while it can depend on the complexity of the subgroups $\supports$).
We leverage structural properties of multicalibration \lobjs{} (see Appendix~\ref{appendix:dynamics} for formal treatment) to obtain generic no-regret and best response algorithms that give domain-independent guarantees for the learner.

\paragraph{No-regret algorithm.}
Our first theorem gives a no-(weak)-regret learning algorithm for the learner that provides three important properties simultaneously: 
1) domain-independent regret-bound that is also logarithmic in $k$,  2) uses no samples (or knowledge of) the underlying distribution $\dist$, and 3) deterministically outputs a deterministic predictor per round. 
Properties 1-2 lead to fast convergence and low sample complexity in the aforementioned dynamics and property 3 is key for obtaining deterministic multicalibration guarantees (via NRBR).

\begin{restatable}{theorem}{multicalibrationobjectives}
	\label{theorem:multicalibration-objectives}
	Consider $\predictorsnumcls$ the set of $\numcls$-class predictors and any  adversarial sequence of stochastic costs $\tsv{\rloss}{1:T} \in \simplex(\mcobjs)$, where $\mcobjs$ are the multicalibration \lobjs{} \eqref{eq:def-multicalibration-objs}.
	There is a no-regret algorithm that outputs (deterministic) predictors
 $\tsv{\hyp}{1:T}\in \predictorsnumcls$
such that $\weakregret(\tsv{\hyp}{1:T}, \tsv{\bset{\risk_{\dist, \tsv{\rloss}{t}}}}{1:T}) \leq 2\sqrt{\ln(\numcls) T}$ for every data distribution $\dist$.
     Moreover, the algorithm does not need any samples from $\dist$.
\end{restatable}
\begin{proof}
	Consider the following algorithm.
	At each feature $x \in \features$, initialize a Hedge algorithm that picks an action $\tsv{\hyp}{t}(x) \in \simplex(\labels)$ at each timestep $t \in [T]$.
     Aggregating each algorithm's action yields our learner's overall action $\tsv{\hyp}{t} \in \predictors$.
For each $x \in \features$, let $\tsv{\hyp}{t+1}(x)$ be the outcome of Hedge at step $t+1$ after observing linear loss functions $\tsv{f}{\tau}_{\tsv{\hyp}{\tau}, x}: \reals^\numcls \to [0, 1]$ for $\tau\in [t]$:
	\begin{align}
	\tsv{f}{\tau}_{\tsv{\hyp}{\tau}, x}(z) \asseq 0.5 + 0.5 \sum_{i \in \bsetflat{\pm 1}, j \in [\numcls], S \in \supports, v \in \levelsets^\numcls} z_j \cdot  \tsv{\rloss}{\tau}_{i, j, S, v} \cdot i \cdot 1[\tsv{\hyp}{\tau}(x) \in v, x \in S], \label{eq:def_ft}
	\end{align}
where $\tsv{\rloss}{\tau}_{i,j,S, v}$ is the probability $\tsv{\rloss}{\tau}$ assigns to loss $\loss_{i,j,S, v}$.

	Hedge gives $\sum_{t=1}^T \tsv{f}{t}_{\tsv{\hyp}{t}, x}(\tsv{\hyp}{t}(x))
		- \min_{z^* \in \simplex(\labels)} \sum_{t=1}^T \tsv{f}{t}_{\tsv{\hyp}{t}, x}(z^*) \leq 2 \sqrt{\ln(\numcls) T}$ (Lemma~\ref{lemma:hedge-basic-regret-bound}).
	Since this inequality holds for all $x \in \features$, by law of total expectation,
	\begin{align}
		2 \sqrt{\ln(\numcls) T} & \geq \EEs{(x, y)\sim \dist}{\sum_{t=1}^T \tsv{f}{t}_{\tsv{\hyp}{t}, x}(\tsv{\hyp}{t}(x))}
		- \EEs{(x, y)\sim \dist}{\min_{z^* \in \simplex(\labels)} \sum_{t=1}^T \tsv{f}{t}_{\tsv{\hyp}{t}, x}(z^*)}  \label{eq:n}
\\
		                        & =
		\EEs{(x, y) \sim \dist}{\sum_{t=1}^T \tsv{f}{t}_{\tsv{\hyp}{t}, x}(\tsv{\hyp}{t}(x))}
		- \min_{\hyp^* \in \predictorsnumcls}\EEs{(x, y) \sim \dist}{\sum_{t=1}^T  \tsv{f}{t}_{\tsv{\hyp}{t}, x} ( \hyp^*(x))}, \nonumber
	\end{align}
where the last transition is by defining $\hyp^*$ such that $\hyp^*(x) = z^*$ for every $x$-dependent choice of $z^*$ in \eqref{eq:n}.
 
Consider $\sum_{t=1}^T  \EEs{(x,y) \sim \dist}{\tsv{f}{t}_{\tsv{\hyp}{t}, x} (\dy)}$ and
add and subtract this term to the right-hand side to obtain
	\begin{align}
		\sum_{t=1}^T\EEs{(x,y) \sim \dist}{\tsv{f}{t}_{\tsv{\hyp}{t}, x}(\tsv{\hyp}{t}(x) \!-\!\dy)}
		 \! \leq 2 \sqrt{\ln(\numcls) T} + \min_{\hyp^* \in \predictorsnumcls} \sum_{t=1}^T  \EEs{(x,y) \sim \dist}{\tsv{f}{t}_{\tsv{\hyp}{t}, x}(\hyp^*(x) \!-\!\dy)}\!\leq\! 2 \sqrt{\ln(\numcls) T} + 0.5T, \label{eq:sum_ft}
	\end{align}
	where the last transition is by the fact that for 
$\hyp^*(x) = \EEsc{x,y \sim \dist}{\dy}{x}$, 
 $\sum_{t=1}^T  \EEs{x,y\sim \dist}{\tsv{f}{t}_{\tsv{\hyp}{t}, x}(\hyp^*(x) - \dy)} = 0.5 T$.
Next, we show the LHS of \eqref{eq:sum_ft} is equivalent to 
$\weakregret(\tsv{\hyp}{1:T}, \tsv{\bset{\risk_{\dist, \tsv{\rloss}{t}}}}{1:T})$.
First recall from multicalibration objectives that $\loss_{i, j, S, v}(\hyp, (x, y))  = 0.5 + 0.5 \cdot i \cdot 1[\hyp(x) \in v, x \in S] \cdot (\hyp(x)_j - \dyj)$. Since $\tsv{\rloss}{t}_{i, j, S, v}$ is the probability $\tsv{\rloss}{t}$ assigns to $\loss_{i, j, S, v}(\cdot)$ in \eqref{eq:def_ft}, we have that  
\[
\tsv{\rloss}{t}(\tsv{\hyp}{t}, (x, y)) = \tsv{f}{t}_{\tsv{\hyp}{t}, x} (\tsv{\hyp}{t}(x) - \dy).\]
Therefore, \eqref{eq:sum_ft} implies that $\sum_{t=1}^T \risk_{\dist, \tsv{\rloss}{t}}(\tsv{\hyp}{t}) \leq 0.5 T + 2\sqrt{\ln(\numcls) T}$. It is left to establish that the min-max baseline of these losses is indeed at least $0.5$. This is implied by Fact~\ref{fact:multicalibration-as-multiobjective-learning}  and its proof is deferred to Appendix~\ref{appendix:dynamics}. Thus $\weakregret(\tsv{\hyp}{1:T}, \tsv{\bset{\risk_{\dist, \tsv{\rloss}{t}}}}{1:T}) \leq 2 \sqrt{\ln(\numcls) T} $.
\end{proof}

\paragraph{Best-response algorithm.}
Our second theorem proves the existence of a distribution-free best-response algorithm for the learner, which is key for obtaining online multicalibration guarantees (via BRNR).
Note that, as a distribution-free algorithm, it requires no samples to compute these best-responses.
\begin{restatable}{theorem}{multicalibrationdistributionfreebestresponse}
	\label{theorem:multicalibration-distribution-free-best-response}
    Consider the set of $\numcls$-class predictors $\predictors$.
    Fix an $\epsilon > 0$ and let $\rloss \in \simplex(\mcobjs)$ be a mixture of multicalibration \lobjs{} \eqref{eq:def-multicalibration-objs}.
    There always exists a (non-deterministic) predictor $\rhyp \in \simplex(\predictorsnumcls)$ that is a distribution-free $\epsilon$-best-response \eqref{eq:distfreebr} to the stochastic cost function $\rloss(\cdot, (x, y))$.
\end{restatable}
\begin{sketch}
At a high level, this statement and proof are similar to the min-max proof of calibration~\cite{foster1999proof,hartCalibratedForecastsMinimax2022}, with additional details in Appendix~\ref{appendix:dynamics}.
 Let $\tilde{\simplex}(\labels)$ be a finite $\epsilon$-covering of $\simplex(\labels)$ and $\phi_x: \simplex\paraflat{\tilde{\simplex}(\labels)} \times \simplex(\labels) \to [-1, 1]$ the bilinear function $\phi_x(a, b) = \EEs{(i, j, S, v) \sim \rloss, \hat{a} \sim a}{i \cdot 1[\hat{a} \in v, x \in s] \cdot (\hat{a}_j - b_j)}$.
 Consider $\rhyp(x) = \argmin\limits_{a \in \simplex\paraflat{\tilde{\simplex}(\labels)}} \max\limits_{b \in \simplex(\labels)} \phi_{x}(a, b)$.
	By the minmax theorem, $\max\limits_{b \in \simplex(\labels)} \phi_{x}(\rhyp(x), b) = \max\limits_{b \in \simplex(\labels)}
		\min\limits_{a \in \tilde{\simplex}(\labels)} \phi_{x}(a, b) \leq \max\limits_{b \in \simplex(\labels)}
		\min\limits_{a \in {\simplex}(\labels)} \phi_{x}(a, b) + \epsilon$.
      Thus, $\max_{y \in \labels} \rloss(\rhyp, (x, y)) \leq 0.5 \cdot \epsilon + \max_{y \in \labels} \min_{\hyp^* \in \predictors} \rloss(\hyp^*, (x, y))$ for all $x \in \features$.
\end{sketch}

\section{Batch and Online Multicalibration}
\label{section:calibration}
\setlength{\FrameSep}{8pt}
\setlength{\OuterFrameSep}{1pt}

We match and improve a broad set of previous results in multicalibration (See Table~\ref{tab:summary}) that had received individualized and ad hoc treatments in the past. Our work establishes that not only is there a unified approach for obtaining these results but that it all comes back to game dynamics empowered by our no-regret and distribution-free-best response results for multicalibration---Theorems~\ref{theorem:multicalibration-objectives} and \ref{theorem:multicalibration-distribution-free-best-response}. Below we focus on three main results highlighting  NRNR, NRBR, and BRNR dynamics.

\subsection{Batch Multicalibration}
\paragraph{Multicalibration with non-deterministic predictors.}
Our first algorithm uses no-regret no-regret (NRNR) dynamics to find non-deterministic multicalibrated predictors.
Its guarantees are summarized in Theorem~\ref{theorem:non-deterministic-multicalibration-guarantee} and match the fastest known sample complexity rates for multicalibration \cite{guptaOnlineMultivalidLearning2022,noarovOnlineMultiobjectiveMinimax2021} of order $O(\ln(\setsize{\supports} \lvl^\numcls)/\epsilon^2)$.
It also improves upon the existing fast-rate algorithms of \cite{guptaOnlineMultivalidLearning2022,noarovOnlineMultiobjectiveMinimax2021} by producing predictors with a succinct support and small circuit size---an important property impacting storage and evaluation costs of predictors.
These properties were previously only known to be attained by the less sample efficient multicalibration algorithms of \cite{hebert-johnsonMulticalibrationCalibrationComputationallyIdentifiable2018}.
In this way, Algorithm \ref{alg:non-deterministic-multicalibration} simultaneously attains the best aspects of the algorithms of \cite{guptaOnlineMultivalidLearning2022} and \cite{hebert-johnsonMulticalibrationCalibrationComputationallyIdentifiable2018}.

\begin{restatable}{theorem}{nonDeterministicMulticalibrationGuarantee}
	\label{theorem:non-deterministic-multicalibration-guarantee}
	Fix $\epsilon > 0$, $\lvl, \numcls \in \integers_+$, a \groupsformal{} $\supports \subseteq 2^{\features}$, and a data distribution $\dist$.
	The following algorithm, with probability $1 - \delta$, returns a non-deterministic $\numcls$-class predictor that is $(\supports, \epsilon, \lvl)$-multicalibrated on $\dist$ and takes no more than $\bigO{\epsilon^{-2} (\ln(\setsize{\supports}/\delta) + \numcls \ln(\lvl))}$ samples from $\dist$.
	\begin{framed}
		\centerline{\emph{\underline{No-Regret vs No-Regret}}}
		\noindent
		Construct the problem $(\bsetflat{\dist}, \mcobjs, \predictorsnumcls)$ from Fact~\ref{fact:multicalibration-as-multiobjective-learning} and let $\smash{T = C \epsilon^{-2}\ln(\setsize{\supports} \lvl^\numcls / \delta)}$ for some universal constant $C$.
		Over $T$ rounds, have an adversary choose $\smash{\tsv{\rloss}{1:T} \in \simplex(\mcobjs)}$ by applying Hedge to the costs $\smash{\tsv{\bsetflat{1 - \loss_{(\cdot)}(\tsv{\hyp}{t}, (\tsv{x}{t}, \tsv{y}{t}))}}{1:T}}$ where $\smash{(\tsv{x}{t}, \tsv{y}{t}) \simiid \dist}$.
		In parallel, have a a learner choose predictors $\tsv{\hyp}{1:T} \in \predictorsnumcls$ by applying the no-regret learning algorithm of Theorem~\ref{theorem:multicalibration-objectives} to the stochastic costs $\tsv{\loss}{1:T}$, where $\tsv{\loss}{t} \simiid \tsv{\rloss}{t}$.
		Return the predictor $\rhyp= \mathrm{Uniform}(\tsv{\hyp}{1:T})$.
		This algorithm is written explicitly in Algorithm~\ref{alg:non-deterministic-multicalibration}.
	\end{framed}
\end{restatable}
\begin{proof}
	By Theorem~\ref{theorem:multicalibration-objectives}, if $T \geq 64 \epsilon^{-2} \ln(\numcls)$, the predictors $\tsv{\hyp}{1:T}$ guarantee the learner a regret bound of $\smash{\weakregret(\tsv{\hyp}{1:T}, \tsv{\bsetflat{\risk_{\tsv{\loss}{t}}}}{1:T}) \leq T \epsilon / 4}$.
	Similarly, by Lemma~\ref{lemma:hedge-basic-regret-bound}, if $T \geq 576 \epsilon^{-2} \ln(2 \numcls \lvl^\numcls \setsize{\supports})$, the \lobj{} mixtures $\tsv{\rloss}{1:T}$ guarantee the adversary a regret bound of $\regret\para{\tsv{\rloss}{1:T}, \tsv{\bsetflat{1 - \loss_{(\cdot)}(\tsv{\hyp}{t}, (\tsv{x}{t}, \tsv{y}{t}))}}{1:T}} \leq T \frac{\epsilon}{12}$.

	We now argue that the adversary's regret with respect to the costs $\smash{1 - \loss_{(\cdot)}(\tsv{\hyp}{t}, (\tsv{x}{t}, \tsv{y}{t}))}$ approximates its regret with respect to the costs $\smash{1 - \risk_{\dist, (\cdot)}(\tsv{\hyp}{t})}$.
	Since $\EEs{\tsv{x}{t}, \tsv{y}{t} \sim \dist}{\loss_{(\cdot)}(\tsv{\hyp}{t}, (\tsv{x}{t}, \tsv{y}{t}))} = \risk_{\dist, (\cdot)}(\tsv{\hyp}{t})$, by Lemma~\ref{lemma:martingaledynamics}, there is a universal constant $C$ such that, if $T \geq C \epsilon^{-2} \ln(2 \numcls \lvl^\numcls \setsize{\supports} / \delta)$,
	\begin{align*}
		\abs{\regret\para{\tsv{\rloss}{1:T}, \tsv{\bsetflat{1 - \loss_{(\cdot)}(\tsv{\hyp}{t}, (\tsv{x}{t}, \tsv{y}{t}))}}{1:T}} - \regret\para{\tsv{\rloss}{1:T}, \tsv{\bsetflat{1 - \risk_{\dist, (\cdot)}(\tsv{\hyp}{t})}}{1:T}}} \leq T \frac{\epsilon}{12},
	\end{align*}
	with probability $1 - \delta$.
	Similarly, since $\EEs{\tsv{\loss}{t} \sim \tsv{\rloss}{t}}{\risk_{\dist, \tsv{\loss}{t}}(\tsv{\hyp}{t})} = \risk_{\dist, \tsv{\rloss}{t}}(\tsv{\hyp}{t})$, Lemma~\ref{lemma:martingaledynamics} guarantees
	\begin{align*}
		\abs{\regret\para{\tsv{\rloss}{1:T}, \tsv{\bsetflat{1 - \risk_{\dist, (\cdot)}(\tsv{\hyp}{t})}}{1:T}} - \regret\para{\tsv{\loss}{1:T}, \tsv{\bsetflat{1 - \risk_{\dist, (\cdot)}(\tsv{\hyp}{t})}}{1:T}}} \leq T \frac{\epsilon}{12},
	\end{align*}
	with probability $1 - \delta$.
	Taking a triangle inequality and union bound, we can see that for sufficiently large $C$, $\regret\para{\tsv{\loss}{1:T}, \tsv{\bsetflat{1 - \risk_{\dist, (\cdot)}(\tsv{\hyp}{t})}}{1:T}} \leq T \epsilon / 4$.

	By Lemma~\ref{lemma:nrnr-dynamics}, the ergodic iterate $\hyp^*$ is an $(\epsilon/2)$-optimal solution.
	By Fact~\ref{fact:multicalibration-as-multiobjective-learning}, $\hyp^*$ is therefore a $(\supports, \epsilon, \lvl)$-multicalibrated predictor.
	The sample complexity is exactly $T$ since the algorithm only samples one datapoint at each iteration.
\end{proof}

Theorem~\ref{theorem:non-deterministic-multicalibration-guarantee}---along with all other results in this section---can be rewritten with $\vcd(\supports) \log(1/\epsilon)$ replacing $\ln(\setsize{\supports})$; this is done by taking a cover of $\supports$.
We further note that Algorithm~\ref{alg:non-deterministic-multicalibration}	can be instantiated with different choices of no-regret algorithms for the adversary and different versions of Theorem~\ref{theorem:multicalibration-objectives} for the learner.
In Section~\ref{section:experiments}, we empirically compare such variants of Algorithm~\ref{alg:non-deterministic-multicalibration}.

\paragraph{Multicalibration with deterministic predictors.}
We are often specifically interested in finding deterministic multicalibrated predictors.
This is usually because non-deterministic predictors can be multicalibrated in a very weak sense, as we show in the following example.
\begin{example}
	Consider a data distribution $\dist$ supported uniformly on $\features = \bsetflat{x_1, x_2}$, where $\labels = [0, 1]$, $\Pr(Y=1 \mid X = x_1) = 0$ and $\Pr(Y=1 \mid X = x_2) = 1$.
	The non-deterministic predictor that is supported uniformly on predictors $\hyp_1$ and $\hyp_2$, where $\hyp_1(x_1) = 0$ and $\hyp_1(x_2) = 0.5$ and $\hyp_2(x_1) = 0.5$ and $\hyp_2(x_2) = 1$, is technically multicalibrated.
	However, neither $\hyp_1$ nor $\hyp_2$ are calibrated.
\end{example}

Our second algorithm uses no-regret best-response (NRBR) dynamics to find deterministic multicalibrated predictors.
Its guarantees are summarized in Theorem~\ref{theorem:deterministic-multicalibration-guarantee} and improve on \cite{gopalanLowDegreeMulticalibration2022}'s oracle complexity bound of $\bigO{\numcls / \epsilon^2}$ with a bound of $\bigO{\ln(\numcls) / \epsilon^2}$, an exponential reduction in the dependence on the number of classes $\numcls$.
In concurrent work, \cite{dworkNewInsightsMultiCalibration2023} attains a matching rate with graph regularity.

\begin{restatable}{theorem}{deterministicMulticalibrationGuarantee}
	\label{theorem:deterministic-multicalibration-guarantee}
	Fix $\epsilon > 0$, $\lvl, \numcls \in \integers_+$, a \groupsformal{} $\supports \subseteq 2^{\features}$, and a data distribution $\dist$.
	The following algorithm returns a deterministic $\numcls$-class predictor that is $(\supports, \epsilon, \lvl)$-multicalibrated on $\dist$ and makes $\bigOsmol{\ln(\numcls)/\epsilon^2}$ calls to an agnostic learning oracle.
	Moreover, with probability $1 - \delta$, the oracle calls can be implemented with $\bigOtilde{\frac{1}{\epsilon^3} (\sqrt{\ln(\numcls)} \ln(\numcls \setsize{\supports} / \delta) + \numcls \ln(\lvl))}$ samples from $\dist$.
 \begin{framed}
		\centerline{\emph{\underline{No-Regret vs Best-Response}}}
		\noindent
		Construct the problem $(\bsetflat{\dist}, \mcobjs, \predictorsnumcls)$ from Fact~\ref{fact:multicalibration-as-multiobjective-learning} and let $\smash{T = C \epsilon^{-2}\ln(\setsize{\supports} \lvl^\numcls \delta)}$ for some universal constant $C$.
		Over $T$ rounds, have a learner choose predictors $\tsv{\hyp}{1:T} \in \predictorsnumcls$ by applying the no-regret learning algorithm of Theorem~\ref{theorem:multicalibration-objectives} to the stochastic costs $\tsv{\loss}{1:T}$.
		Have an adversary choose $\tsv{\loss}{1:T}$ by calling an agnostic learning oracle at each $t \in [T]$:  $\tsv{\loss}{t} = \agnosticoracle_{\epsilon/8}(1-\risk_{\dist, (\cdot)}(\tsv{\hyp}{t}))$.
		Using $C \ln(T / \delta) / \epsilon^2$ samples from $\dist$, return the predictor $\tsv{\hyp}{t^*}$ with the lowest empirical multicalibration error.
		This algorithm is written explicitly in Algorithm~\ref{alg:deterministic-multicalibration}.
	\end{framed}
\end{restatable}
\begin{proof}
	By Theorem~\ref{theorem:multicalibration-objectives}, if $T \geq 256 \epsilon^{-2} \ln(\numcls)$, the predictors $\tsv{\hyp}{1:T}$ guarantee the learner a regret bound of $\smash{\weakregret(\tsv{\hyp}{1:T}, \tsv{\bsetflat{\risk_{\dist,\tsv{\loss}{t}}}}{1:T}) \leq T \epsilon / 8}$.
	Moreover, by construction, every $\tsv{\loss}{t}$ is an $(\epsilon / 8)$ best-response to the cost $1 - \risk_{\dist, (\cdot)}(\tsv{\hyp}{t})$.
	By Lemma~\ref{lemma:nrbr-dynamics}, there must exist a timestep where $\tsv{\hyp}{t}$ is $(\epsilon / 4)$-optimal.
	By Lemma~\ref{lemma:find}, the $\tsv{\hyp}{t^*}$ found by the algorithm is $(\epsilon/2)$-optimal with probability at least $1 - \delta$.
	By Fact~\ref{fact:multicalibration-as-multiobjective-learning}, $\tsv{\hyp}{t^*}$ is a $(\supports, \epsilon, \lvl)$-multicalibrated predictor.
	The oracle complexity is exactly $T$, while the sample complexity of oracle $\agnosticoracle$ is a standard adaptive data analysis result (Lemma~\ref{lemma:noisymaxexistence}).
\end{proof}

We note that our use of no-regret best-response dynamics recovers a multicalibration algorithm similar to the original multicalibration algorithm of \cite{hebert-johnsonMulticalibrationCalibrationComputationallyIdentifiable2018} and which can also be found in \cite{dwork_outcome_2021,kimMultiaccuracyBlackBoxPostProcessing2019}.
We also remark that the guarantees of Theorem~\ref{theorem:deterministic-multicalibration-guarantee} hold in weaker settings.
In particular, Theorem~\ref{theorem:deterministic-multicalibration-guarantee} holds with the same analysis even if we only asked that our agnostic learning oracle $\agnosticoracle$ to be best-responding with respect to a min-max baseline.
Furthermore, the last step of Algorithm~\ref{alg:deterministic-multicalibration}, which explicitly samples datapoints to find timestep $t^*$, can be removed if one assumes that our agnostic learning oracle $\agnosticoracle$ signals to us when it cannot find a greater than $\epsilon$ violation of multicalibration in the current predictor.
This assumption would allow Algorithm~\ref{alg:deterministic-multicalibration} to terminate early and return the current predictor, and is assumed by prior multicalibration literature.

\subsection{Online Multicalibration}
We now turn to online multicalibration.
In this section, we will limit ourselves to the binary classification setting, where $\labels = \bsetflat{0, 1}$.
Nonetheless, the following results can be extended to the multi-class setting straightforwardly.
For convenience, we will say that predictors $\predictors$ for binary classification output real-valued $\hyp(x) \in [0, 1]$, where $\hyp(x)$ is the predicted probability of class 1 and $1 - \hyp(x)$ is the predicted probability of class 0.

Our next algorithm uses best-response no-regret (BRNR) dynamics for online multicalibration.
Its guarantees are summarized in Theorem~\ref{theorem:online-multicalibration-guarantee} and match the best known regret bounds for online multicalibration \cite{guptaOnlineMultivalidLearning2022,noarovOnlineMultiobjectiveMinimax2021}.

\begin{restatable}{theorem}{onlineMulticalibrationGuarantee}
	\label{theorem:online-multicalibration-guarantee}
	Fix $\epsilon > 0$, $\lvl \in \integers_+$, and a \groupsformal{} $\supports \subseteq 2^{\features}$.
	The following algorithm guarantees $(\supports, \epsilon, \lvl)$-online multicalibration with probability $1 - \delta$.
	\begin{framed}
		\centerline{\emph{\underline{Best-Response vs No-Regret}}}
		\noindent
		Construct the online multi-objective learning problem $(\dists, \mcobjs, \predictorsnumcls)$ in Fact~\ref{fact:online-multicalibration-as-online-multiobjective-learning}  and let $\smash{T = C \epsilon^{-2}\ln(\setsize{\supports} \lvl \delta)}$ for some universal constant $C$.
		Over $T$ rounds, have an adversary choose $\smash{\tsv{\rloss}{1:T} \in \simplex(\mcobjs)}$ by applying Hedge to the costs $\smash{\tsv{\bsetflat{1 - \loss_{(\cdot)}(\tsv{\rhyp}{t}, (\tsv{x}{t}, \tsv{y}{t}))}}{1:T}}$, where $\smash{(\tsv{x}{t}, \tsv{y}{t}) \simiid \tsv{\dist}{t}}$.
		Have a learner best-respond to each stochastic cost $\tsv{\rloss}{t}$ with the $(\epsilon/2)$-distribution-free best-response  $\tsv{\rhyp}{t} \in \simplex(\predictorsnumcls)$ of Theorem~\ref{theorem:multicalibration-distribution-free-best-response}.
		This algorithm is written explicitly in Algorithm~\ref{alg:online-multicalibration}.
	\end{framed}
\end{restatable}
\begin{proof}
	By Lemma~\ref{lemma:hedge-basic-regret-bound}, if $T \geq 576 \epsilon^{-2} \ln(2 \numcls \lvl \setsize{\supports})$, the \lobj{} mixtures $\tsv{\rloss}{1:T}$ guarantee the adversary a regret bound of $\regret\para{\tsv{\rloss}{1:T}, \tsv{\bsetflat{1 - \loss_{(\cdot)}(\tsv{\rhyp}{t}, (\tsv{x}{t}, \tsv{y}{t}))}}{1:T}} \leq T \frac{\epsilon}{12}$.
	By Lemma~\ref{lemma:martingaledynamics}, there is a universal constant $C$ such that, if $T \geq C \epsilon^{-2} \ln(2 \numcls \lvl \setsize{\supports} / \delta)$,
	\begin{align*}
		\abs{\regret\para{\tsv{\rloss}{1:T}, \tsv{\bsetflat{1 - \loss_{(\cdot)}(\tsv{\rhyp}{t}, (\tsv{x}{t}, \tsv{y}{t}))}}{1:T}} - \regret\para{\tsv{\rloss}{1:T}, \tsv{\bsetflat{1 - \risk_{\tsv{\dist}{t}, (\cdot)}(\tsv{\rhyp}{t})}}{1:T}}} \leq T \frac{\epsilon}{12},
	\end{align*}
	with probability $1 - \delta$.
	Thus, $\regret\para{\tsv{\rloss}{1:T}, \tsv{\bsetflat{1 - \risk_{\tsv{\dist}{t}, (\cdot)}(\tsv{\hyp}{t})}}{1:T}} \leq T \epsilon / 4$ with probability $1 - \delta$.

	Since each $\tsv{\rhyp}{t}$ is a $(\epsilon / 4)$ distribution-free best-response to $\tsv{\rloss}{t}$, whose existence is proven by Theorem~\ref{theorem:multicalibration-distribution-free-best-response}, by Lemma~\ref{lemma:brnr-dynamics}, the predictors $\tsv{\rhyp}{1:T}$ are $(\epsilon/2)$-optimal on $\tsv{\dist}{1:T}$ with probability $1 - \delta$.
	By Fact~\ref{fact:online-multicalibration-as-online-multiobjective-learning}, $\tsv{\rhyp}{1:T}$ are also $(\supports, \epsilon, \lvl)$-online multicalibrated on $\tsv{\dist}{1:T}$.
\end{proof}
The high-probability condition of Theorem~\ref{theorem:online-multicalibration-guarantee} can be removed if we assume nature presents datapoints rather than data distributions, as is assumed in prior works.
Interestingly, Algorithm~\ref{alg:online-multicalibration}'s use of best-response no-regret dynamics exactly recovers the online multicalibration algorithm of \cite{guptaOnlineMultivalidLearning2022,noarovOnlineMultiobjectiveMinimax2021}.
The analysis of Theorem~\ref{theorem:online-multicalibration-guarantee} is, however, significantly simpler because we make explicit the role of the no-regret dynamics, whereas \cite{guptaOnlineMultivalidLearning2022,noarovOnlineMultiobjectiveMinimax2021} use potential arguments that ultimately prove no-regret dynamics and the multiplicative weights algorithm from scratch.

\paragraph{An online-to-batch reduction.}
The online multicalibration algorithm of Theorem~\ref{theorem:online-multicalibration-guarantee} can also be used to obtain a non-deterministic multicalibrated predictor in batch settings through an online-to-batch reduction.
This exactly recovers the non-deterministic multicalibration algorithm of \cite{guptaOnlineMultivalidLearning2022,noarovOnlineMultiobjectiveMinimax2021}.
\begin{theorem}[Analysis of Algorithm~\ref{alg:online-multicalibration} for batch multicalibration]
	\label{theorem:batch-online-multicalibration-guarantee}
	Fix $\epsilon > 0$, $\lvl \in \integers_+$, a \groupsformal{} $\supports \subseteq 2^{\features}$, and a data distribution $\dist$.
	Simulate Algorithm~\ref{alg:online-multicalibration} by having Nature choose $\dist$ at every timestep and return a uniform distribution $\overline{\rhyp}$ over its outputs $\tsv{\rhyp}{1:T}$.
	With probability $1 - \delta$, $\overline{\rhyp}$ is $(\supports, \epsilon, \lvl)$-multicalibrated on $\dist$.
	This algorithm takes $\bigOsmol{\ln(\setsize{\supports} \lvl/\delta) / \epsilon^2}$ samples from $\dist$.
\end{theorem}
\begin{proof}
	By Theorem~\ref{theorem:online-multicalibration-guarantee},	$\max_{\loss^* \in \mcobjs} \sum_{t=1}^T \risk_{\dist, \loss^*}(\tsv{\rhyp}{t}) \leq T \epsilon$.
	Thus, the non-deterministic predictor $\overline{\rhyp}$ given by taking a uniform distribution over $\tsv{\rhyp}{1}, \dots, \tsv{\rhyp}{T}$ is a $(\supports, \epsilon, \lvl)$-multicalibrated predictor.
\end{proof}
Unlike the algorithm of Theorem~\ref{theorem:non-deterministic-multicalibration-guarantee}, the predictor $\overline{\rhyp}$ output by Theorem~\ref{theorem:batch-online-multicalibration-guarantee} is neither guaranteed to be succinct nor guaranteed to be of small circuit size.
The lack of succinctness is because, even if $\overline{\rhyp}$'s predicted label distribution on any feature $x \in \features$ is succinct, $\overline{\rhyp}$ itself can require an exponentially large support over $\predictorsnumcls$.
The large circuit size is because, at each timestep $t$, the learner is best-responding to a distribution $\tsv{\rloss}{t}$ over objectives $\objs$ with a non-zero weight on every objective.
In contrast, in the algorithm of Theorem~\ref{theorem:non-deterministic-multicalibration-guarantee}, the learner only interacts with a single new objective $\tsv{\loss}{t}$ at each timestep $t$.

In the algorithms throughout this section, given an objective $\tsv{\loss}{t} \in \mcobjs$, we wrote $\tsv{i}{t}, \tsv{j}{t}, \tsv{S}{t}, \tsv{v}{t}$ such that $\tsv{\loss}{t} = \loss_{i,j,S,v}$.

\begin{algorithm}[H]
	\caption{Non-Deterministic Multicalibration Algorithm (Theorem~\ref{theorem:non-deterministic-multicalibration-guarantee})}
	\label{alg:non-deterministic-multicalibration}
	\begin{algorithmic}[1]
		\STATE Input: $\supports \subseteq 2^\features$, $\epsilon \in (0, 1)$, $\numcls, \lvl, T \in \integers_+$, and distribution $\dist$;
		\STATE Initialize Hedge iterate $\tsv{\rloss}{1} = \text{Uniform}(\mcobjs)$ and Hedge iterate $\tsv{\hyp}{1} = [1/\numcls, \dots, 1/\numcls]^\features$;
		\FOR{$t = 1$ to $T$}
		\STATE Sample objective $\tsv{\loss}{t} \sim \tsv{\rloss}{t}$ and datapoint $(\tsv{x}{t}, \tsv{y}{t}) \sim \dist$;
		\STATE For every $x \in \features$, let $\tsv{\hyp}{t+1}(x) \asseq \mathrm{Hedge}(\tsv{\cost}{1:t}_x)$, where
		\begin{align*}
			\smash{\tsv{\cost}{t}_x(\hat{y}) \asseq \frac 12 (1 + \tsv{i}{t}  \cdot 1[\hyp(x) \in \tsv{v}{t}, x \in \tsv{S}{t}] \cdot \hat{y}_{\tsv{j}{t}});}
		\end{align*}
		\STATE Let $\tsv{\rloss}{t+1} \asseq \mathrm{Hedge}(\tsv{\advcost}{1:t})$, where $\tsv{\advcost}{t}(\loss) \asseq 1 - \loss(\tsv{\hyp}{t}, (\tsv{x}{t}, \tsv{y}{t}))$;
		\ENDFOR
		\STATE Return: $\rhyp^*$, a uniform distribution over $\tsv{\hyp}{1}, \dots, \tsv{\hyp}{T}$;
	\end{algorithmic}
\end{algorithm}

\begin{algorithm}[H]
	\caption{Deterministic Multicalibration Algorithm (Theorem~\ref{theorem:deterministic-multicalibration-guarantee})}
	\label{alg:deterministic-multicalibration}
	\begin{algorithmic}[1]
		\STATE Input: $\supports \subseteq 2^\features$, $\epsilon \in (0, 1)$, $\numcls, \lvl, T, C \in \integers_+$, distribution $\dist$, and agnostic learning oracle $\agnosticoracle$;
		\STATE Initialize Hedge iterate $\tsv{\hyp}{1} = [1/\numcls, \dots, 1/\numcls]^\features$;
		\FOR{$t = 1$ to $T$}
		\STATE Let $\tsv{\loss}{t} = \agnosticoracle_{\epsilon / 8}(\tsv{\advcost}{t}, \mcobjs)$ where  $\tsv{\advcost}{t}(\loss) \asseq 1 - \risk_{\dist, \loss}(\tsv{\hyp}{t})$;
		\STATE For every $x \in \features$, let $\tsv{\hyp}{t+1}(x) \asseq \mathrm{Hedge}(\tsv{\cost}{1:t}_x)$, where
		\begin{align*}
			\smash{\tsv{\cost}{t}_x(\hat{y}) \asseq \frac 12 (1 + \tsv{i}{t}  \cdot 1[\hyp(x) \in \tsv{v}{t}, x \in \tsv{S}{t}] \cdot \hat{y}_{\tsv{j}{t}});}
		\end{align*}
		\ENDFOR
		\STATE Take $C \ln(T / \delta) / \epsilon^2$ samples $\bx \sim \dist$ and let $\smash{t^* = \argmin\limits_{t \in [T]} \sum\limits_{(x,y) \in \bx} \tsv{\loss}{t}(\tsv{\hyp}{t}, (x,y))}$;
		\STATE Return the predictor $\tsv{\hyp}{t^*}$;
	\end{algorithmic}
\end{algorithm}

\begin{algorithm}[H]
	\caption{Online Multicalibration Algorithm}
	\label{alg:online-multicalibration}
	\begin{algorithmic}[1]
		\STATE Input: $\supports \subseteq 2^\features$, $\epsilon \in (0, 1)$, $\lvl, T \in \integers_+$;
		\STATE Initialize Hedge iterate $\tsv{\rloss}{1} = \text{Uniform}(\mcobjs)$;
		\FOR{$t = 1$ to $T$}
		\STATE Compute $\tsv{\rhyp}{t}(x) \asseq \min\limits_{\rhyp^*(x) \in \Delta \para{[0, \epsilon/4\lvl, \dots, 1]}} \max\limits_{y \in [0, 1]} \frac 12 \EEslim{\hat{y} \sim \rhyp^*(x)}{1 + \EEslim{\loss_{i, S, v }\sim \tsv{\rloss}{t}}{ i \cdot 1[x \in S, y \in v] \cdot (\hat{y} - y)}}$;
		\STATE Announce predictor $\tsv{\rhyp}{t}$ to Nature and observe Nature's data distribution $\tsv{\dist}{t}$;
		\STATE Sample $(\tsv{x}{t}, \tsv{y}{t}) \sim \tsv{\dist}{t}$ and let $\tsv{\rloss}{t+1} \asseq \mathrm{Hedge}(\tsv{\advcost}{1:t})$ where $\tsv{\advcost}{t}(\loss) \asseq 1 - \loss(\tsv{\hyp}{t}, (\tsv{x}{t}, \tsv{y}{t}))$;
		\ENDFOR
	\end{algorithmic}
\end{algorithm}
\section{Additional Multicalibration Considerations}
\label{section:considerations}
In this section, we present additional results on multicalibration that all follow from the same game dynamics presented in Section~\ref{section:dynamics}.

\paragraph{Separable objectives.}
We begin by introducing a generalization of Theorem~\ref{theorem:multicalibration-objectives}, which describes why multicalibration \lobjs{} are amenable to efficient online learning, to a more general class of multi-objective learning problems.
We refer to such problems as having \emph{separable objectives}.
\begin{definition}
	\label{def:separable-loss}
	Consider a multi-objective learning problem $(\dists, \objs, \hyps)$ where $\hyps$ is the set of all functions of form $\hyp: \features \to \cW$ and $\cW$ is some convex space (and not necessarily the label space $\labels$).
	We say the \lobjs{} $\objs$ of such a problem are \emph{separable} if every $\loss \in \objs$ is of form
	\begin{align*}
		\loss(\hyp, (x, y)) = c + f_\loss(x, \hyp(x)) \cdot (\hyp(x) - g_\loss(y)),
	\end{align*}
	where $f_\loss: \features \times \cW \to \cW$ and $g_\loss: \labels \to \cW$ are arbitrary functions and $c$ is some constant offset.
\end{definition}
The below Theorem~\ref{theorem:online-learning-linear-objectives} generalizes Theorem~\ref{theorem:multicalibration-objectives} to multi-objective learning problems with \emph{separable} objectives and multiple data distributions.
Its proof is deferred to Appendix~\ref{appendix:dynamics}.
\begin{restatable}{theorem}{onlinelearninglinearobjectives}
	\label{theorem:online-learning-linear-objectives}
	Consider a multi-objective learning problem $(\dists, \objs, \hyps)$ where all \lobjs{} in $\objs$ are separable and all distributions in $\dists$ are absolutely continuous with respect to a common distribution $\dist^*$.
	Let $\calg$ be an online algorithm that, for any linear costs $\tsv{\cost}{1:T}$, outputs actions $\tsv{\action}{1:T} \in \cW$ where $\regret(\tsv{\action}{1:T}, \tsv{\cost}{1:T}) \leq R(T)$.
	If $\setsize{\dists} > 1$, further assume $\regret(\tsv{\action}{1:T}, \tsv{\cost}{1:T}) \leq R(T) \cdot \max_{\action^* \in \cW} \sqrt{\frac{1}{T} \sum_{t=1}^T (\tsv{\cost}{t}(\tsv{\action}{t}) - \tsv{\cost}{t}(\action^*))^2}$.
	Then for any stochastic costs $\tsv{\rloss}{1:T} \in \simplex(\objs)$ and distributions $\tsv{\dist}{1:T} \in \dists$, the below algorithm outputs predictors $\tsv{\hyp}{1:T} \in \hyps$ where $\regret_{B^*}(\tsv{\hyp}{1:T}, \tsv{\bsetflat{\risk_{\tsv{\dist}{t}, \tsv{\rloss}{t}}(\cdot)}}{1:T}) \leq \sqrt{\setsize{\dists}} R(T)$ and $B^*$  is the baseline
	\begin{align}
		\label{eq:weak-baseline}
		{B^* \asseq c + \min_{\hyp^* \in \hyps} \frac{1}{T} \sum_{t=1}^T \max_{\hyp' \in \hyps} \EEs{\substack{(x, y) \sim \tsv{\dist}{t} \\ \loss \sim \tsv{\rloss}{t} }}{f_{\loss}(x, \hyp'(x)) \cdot (\hyp^*(x) - g_{\loss}(y))}.}
	\end{align}
	\begin{framed}
		\noindent
		At each timestep $t$, construct the predictor $\tsv{\hyp}{t+1}(x)$ by setting, for all $x \in \features$,
		\begin{align*}
			\tsv{\hyp}{t+1}(x) \asseq \calg(\tsv{\cost}{1}_x, \dots, \tsv{\cost}{t}_x) \text{ where } \tsv{\cost}{\tau}_x(w) \asseq \frac 12 \cdot \frac{d\tsv{\dist}{\tau}(x)}{d\dist^*(x)} \cdot \para{1 + \EEs{\loss \sim \tsv{\rloss}{\tau}}{f_\loss(x, \tsv{\hyp}{\tau}(x))} \cdot w},
		\end{align*}
		where $\frac{d\tsv{\dist}{\tau}(x)}{d\dist^*(x)}$ is the Radon-Nikodym derivative of $\tsv{\dist}{\tau}$ with respect to $\dist^*$ at $x$.
	\end{framed}
\end{restatable}
In Appendix~\ref{appendix:dynamics}, we also prove that a generalization of Theorem~\ref{theorem:multicalibration-distribution-free-best-response} holds for multi-objective learning problems with separable objectives.

\subsection{Conditional Multicalibration}
\label{subsection:conditional-multicalibration}
One shortcoming of existing definitions of multicalibration is that they measure violations marginally over the entire distribution $\dist$.
That is, the amount that a predictor is allowed to violate calibration on a subgroup is inversely proportional to the probability mass of the subgroup, as reflected by the use of the indicator function in Definition~\ref{def:multicalibration}.
This can lead to predictors that are certifiably multicalibrated, but still poorly calibrated on underrepresented or minority groups.
In contrast, the original definition of multicalibration proposed by \cite{hebert-johnsonMulticalibrationCalibrationComputationallyIdentifiable2018}, for which no non-trivial guarantee is known, is a \emph{conditional} notion of multicalibration.

Below, we generalize this notion of conditional multicalibration so that setting $\supports = \condsupports$ recovers the original multicalibration definition of \cite{hebert-johnsonMulticalibrationCalibrationComputationallyIdentifiable2018}.
\begin{definition}
	\label{def:conditional-multicalibration}
	Fix $\epsilon > 0$, $\lvl \in \integers_+$, and two \groupsformals{} $\supports, \condsupports \subseteq 2^{\features}$.
	A $\numcls$-class predictor $\rhyp \in \simplex(\predictors)$ is $(\supports, \condsupports, \epsilon, \lvl)$-\emph{conditionally multicalibrated} for some data distribution $\dist$ if
	\begin{align*}
		\smash{\forall S \in \supports, \conds \in \condsupports, v \in \levelsets^\numcls, j \in [\numcls]:\;
		\abs{\EEsc{\substack{(x,y) \sim \dist, \hyp \sim \rhyp}}{\para{\hyp(x)_j - \dyj} \cdot 1[\hyp(x) \in v, x \in S]}{x \in \conds}} \leq \epsilon.}
	\end{align*}
\end{definition}

It is not possible to obtain conditional multicalibration generally, so we will assume sample access to the conditional distributions $\bset{\dist_{\conds}}_{\conds \in \condsupports}$, where $\dist_{\conds} \asseq \dist \mid x \in \conds$.
In practice, this assumption means that we are able to sample data from certain protected groups that may otherwise be underrepresented.
For convenience, in this section, we will assume prior knowledge of $\bsetflat{\Pr_\dist(x \in \conds)}_{\conds \in \condsupports}$ but note these probabilities can be cheaply estimated beforehand.

To derive conditional multicalibration algorithms from game dynamics, we first write conditional multicalibration as a multi-distribution multi-objective learning problem.
\begin{restatable}{fact}{condmulticalibrationAsMultiobjectiveLearning}
	\label{fact:cond-multicalibration-as-multiobjective-learning}
	Let $\dist$ be a data distribution for some $\numcls$-class prediction problem and fix $\epsilon > 0$, $\lvl \in \integers_+$, and two \groupsformals{} $\supports, \condsupports \subseteq 2^{\features}$.
	Let $\mcobjs$ be the set of \lobjs{} $\bset{\loss_{i, j, S, v}}$ as defined in Fact~\ref{fact:multicalibration-as-multiobjective-learning} and let $\dists = \bset{\dist_\conds}_{\conds \in \condsupports}$.
	Predictor $\rhyp \in \simplex(\predictorsnumcls)$ is a $\epsilon$-optimal solution to the multi-objective learning problem $(\dists, \mcobjs, \predictorsnumcls)$ if and only if $\rhyp$ is $(\supports, \condsupports, 2\epsilon, \lvl)$-conditionally multicalibrated for $\dist$.
\end{restatable}

\paragraph{Conditional multicalibration algorithms.}
The following Theorem~\ref{theorem:deterministic-conditional-multicalibration-guarantee} is, to the best of our knowledge, the first non-trivial guarantee for conditional multicalibration.
\begin{restatable}{theorem}{deterministicconditionalmulticalibration}
	\label{theorem:deterministic-conditional-multicalibration-guarantee}
	Fix $\epsilon > 0$, $\lvl, \numcls \in \integers_+$, two \groupsformals{} $\supports, \condsupports \subseteq 2^{\features}$, and a data distribution $\dist$.
	The following algorithm returns a deterministic $\numcls$-class predictor that is $(\supports, \condsupports, \epsilon, \lvl)$-conditionally multicalibrated on $\dist$ and makes $\bigOsmol{\ln(\numcls)/\epsilon^2}$ calls to an agnostic learning oracle.
	Moreover, with probability $1 - \delta$, the oracle calls can be implemented with $\bigOtilde{\frac{1}{\epsilon^3} (\sqrt{\ln(\numcls)} \ln(\numcls \setsize{\supports} \setsize{\condsupports} / \delta) + \numcls \ln(\lvl))}$ samples from each conditional distribution $\bsetflat{\dist_\conds}_{\conds \in \condsupports}$.
	\begin{framed}
		\centerline{\emph{\underline{No-Regret vs Best-Response}}}
		\noindent
		Construct the problem $(\dists, \mcobjs, \predictorsnumcls)$ from Fact~\ref{fact:cond-multicalibration-as-multiobjective-learning} and let $\smash{T = C \epsilon^{-2} \setsize{\condsupports} \ln(\numcls)}$ for some universal constant $C$.
		Over $T$ rounds, have a learner choose predictors $\tsv{\hyp}{1:T}$ by applying the no-regret learning algorithm of Theorem~\ref{theorem:online-learning-linear-objectives} to the stochastic costs $\tsv{\loss}{1:T}$ and distributions $\tsv{\bsetflat{\dist_{\tsv{\conds}{t}}}}{1:T}$.
		Have an adversary choose $\tsv{\conds}{1:T}$ and $\tsv{\loss}{1:T}$ by calling an agnostic learning oracle at each $t \in [T]$: $\tsv{\conds}{t}, \tsv{\loss}{t} = \agnosticoracle_{\epsilon/8}(1-\risk_{\dist_{(\cdot)}, (\cdot)}(\tsv{\hyp}{t}))$.
		Using $C \ln(T \setsize{\condsupports} / \delta) / \epsilon^2$ samples from each distribution in $\bset{\dist_\conds}_{\conds \in \condsupports}$, return the predictor $\tsv{\hyp}{t^*}$ with the lowest empirical multicalibration error.
		This algorithm is written explicitly in Algorithm~\ref{alg:deterministic-conditional-multicalibration}.
	\end{framed}
\end{restatable}
\begin{proof}
	In order to apply Theorem~\ref{theorem:online-learning-linear-objectives}, we first observe that every objective in $\mcobjs$ is separable and every distribution $\dist_\conds$ is absolutely continuous with respect to $\dist$,
	By definition, for any $\conds \in \condsupports$, the Radon-Nikodym derivative of $\dist_\conds$ with respect to $\dist$ at $x \in \features$ is $1[x \in \conds]  / \Pr_\dist(x \in \conds)$.
	Thus, the sequence $\tsv{\hyp}{1:T}$ corresponds to running the no-regret algorithm of Theorem~\ref{theorem:online-learning-linear-objectives} when $\calg$ is chosen to be Prod.
	By Lemma~\ref{lemma:second-order-regret-bound}, we know that choosing the Prod algorithm as $\calg$ satisfies the conditions of Theorem~\ref{theorem:online-learning-linear-objectives} with $R(T) \in O(\sqrt{\ln(\numcls) T})$.
	Theorem~\ref{theorem:online-learning-linear-objectives} therefore guarantees that ${\regret_{B^*}(\tsv{\hyp}{1:T}, \tsv{\bsetflat{\risk_{\dist_{\tsv{\conds}{t}}, \tsv{\loss}{t}}(\cdot)}}{1:T}) \leq O(\sqrt{\setsize{\condsupports} \ln(\numcls) T})}$, where $B^*$ is as defined in \eqref{eq:weak-baseline}.
	Since we can lower bound the baseline $B^* \geq 0.5$ by plugging the Bayes classifier $\hyp^*(x) = \EEsc{(x,y) \sim \dist}{y}{x}$ into \eqref{eq:weak-baseline}, $\weakregret(\tsv{\hyp}{1:T}, \tsv{\bsetflat{\risk_{\dist_{\tsv{\conds}{t}}, \tsv{\loss}{t}}(\cdot)}}{1:T}) \leq \regret_{B^*}(\tsv{\hyp}{1:T}, \tsv{\bsetflat{\risk_{\dist_{\tsv{\conds}{t}}, \tsv{\loss}{t}}(\cdot)}}{1:T}) \leq O(\sqrt{\setsize{\condsupports} \ln(\numcls) T})$.
	Thus, if $T \geq C \setsize{\condsupports} \ln(\numcls) / \epsilon^2$, $\weakregret(\tsv{\hyp}{1:T}, \tsv{\bsetflat{\risk_{\dist_{\tsv{\conds}{t}}, \tsv{\loss}{t}}(\cdot)}}{1:T}) \leq T \epsilon / 8$.

	By construction, every pair $\tsv{\conds}{t}, \tsv{\loss}{t}$ is an $(\epsilon / 8)$ best-response to the cost function $1 - \risk_{\dist_{(\cdot)}, (\cdot)}(\tsv{\hyp}{t})$.
	By Lemma~\ref{lemma:nrbr-dynamics}, since the learner has at most $T \epsilon / 8$ weak regret and the adversary is $\epsilon / 8$ best-responding, there must exist a timestep $t \in [T]$ where $\tsv{\hyp}{t}$ is $(\epsilon/4)$-optimal.
	By Lemma~\ref{lemma:find}, the $\tsv{\hyp}{t^*}$ found by the algorithm is $(\epsilon/2)$-optimal with probability at least $1 - \delta$.
	By Fact~\ref{fact:cond-multicalibration-as-multiobjective-learning}, $\tsv{\hyp}{t^*}$ is a $(\supports, \condsupports, \epsilon, \lvl)$-conditionally multicalibrated predictor.
	The oracle complexity is exactly $T$, while the sample complexity of oracle $\agnosticoracle$ is a standard adaptive data analysis result (Lemma~\ref{lemma:noisymaxexistence}).
\end{proof}

We can also implement faster non-deterministic conditional multicalibration algorithms.
\begin{restatable}{theorem}{nondeterministicconditionalmulticalibration}
	\label{theorem:non-deterministic-conditional-multicalibration-guarantee}
	Fix $\epsilon > 0$, $\lvl, \numcls \in \integers_+$, two \groupsformals{} $\supports, \condsupports \subseteq 2^{\features}$, and a data distribution $\dist$.
	The following algorithm returns a non-deterministic $\numcls$-class predictor that is $(\supports, \condsupports, \epsilon, \lvl)$-conditionally multicalibrated on $\dist$ and takes $\bigO{\setsize{\condsupports}(\ln(\setsize{\supports} \setsize{\condsupports}/\delta) + \numcls \ln(\lvl))/\epsilon^2}$ samples in total from the distributions in $\bset{\dist_\conds}_{\conds \in \condsupports}$.
	\begin{framed}
		\centerline{\emph{\underline{No-Regret vs No-Regret}}}
		\noindent
		Construct the problem $(\dists, \mcobjs, \predictorsnumcls)$ from Fact~\ref{fact:cond-multicalibration-as-multiobjective-learning} and let $\smash{T = C \epsilon^{-2} \setsize{\condsupports} \ln(\setsize{\supports} \lvl^\numcls \delta)}$ for some universal constant $C$.
		Over $T$ rounds, have a learner choose predictors $\tsv{\hyp}{1:T} \in \predictorsnumcls$ by applying the no-regret learning algorithm of Theorem~\ref{theorem:online-learning-linear-objectives} to the stochastic costs $\tsv{\loss}{1:T}$ and distributions $\tsv{\bsetflat{\dist_{\tsv{\conds}{t}}}}{1:T}$.
		In parallel, have an adversary choose $\tsv{\conds}{1:T}, \tsv{\loss}{1:T}$ by applying ELP to costs $\tsv{\bsetflat{1-\risk_{\dist_{(\cdot)}, (\cdot)}(\tsv{\hyp}{t})}}{1:T}$.
		Return the predictor $\rhyp = \text{Uniform}(\tsv{\hyp}{1:T})$.
		This algorithm is written explicitly in Algorithm~\ref{alg:non-deterministic-conditional-multicalibration}.
	\end{framed}
\end{restatable}
\begin{proof}
	As we proved in Theorem~\ref{theorem:deterministic-conditional-multicalibration-guarantee}, Theorem~\ref{theorem:online-learning-linear-objectives} guarantees that if $T \geq C \setsize{\condsupports} \ln(\numcls) / \epsilon^2$, the learner's regret is bounded by $\weakregret(\tsv{\hyp}{1:T}, \tsv{\bsetflat{\risk_{\dist_{\tsv{\conds}{t}}, \tsv{\loss}{t}}(\cdot)}}{1:T}) \leq T \epsilon / 4$.
	We now turn to the adversary.

	We will define a sequence of costs $\tsv{\tilde{\advcost}}{1:T}$ that mirrors the adversary's costs $\smash{\tsv{\bsetflat{1-\risk_{\dist_{(\cdot)}, (\cdot)}(\tsv{\hyp}{t})}}{1:T}}$.
	For all timesteps $t \in [T]$, rename $(\tsv{x}{t}_{\tsv{\conds}{t}}, \tsv{y}{t}_{\tsv{\conds}{t}}) \asseq (\tsv{x}{t}, \tsv{y}{t})$ while for all other $\conds \in \condsupports$ define the random samples $(\tsv{x}{t}_{\conds}, \tsv{y}{t}_{\conds}) \simiid \dist_{\conds}$ as the datapoint that the adversary would have hypothetically sampled if it had chosen to sample from $\dist_\conds$ instead of $\dist_{\tsv{\conds}{t}}$ at timestep $t$.
	We now can define
		$\tsv{\tilde{\advcost}}{t}(\conds, (i, j, S, v)) \asseq
			\paraflat{0.5 + 0.5 \cdot i \cdot 1[\hyp(\tsv{x}{t}_{\conds}) \in v, \tsv{x}{t}_{\conds} \in S ] \cdot (\delta_{\tsv{y}{t}_{\conds}, j} - \hyp(\tsv{x}{t}_{\conds})_j)}$.
	Note that, since the ELP algorithm only observes values of $\tsv{\advcost}{t}((i,j,S,\conds,v))$ where $\conds = \tsv{\conds}{t}$, $\smash{\regret(\tsv{\bsetflat{(\tsv{\conds}{t}, \tsv{\loss}{t})}}{1:T}, \tsv{\advcost}{1:T})  = \regret(\tsv{\bsetflat{(\tsv{\conds}{t}, \tsv{\loss}{t})}}{1:T}, \tsv{\tilde{\advcost}}{1:T})}$.

	By Lemma~\ref{lemma:semibandit}, the adversary satisfies $\regret(\tsv{\bsetflat{(\tsv{\conds}{t}, \tsv{\loss}{t})}}{1:T}, \tsv{\advcost}{1:T}) \leq \bigOsmol{\sqrt{\setsize{\condsupports} \ln(\numcls / \delta) T}}$.
	Thus, we also have that $\regret(\tsv{\bsetflat{(\tsv{\conds}{t}, \tsv{\loss}{t})}}{1:T}, \tsv{\tilde{\advcost}}{1:T}) \leq \bigOsmol{\sqrt{\setsize{\condsupports} \ln(\numcls / \delta) T}}$.
	By Lemma~\ref{lemma:martingaledynamics}, there is a universal constant $C$ such that, if $T \geq C \epsilon^{-2} \ln(2 \numcls \lvl^\numcls \setsize{\supports} \setsize{\condsupports} / \delta)$,
	\begin{align*}
		\absflat{\regret\paraflat{\tsv{\bsetflat{(\tsv{\conds}{t}, \tsv{\loss}{t})}}{1:T}, \tsv{\tilde{\advcost}}{1:T}} - \regret\paraflat{\tsv{\bsetflat{(\tsv{\conds}{t}, \tsv{\loss}{t})}}{1:T}, \tsv{\bsetflat{1 - \risk_{\dist_{(\cdot)}, (\cdot)}(\tsv{\hyp}{t})}}{1:T}}} \leq T \epsilon / 12,
	\end{align*}
	with probability $1 - \delta$.
	Taking a triangle inequality and union bound, we can see that for sufficiently large $C$, $\regret\paraflat{\tsv{\bsetflat{(\tsv{\conds}{t}, \tsv{\loss}{t})}}{1:T}, \tsv{\bsetflat{1 - \risk_{\dist_{(\cdot)}, (\cdot)}(\tsv{\hyp}{t})}}{1:T}} \leq T \epsilon / 4$.

	By Lemma~\ref{lemma:nrnr-dynamics}, the ergodic iterate $\hyp^*$ is an $(\epsilon/2)$-optimal solution.
	By Fact~\ref{fact:cond-multicalibration-as-multiobjective-learning}, $\hyp^*$ is a $(\supports, \condsupports, \epsilon, \lvl)$-conditionally multicalibrated predictor.
	The sample complexity is exactly $T$ since the algorithm only samples one datapoint at each iteration.
\end{proof}

\paragraph{Stronger multicalibration guarantees for (almost) free.}
We can also drop the assumption of access to the conditional distributions $\bsetflat{\dist_{\conds}}_{\conds \in \condsupports}$ and ask for a weaker guarantee than conditional multicalibration where a subgroup's error tolerance scales with $\sqrt{\Pr(x \in \conds)}$.
For simplicity, we will fix $\supports' = \supports$ here.
Note that unlike Theorem~\ref{theorem:deterministic-conditional-multicalibration-guarantee}, Theorem~\ref{theorem:deterministic-conditional-multicalibration-guarantee-weak} does not assume knowledge of the vector $[\Pr_\dist(x \in S)]_{S \in \supports}$.
\begin{theorem}
	\label{theorem:deterministic-conditional-multicalibration-guarantee-weak}
	Fix $\epsilon > 0$, $\lvl, \numcls \in \integers_+$, two \groupsformals{} $\supports \subseteq 2^{\features}$, and a data distribution $\dist$.
	There is an algorithm that, with probability at least $1 - \delta$, takes $\bigOsmol{\ln(\numcls) \cdot (\ln(\setsize{\supports}  / \epsilon \delta) + \numcls \ln(\lvl))/ \epsilon^4}$ samples from $\dist$ and returns a deterministic $\numcls$-class predictor $\hyp$ satisfying
	\begin{align*}
		\smash{\abs{\EEs{{(x,y) \sim \dist, \hyp \sim \rhyp}}{(\hyp(x)_j - \dyj ) \cdot 1[\hyp(x) \in v, x \in S]}} \leq \epsilon \sqrt{\Pr(x \in S)},}
	\end{align*}
	for all $S \in \supports, v \in \levelsets^\numcls, j \in [\numcls]$.
\end{theorem}
The sample complexity of Theorem~\ref{theorem:deterministic-conditional-multicalibration-guarantee-weak} should be improvable to a sample complexity of $\bigOtildesmol{(\sqrt{\ln(k)} \ln(\setsize{\supports}/\delta) + \numcls \ln(\lvl))/ \epsilon^3}$ using adaptive data analysis \cite{mcsherry2007,bassilyAlgorithmicStabilityAdaptive2021}.
Even without using adaptive data analysis, Theorem~\ref{theorem:deterministic-conditional-multicalibration-guarantee} guarantees a sample complexity that is only a $1 / \epsilon$ factor greater (less than a cube-root increase) than the best known sample complexity for deterministic multicalibration.
In fact, it matches the best sample complexity for deterministic multicalibration that is known to be possible without adaptive data analysis.
This means that we can attain this strictly stronger multicalibration guarantee for free (compared to its non-adaptive data analysis counterpart) or with only a minor cube-root increase in sample complexity (compared to its adaptive data analysis counterpart).
We defer a complete proof and algorithm statement to Section~\ref{section:new}.

\subsection{Agnostic Multicalibration}
An implicit assumption of multicalibration is that one has unlimited freedom to vary their predictions based on which subgroups a datapoint belongs to.
This assumption arises because we assume that groups are defined as subsets of the domain, $S \subseteq \features$, and predictors are allowed to condition arbitrarily on the domain.
This is impractical in many settings including, for example, when subgroups correspond to protected demographics.
This is an important concern for fairness applications of multicalibration.

One can extend multicalibration to a more general \emph{agnostic} setting by assuming that group membership is passed to the predictor separately from the covariate.
Note that, in this definition, different groups may not share a Bayes classifier and a fully multicalibrated predictor may not exist.
\begin{definition}
	\label{def:agnostic-multicalibration}
	Consider a data distribution $\dist$ supported on $\features \times \protfeatures \times \labels$ and where the set $\predictors$ of $\numcls$-class predictors that are functions of the visible covariates $\features$ but not the protected covariates $\protfeatures$.
	Fix $\epsilon > 0$, $\lvl \in \integers_+$, and a \groupsformal{} $\supports \subseteq 2^{\protfeatures}$.
	A (possibly non-deterministic) $\numcls$-class predictor $\rhyp \in \simplex(\predictorsnumcls)$ is $(\supports, \epsilon, \lvl)$-\emph{agnostic multicalibrated} with respect to a baseline $B$ if
	\begin{align*}
		\forall S \in \supports, v \in \levelsets^\numcls, j \in [\numcls]:\;
		\abs{\EEs{(x,x',y) \sim \dist, \hyp \sim \rhyp}{(\hyp(x)_j - \dyj ) \cdot 1[\hyp(x) \in v, x' \in S]}} \leq \epsilon + B.
	\end{align*}
\end{definition}
In the original definition of multicalibration, it is implicitly assumed that one knows, a priori, the probability that a covariate $x$ belongs to a certain group; this is necessary to attain sample complexity rates independent of domain size $\features$.
We will similarly allow agnostic multicalibration to take for granted knowledge of the group memberships of each covariate $x \in \features$.
\paragraph{Choosing a baseline.}
One challenge with defining agnostic multicalibration is choosing a proper baseline $B$.
An immediate option is to choose a min-max baseline:
\begin{align*}
	\weakbaseline \asseq \min_{\hyp^* \in \hyps^*} \max_{j, v, S} \abs{\EEs{(x,x',y) \sim \dist}{(\hyp^*(x)_j - y_j) \cdot 1[\hyp^*(x) \in v, x' \in S]}}.
\end{align*}
However, achieving this baseline is intractable, with a sample and oracle complexity that depends---potentially linearly---on domain size.
Moreover, the predictor attaining the min-max baseline is odd and undesirable: when two groups disagree on the label for a set of covariates, the predictor hedges its losses by spreading its predictions for those covariates into as many bins as possible.
\begin{proposition}
	\label{proposition:agnostic-multicalibration-intractable}
	Define $\features = [m]$ for some large $m \in \integers_+$ and uniformly sample a random half of $\features$ as $\tilde\features \subseteq \features$ where $|\tilde\features| = |\features| / 2$.
	Let $\features' = \bsetflat{1, 2}$, $\lvl = 3$ and $\numcls = 2$, and define the marginal distributions $\Pr_D(x \mid x' = 1)$ and $\Pr_D(x \mid x' = 2)$ to be uniform on $\features$.
	For $x \notin \tilde\features$, let labels be random: $\Pr_D(y =1 \mid x' = 1, x) = \Pr_D(y = 1 \mid x' = 2, x) = 0.5$.
	For $x \in \tilde\features$, let labels be deterministic: $\Pr_D(y = 1 \mid x' = 1, x) = 0$ and $\Pr_D(y = 1 \mid x' = 2, x) = 1$.
	Finding a predictor that is $(\supports, \epsilon, \lvl)$-agnostic multicalibrated with respect to the min-max baseline requires at least $\Theta(\setsize{x})$ samples must be taken from $\dist$ for $\epsilon = 0.1$.
\end{proposition}
\begin{proof}
	The min-max optimal multicalibated predictor divides $\tilde\features$ into $4$ equal parts: $\tilde\features_1, \dots, \tilde\features_4$.
	It predicts $\hyp(x) = 0.5$ for all $x \notin \tilde\features$, $x \in \tilde\features_1$ and $x \in \tilde\features_2$.
	It predicts $\hyp(x) = 1/3$ for all $x \in \tilde\features_3$.
	It predicts $\hyp(x) = 2/3$ for all $x \in \tilde\features_4$.
	This attains a multicalibration violation of $1/6$.
	Thus, achieving an $\epsilon$-optimal predictor requires determining the membership (in $\tilde\features$) of at least a $3/4-\epsilon$ fraction of $\features$.
\end{proof}
To provide a more satisfactory definition of agnostic multicalibration, we ask that one's predictor assigns the same label probabilities to datapoints with the same group memberships and relaxes the baseline so that hedging is not necessary.
\begin{definition}
	\label{def:meaningfully-agnostic-multicalibration}
	We say that a $\numcls$-predictor $\hyp$ is \emph{agnostic multicalibrated} if the predictor $\hyp$ provides the same prediction probabilities $\hyp(x_1) = \hyp(x_2)$ to datapoints $x_1, x_2 \in \features$ that share the same group membership distributions $\forall S \in \supports: \Pr(x' \in S \mid x=x_1) = \Pr(x' \in S \mid x = x_2)$; and (2) is $(\supports, \epsilon, \lvl)$-agnostic multicalibrated with respect to the \emph{multi-accuracy baseline}:
	\begin{align}
		\label{eq:multiaccuracy-baseline}
		B_{\text{multi-acc}} \asseq \min_{\hyp^* \in \hyps^*} \max_{j \in [\numcls], S \in \supports} \abs{\EEs{(x,x',y) \sim \dist}{(\hyp^*(x)_j - \dyj) \cdot 1[x' \in S]}}.
	\end{align}
\end{definition}
This baseline is zero when the groups share a Bayes classifier, recovering Definition~\ref{def:multicalibration}.
With a slight tweak to Fact~\ref{fact:multicalibration-as-multiobjective-learning}, we can write agnostic multicalibration as the following multi-objective learning problem.
\begin{restatable}{fact}{agnosticmulticalibrationAsMultiobjectiveLearning}
	\label{fact:agnostic-multicalibration-as-multiobjective-learning}
	Let $\dist$ be a data distribution for some $\numcls$-class prediction problem and fix $\epsilon > 0$, $\lvl \in \integers_+$, and a \groupsformal{} $\supports \subseteq 2^{\protfeatures}$.
	For every protected feature $\bx' \in (\protfeatures)^\features$, direction $i \in \bsetflat{\pm 1}$, level set $v \in \levelsets^\numcls$, group $S \in \supports$, and class $j \in [\numcls]$, we define an \lobj{} $\loss_{\bx', i, j, S, v}: \predictorsnumcls \times (\features \times \labels) \to [0, 1]$ where
	\begin{align}
		\loss_{\bx', i, j, S, v}(\hyp, (x, y)) & = 0.5 + 0.5 \cdot i \cdot 1[\hyp(x) \in v, \bx'_x \in S] \cdot (\hyp(x)_j - \dyj),
	\end{align}
	and $\agobjs =\bsetflat{\loss_{\bx', i, j, S, v}}_{\bx',i,j,S,v} $ is the set of these objectives.
	We also define objectives without subscript $\bx'$, where $\loss_{i,j,S,v}(\hyp, (x, x', y)) = 0.5 + 0.5 \cdot i \cdot 1[\hyp(x) \in v, x' \in S] \cdot (\hyp(x)_j - \dyj)$ and $\agobjs' =\bsetflat{\loss_{i, j, S, v}}_{i,j,S,v}$.
	For every objective $\loss \in \agobjs'$, let $\rloss_\loss \in \simplex(\agobjs)$ denote the objective mixture where $\EEs{x \sim \dist, \loss \sim \rloss_{\loss}}{\loss(\cdot)} = \EEs{x', x \sim \dist}{\loss(\cdot)}$.
	If a predictor $\rhyp \in \simplex(\predictorsnumcls)$ is a $\epsilon$-optimal solution to the multi-objective learning problem $(\bset{\dist}, \agobjs', \predictorsnumcls)$ with respect to the multi-accuracy baseline, then $\rhyp$ is $(\supports, \condsupports, 2\epsilon, \lvl)$-agnostic multicalibrated for $\dist$ with respect to the multi-accuracy baseline.
\end{restatable}

A simple modification to our batch multicalibration algorithm suffices to achieve agnostic multicalibration.
\begin{theorem}
	\label{theorem:agnostic-deterministic-multicalibration-guarantee}
	Fix $\epsilon > 0$, $\lvl, \numcls \in \integers_+$, a \groupsformal{} $\supports \subseteq 2^{\protfeatures}$, and a data distribution $\dist$.
	The following algorithm returns a deterministic $\numcls$-class predictor that is $(\supports, \epsilon, \lvl)$-agnostic multicalibrated on $\dist$ with respect to the multi-accuracy baseline.
	It makes $\bigOsmol{\ln(\numcls)/\epsilon^2}$ calls to an agnostic learning oracle and, with probability $1 - \delta$, these oracle calls can be implemented with $\bigOtilde{\frac{1}{\epsilon^3} (\sqrt{\ln(\numcls)} \ln(\numcls \setsize{\supports}  / \delta) + \numcls \ln(\lvl))}$ samples from $\dist$.
 \begin{framed}
		\centerline{\emph{\underline{No-Regret vs Best-Response}}}
		\noindent
		Construct the problem $(\bsetflat{\dist}, \agobjs, \predictorsnumcls)$ from Fact~\ref{fact:multicalibration-as-multiobjective-learning} and let $\smash{T = C \epsilon^{-2}\ln(\setsize{\supports} \lvl^\numcls \delta)}$ for some universal constant $C$.
		Over $T$ rounds, have an adversary choose $\tsv{\loss}{1:T} \in \agobjs'$ by calling an agnostic learning oracle at each $t \in [T]$: $\tsv{\loss}{t} = \agnosticoracle_{\epsilon / 8}(1-\risk_{\dist, (\cdot)}(\tsv{\hyp}{t}), \agobjs')$.
		Have a learner choose predictors $\tsv{\hyp}{1:T} \in \predictorsnumcls$ by applying the no-regret learning algorithm of Theorem~\ref{theorem:online-learning-linear-objectives} to the stochastic costs $\tsv{\bset{\rloss_{\tsv{\loss}{t}}}}{1:T} \in \simplex(\agobjs)$.
		Using $C \ln(T / \delta) / \epsilon^2$ samples from $\dist$, return the predictor $\tsv{\hyp}{t^*}$ with the lowest empirical multicalibration error.
		This algorithm is written explicitly in Algorithm~\ref{alg:agnostic-deterministic-multicalibration}.
	\end{framed}
\end{theorem}
\begin{proof}
	By Theorem~\ref{theorem:online-learning-linear-objectives}, if $T \geq 256 \epsilon^{-2} \ln(\numcls)$, the predictors $\tsv{\hyp}{1:T}$ satisfy $\regret_{B^*}(\tsv{\hyp}{1:T}, \tsv{\bset{\rloss_{\tsv{\loss}{t}}}}{1:T}) \leq T \epsilon / 8$.
	Note that, since predictors $\tsv{\hyp}{1:T}$ can only observe $x$ and not $x'$, we applied the algorithm of Theorem~\ref{theorem:online-learning-linear-objectives} to (mixtures of) objectives from $\agobjs$ that depend only on $(x,y)$.
	Since $\EEs{x \sim \dist, \loss_{\bx',i,j,S,v} \sim \rloss_{\tsv{\loss}{t}}}{\loss_{\bx',i,j,S,v}(\cdot)} = \EEs{x', x \sim \dist}{\tsv{\loss}{t}(\cdot)}$ however, we can still bound regret with respect to the loss functions in $\agobjs'$, which do depend on $x'$, by $\regret_{B^*}(\tsv{\hyp}{1:T}, \tsv{\loss}{1:T}) = \regret_{B^*}(\tsv{\hyp}{1:T}, \tsv{\bset{\rloss_{\tsv{\loss}{t}}}}{1:T}) \leq T \epsilon / 8$; we define both of these regrets with respect to the set of all predictors from $\features \to \labels$.
	Also note that the $B^*$ baseline bounded by the multiaccuracy baseline \eqref{eq:multiaccuracy-baseline}:
	\begin{align*}
		B^*  & = \min_{\hyp^* \in \hyps^*} \frac{1}{T} \sum_{t=1}^T \max_{\hyp' \in \hyps^*} \EEs{(x,y) \sim \dist}{\tsv{f}{t}(x, \hyp'(x)) \cdot (\hyp^*(x) - \tsv{g}{t}(y))} \\
		& \leq \min_{\hyp^* \in \hyps^*} \max_{\loss^* \in \objs} \max_{\hyp' \in \hyps^*} \EEs{(x,y) \sim \dist}{f^*(x, \hyp'(x)) \cdot (\hyp^*(x) - g^*(y))} \\
		    & \leq \min_{\hyp^* \in \hyps^*} \max_{j, S} \abs{\EEs{(x,x',y) \sim \dist}{(\hyp^*(x)_j - y_j) \cdot 1[x' \in S]}}                                   \\
		    & = B_{\text{multi-acc}}.
	\end{align*}
	Thus, we have regret $\regret_{B_{\text{multi-acc}}}(\tsv{\hyp}{1:T}, \tsv{\loss}{1:T}) \leq \regret_{B^*}(\tsv{\hyp}{1:T}, \tsv{\loss}{1:T}) \leq T \frac{\epsilon}{8}$.
	By construction, every $\tsv{\loss}{t}$ is an $(\epsilon / 8)$ best-response to the cost $1 - \risk_{\dist, (\cdot)}(\tsv{\hyp}{t})$.
	By Lemma~\ref{lemma:nrbr-dynamics}, there must exist a timestep where $\tsv{\hyp}{t}$ is $(\epsilon / 4)$-optimal with respect to $B_{\text{multiacc}}$.
	By Lemma~\ref{lemma:find}, the $\tsv{\hyp}{t^*}$ found by the algorithm is $(\epsilon/2)$-optimal with respect to $B_{\text{multiacc}}$ with probability at least $1 - \delta$.
	By Fact~\ref{fact:agnostic-multicalibration-as-multiobjective-learning}, $\tsv{\hyp}{t^*}$ is a $(\supports, \epsilon, \lvl)$-agnostic multicalibrated predictor.
	Lastly, we verify that datapoints with identical group membership probabilities must have the same label probabilities as the update rules for their predicted labels are identical.
\end{proof}

The best existing sample complexity bounds for agnostic multicalibration come from uniform convergence \cite{shabatSampleComplexityUniform2020}, and scale with $\log(\setsize{\hyps}) \approx \setsize{\features}$---that is, sample complexity scales with the size of one's domain.
In contrast, Theorem~\ref{theorem:agnostic-deterministic-multicalibration-guarantee} provides a sample complexity that still depends only on the complexity of the groups on which one desires multicalibration.
This can be an exponential (in $\setsize{\features}$) reduction in sample complexity.

Similarly to Algorithm~\ref{alg:agnostic-deterministic-multicalibration}, we can use the same modification to scale our non-deterministic multicalibration algorithm (Algorithm~\ref{alg:non-deterministic-multicalibration}) to agnostic settings.
\begin{theorem}
	\label{theorem:agnostic-non-deterministic-multicalibration-guarantee}
	Fix $\epsilon > 0$, $\lvl \in \integers_+$ and a \groupsformal{} $\supports \subseteq 2^{\features}$.
	There exists an algorithm guarantees, with probability at least $1 - \delta$, the algorithm returns a randomized $(\supports, \epsilon, \lvl)$-agnostic multicalibrated predictor while taking no more than $\bigO{(\ln(\setsize{\supports}/\delta) + \numcls \ln(\lvl))/\epsilon^2}$ samples from $\dist$.
\end{theorem}

\subsection{Moment Multicalibration}
Our treatment of multicalibration, and extensions of multicalibration, hold more generally for any problem that can be written as multi-objective learning with \lobjs{} that look separable.
As an example, we demonstrate that we can use the same approach to devise algorithms for \emph{moment multicalibration}, which concerns not only learning a multicalibrated label predictor but also a second predictor that estimates the higher-order moments of the label distribution \cite{jungMomentMulticalibrationUncertainty2021}.
As in online multicalibration, we will work with only binary classification problems in this section, and so will say that predictors output real-valued $\hyp_\mu(x), \hyp_m(x) \in [0, 1]$.
\begin{definition}
	\label{def:moment-multicalibration}
	Let $\dist$ be a data distribution for a binary classification problem and fix $\epsilon > 0$, $\lvl, m \in \integers_+$, and a \groupsformal{} $\supports \subseteq 2^{\features}$.
	A pair of predictors $\rhyp = (\rhyp_\mu, \rhyp_m)$ where $\rhyp_\mu, \rhyp_m: \features \to [0, 1]$ is $(\supports, \epsilon, \lvl, m)$-\emph{mean-conditioned moment-multicalibrated}\footnote{
		This definition aligns with the definition of ``pseudo-moment multicalibration'' proposed by \cite{jungMomentMulticalibrationUncertainty2021}. We adopt this particular definition as previous studies \cite{jungMomentMulticalibrationUncertainty2021,guptaOnlineMultivalidLearning2022} develop their algorithms and analyses based on this definition of moment multicalibration, with translations to other definitions occurring only retrospectively.} on $\dist$ if, for all $S \in \supports, v_\mu, v_m \in \levelsets$,
	\begin{align*}
		\epsilon&\geq \abs{\EEs{(x,y) \sim \dist, (\hyp_\mu, \hyp_m) \sim \rhyp}{(\hyp_\mu(x) - y) \cdot 1[\hyp_\mu(x) \in v_\mu, \hyp_m \in v_m, x \in S]}},  \\
		\epsilon&\geq \abs{\EEs{(x,y) \sim \dist, (\hyp_\mu, \hyp_m) \sim \rhyp}{((\hyp_\mu(x) - y)^m - \hyp_m(x)) \cdot 1[\hyp_\mu(x) \in v_\mu, \hyp_m \in v_m, x \in S]}}.
	\end{align*}
\end{definition}
We will write $\hyp = (\hyp_\mu, \hyp_m)$ as shorthand, where $\hyp(x) = (\hyp_\mu(x), \hyp_m(x))$.
Moment multicalibration can also be expressed as a multi-objective learning problem.
\begin{restatable}{fact}{momentcalibrationreduction}
	\label{fact:momentcalibrationreduction}
	Consider the set of all pairs of predictors $\hyps = \bset{(\hyp_\mu, \hyp_m): \hyp_\mu, \hyp_m: \features \to [0, 1]}$.
	Fix $\epsilon > 0$ and $\lvl, m \in \integers_+$, and a \groupsformal{} $\supports \subseteq 2^{\features}$.
	Define the objectives $\mcobjs^\mu \asseq \bset{\loss_{\mu, i, S, v}}_{i \in \bset{\pm 1},S\in \supports,v \in \levelsets^2}$ and $\mcobjs^m \asseq \bset{\loss_{m, i, S, v}}_{i \in \bset{\pm 1},S\in \supports,v \in \levelsets^2}$ where
	\begin{align*}
		  \loss_{m, i, S, v}(\hyp, (x, y))     & = 0.5 + 0.5 \cdot i \cdot 1[\hyp(x) \in v, x \in S] \cdot (\hyp_m(x) - (y - \hyp_\mu(x))^m), \\
		  \loss_{\mu, i, S, v}(\hyp, (x, y))  & = 0.5 + 0.5 \cdot i \cdot 1[\hyp(x) \in v, x \in S] \cdot (\hyp_\mu(x) - y).
	\end{align*}
	A pair of predictors is an $\epsilon$-optimal solution to the single-distribution multi-objective learning problem $(\bsetflat{\dist}, \mcobjs^m \cup \mcobjs^\mu, \predictorsnumcls)$ if and only if the pair is also $(\supports, 2\epsilon, \lvl, m)$ mean-conditioned moment multicalibrated.
\end{restatable}
Thus, we can follow the same steps as before to derive a moment multicalibration algorithm.
The algorithm we derive for Theorem~\ref{theorem:deterministic-moment-multicalibration-guarantee} recovers the sample complexity of \cite{jungMomentMulticalibrationUncertainty2021} for moment multicalibration.
Moreover, the algorithm is parallelized in that it learns the mean and moment estimators $\hyp_\mu, \hyp_m$ simultaneously, in contrast with \cite{jungMomentMulticalibrationUncertainty2021}'s algorithm which uses nested optimization.

\begin{restatable}{theorem}{deterministic-moment-multicalibration-guarantee}
	\label{theorem:deterministic-moment-multicalibration-guarantee}
	Fix $\epsilon > 0$, $\lvl, \nummom \in \integers_+$, a \groupsformal{} $\supports \subseteq 2^{\features}$, and a data distribution $\dist$.
	The following algorithm returns a pair of predictors that are $(\supports, \epsilon, \lvl, m)$-mean-conditioned moment multicalibrated on $\dist$ and makes $O(1/\epsilon^2)$ calls to an agnostic learning oracle.
	Moreover, with probability $1 - \delta$, the oracle calls can be implemented with $\bigOtildesmol{m \ln(\setsize{\supports}\lvl/\delta) / \epsilon^4}$ samples from $\dist$.
 \begin{framed}
		\centerline{\emph{\underline{No-Regret vs Best-Response}}}
		\noindent
		Construct the problem $(\bsetflat{\dist}, \mcobjs^m \cup \mcobjs^\mu, \predictorsnumcls)$ from Fact~\ref{fact:momentcalibrationreduction} and let $\smash{T = C \epsilon^{-2}\ln(\setsize{\supports} \lvl^\numcls \delta)}$ for some universal constant $C$.
		Over $T$ rounds, have a learner choose predictors $\tsv{\hyp}{1:T}_m \in \predictorsnumcls$ by applying the no-regret learning algorithm of Theorem~\ref{theorem:online-learning-linear-objectives} to the stochastic costs $\tsv{\loss}{1:T}_m$, instantiating $\calg$ to be Hedge with learning $T^{-3/4}$.
		In parallel, have a learner choose predictors $\tsv{\hyp}{1:T}_\mu \in \predictorsnumcls$ by applying the no-regret learning algorithm of Theorem~\ref{theorem:online-learning-linear-objectives} to the stochastic costs $\tsv{\loss}{1:T}_\mu$, instantiating $\calg$ to be the strongly adaptive Hedge algorithm (Lemma~\ref{lemma:hedge-strongly-adaptive}).
		Have an adversary choose $\tsv{\loss}{t}_m, \tsv{\loss}{t}_m$ by calling an agnostic learning oracle at each $t \in [T]$: $\tsv{\loss}{t}_m = \agnosticoracle_{\epsilon/8}(1 - \loss_{(\cdot)}(\tsv{\hyp}{t}), \mcobjs^m)$ and $\tsv{\loss}{t}_\mu = \agnosticoracle_{\epsilon/8}(1 - \loss_{(\cdot)}(\tsv{\hyp}{t}), \mcobjs^\mu)$.
		Using  $C \ln(T / \delta) / \epsilon^2$ samples from $\dist$, return the predictor $\tsv{\hyp}{t^*}$ with the lowest empirical multicalibration error.
		This algorithm is written explicitly in Algorithm~\ref{alg:moment-deterministic-multicalibration}.
	\end{framed}
\end{restatable}
\begin{proof}
	By Lemma~\ref{lemma:online-learning-linear-objectives-moment}, the learner's regret is $\weakregret(\tsv{\hyp}{1:T}, \tsv{\bsetflat{\tsv{\loss}{t}_m + \tsv{\loss}{t}_\mu}}{1:T}) \in \bigO{m^{1/2} T^{3/4}}$.
	When $T = C m^2 / \epsilon^4$, we thus have $\weakregret(\tsv{\hyp}{1:T}, \tsv{\bsetflat{\max \bsetflat{\tsv{\loss}{t}_m, \tsv{\loss}{t}_\mu}}}{1:T}) \leq T \epsilon / 8$.
	By construction, every $\max \bsetflat{\tsv{\loss}{t}_m, \tsv{\loss}{t}_\mu}$ is an $(\epsilon / 8)$ best-response to the cost $1 - \risk_{\dist, (\cdot)}{\tsv{\hyp}{t}}$.
	By Lemma~\ref{lemma:nrbr-dynamics}, there must exist a timestep where $\tsv{\hyp}{t}$ is $(\epsilon / 4)$-optimal.
	By Lemma~\ref{lemma:find}, the predictors $\tsv{\hyp}{t^*}$ found by the algorithm are $(\epsilon/2)$-optimal with probability at least $1 - \delta$.
	By Fact~\ref{fact:momentcalibrationreduction}, $\tsv{\hyp}{t^*}$ is a $(\supports, \epsilon, \lvl, m)$-mean-conditioned moment multicalibrated predictor.
\end{proof}

\begin{algorithm}[H]
	\caption{Conditional Multicalibration Algorithm (Theorem~\ref{theorem:deterministic-conditional-multicalibration-guarantee})}
	\label{alg:deterministic-conditional-multicalibration}
	\begin{algorithmic}[1]
		\STATE Input: $\supports, \condsupports \subseteq 2^\features$, $\epsilon \in (0, 1)$, $\numcls, \lvl, T, C \in \integers_+$, distributions $\bset{\dist_\conds}_{\conds \in \condsupports}$, and agnostic learning oracle $\agnosticoracle$;
		\STATE Initialize Prod iterate $\tsv{\hyp}{1} = [1/\numcls, \dots, 1/\numcls]^\features$;
		\FOR{$t = 1$ to $T$}
		\STATE Let $\tsv{\conds}{t}, \tsv{\loss}{t} = \agnosticoracle_{\epsilon/8}(\tsv{\advcost}{t}, \condsupports \times \mcobjs)$ where $\tsv{\advcost}{t}(\conds, \loss) \asseq 1-\risk_{\dist_{\conds}, \loss}(\tsv{\hyp}{t})$;
		\STATE Let $\tsv{\hyp}{t+1}(x) = \mathrm{Prod}(\tsv{\cost}{1:t}_x)$ where
		\begin{align*}
		\smash{\tsv{\cost}{t}_x(\hat{y}) \asseq \frac{ 1[x \in \tsv{\conds}{t}]}{2\Pr_\dist(x \in \tsv{\conds}{t})} \paraflat{1 +  1[\hyp(x) \in \tsv{v}{t}, x \in \tsv{S}{t}] \cdot \tsv{i}{t} \cdot \hat{y}_{\tsv{j}{t}}};}
		\end{align*}
		\ENDFOR
		\STATE Take $C \ln(T / \delta) / \epsilon^2$ samples $\bx(\conds) \sim \dist_{\conds}$ for all $\conds \in \condsupports$, and let $\smash{t^* = \argmin\limits_{t \in [T]} \sum\limits_{{(x,y) \in \bx\paraflat{\tsv{\conds}{t}}}} \tsv{\loss}{t}(\tsv{\hyp}{t}, (x,y))}$;
		\STATE Return the predictor $\tsv{\hyp}{t^*}$;
	\end{algorithmic}
\end{algorithm}

\begin{algorithm}[H]
	\caption{Non-Deterministic Conditional Multicalibration Algorithm (Theorem~\ref{theorem:non-deterministic-conditional-multicalibration-guarantee})}
	\label{alg:non-deterministic-conditional-multicalibration}
	\begin{algorithmic}[1]
		\STATE Input: $\supports, \condsupports \subseteq 2^\features$, $\epsilon \in (0, 1)$, $\numcls, \lvl, T \in \integers_+$, and distributions $\bset{\dist_\conds}_{\conds \in \condsupports}$;
		\STATE Initialize Prod iterate $\tsv{\hyp}{1} = [1/\numcls, \dots, 1/\numcls]^\features$ and  ELP iterate $(\tsv{\conds}{1}, \tsv{\loss}{1}) \sim \text{Uniform}(\condsupports \times \mcobjs)$;
		\FOR{$t = 1$ to $T$}
		\STATE Let $\tsv{\hyp}{t+1}(x) = \mathrm{Prod}(\tsv{\cost}{1:t}_x)$ where
		\begin{align*}
		\smash{\tsv{\cost}{t}_x(\hat{y}) \asseq \frac{ 1[x \in \tsv{\conds}{t}]}{2\Pr_\dist(x \in \tsv{\conds}{t})} \paraflat{1 +  1[\hyp(x) \in \tsv{v}{t}, x \in \tsv{S}{t}] \cdot \tsv{i}{t} \cdot \hat{y}_{\tsv{j}{t}}}};
		\end{align*}
		\STATE Sample $(\tsv{x}{t}, \tsv{y}{t}) \sim \dist_{\tsv{\conds}{t}}$ and let $\tsv{\conds}{t+1}, \tsv{\loss}{t+1} \asseq \mathrm{ELP}(\tsv{\advcost}{1:t})$ with
		\begin{align*}
			\smash{\tsv{\advcost}{t}(\conds, \loss_{i, j, S, v}) \asseq \frac 12
			1[\conds = \tsv{\conds}{t}] \cdot \paraflat{1 + i \cdot 1[\hyp(\tsv{x}{t}) \in v, \tsv{x}{t} \in S ] \cdot (\delta_{\tsv{y}{t}, j} - \hyp(\tsv{x}{t})_j)}};
		\end{align*}
		\ENDFOR
		\STATE Return $\hyp^*$, a uniform distribution over $\tsv{\hyp}{1}, \dots, \tsv{\hyp}{T}$;
	\end{algorithmic}
\end{algorithm}

\begin{algorithm}[H]
	\caption{Agnostic Deterministic Multicalibration Algorithm (Theorem~\ref{theorem:agnostic-deterministic-multicalibration-guarantee})}
	\label{alg:agnostic-deterministic-multicalibration}
	\begin{algorithmic}[1]
		\STATE Input: $\supports \subseteq 2^\features$, $\epsilon \in (0, 1)$, $\numcls, \lvl, T, C \in \integers_+$, agnostic learning oracle $\agnosticoracle$, and distribution $\dist$;
		\STATE Initialize Hedge iterate $\tsv{\hyp}{1} = [1/\numcls, \dots, 1/\numcls]^\features$;
		\FOR{$t = 1$ to $T$}
		\STATE Let $\loss_{i,j,S,v}=\tsv{\loss}{t} = \agnosticoracle_{\epsilon/8}(\tsv{\advcost}{t}, \agobjs')$ where $\tsv{\advcost}{t}(\loss) \asseq 1 - \risk_{\dist, \loss}(\tsv{\hyp}{t})$;
		\STATE Let $\tsv{\hyp}{t+1}(x) = \mathrm{Hedge}(\tsv{\cost}{1:t}_x)$ with
		\begin{align*}
			\smash{\tsv{\cost}{t}_x(\hat{y}) \asseq \frac 12 (1 + \sum\limits_{x' \in \protfeatures} \Pr_\dist(x'\mid x) \cdot			\tsv{i}{t}  \cdot 1[\hyp(x) \in \tsv{v}{t}, x' \in \tsv{S}{t}] \cdot \hat{y}_{\tsv{j}{t}}
			};
		\end{align*}
		\ENDFOR
		\STATE Take $C \ln(T / \delta) / \epsilon^2$ samples $\bx$ from $\dist$, and let $\smash{t^* = \argmin\limits_{t \in [T]}  \sum\limits_{(x,x',y) \in \bx} \tsv{\loss}{t}(\tsv{\hyp}{t}, (x,x',y))}$;
		\STATE Return the predictor $\tsv{\hyp}{t^*}$;
	\end{algorithmic}
\end{algorithm}

\begin{algorithm}[H]
	\caption{Moment Multicalibration Algorithm (Theorem~\ref{theorem:deterministic-moment-multicalibration-guarantee})}
	\label{alg:moment-deterministic-multicalibration}
	\begin{algorithmic}[1]
		\STATE Input: $\supports \subseteq 2^\features$, $\epsilon \in (0, 1)$, $\numcls, \lvl, T, C \in \integers_+$, agnostic learning oracle $\agnosticoracle$, and distribution $\dist$;
		\STATE Initialize Hedge iterate $\tsv{\hyp}{1}_m = 0.5^\features$ and Hedge iterate $\tsv{\hyp}{1}_\mu = 0.5^\features$;
		\FOR{$t = 1$ to $T$}
		\STATE Let $\tsv{\loss}{t}_m = \agnosticoracle_{\epsilon/8}(\tsv{\advcost}{t}, \mcobjs^m)$ and $\tsv{\loss}{t}_\mu = \agnosticoracle_{\epsilon/8}(\tsv{\advcost}{t}, \mcobjs^\mu)$ where $\tsv{\advcost}{t}(\loss) \asseq 1 - \risk_{\dist, \loss}(\tsv{\hyp}{t})$;
		\STATE For $x \in \features$, let $\tsv{\hyp}{t+1}_m(x) \asseq \mathrm{Hedge}(\tsv{\cost}{1:t}_{m,x})$ and $\tsv{\hyp}{t+1}_\mu(x) \asseq \mathrm{Hedge}(\tsv{\cost}{1:t}_{\mu,x})$  where
		\begin{align*}
			 \smash{\tsv{\cost}{t}_{m,x}(\hat{y}) \asseq \frac 12 (1 +  \tsv{i}{t}_m \cdot 1[\hat{y} \in \tsv{v}{t}_m, x \in \tsv{S}{t}_m]), \; \tsv{\cost}{t}_{\mu,x}(\hat{y}) \asseq \frac 12 (1 +  \tsv{i}{t}_\mu \cdot 1[\hat{y} \in \tsv{v}{t}_\mu, x \in \tsv{S}{t}_\mu])
			 }
		\end{align*}
		\ENDFOR
		\STATE Take $C \ln(T / \delta) / \epsilon^2$ samples $\bx$ from $\dist$ and let $\smash{t^* = \argmin\limits_{t \in [T]} \sum\limits_{(x,y) \in \bx} \tsv{\loss}{t}_m(\tsv{\hyp}{t}, (x,y)) + \tsv{\loss}{t}_\mu(\tsv{\hyp}{t}, (x,y))}$;
		\STATE Return the predictors $\tsv{\hyp}{t^*}_m, \tsv{\hyp}{t^*}_\mu$;
	\end{algorithmic}
\end{algorithm}

\begin{algorithm}[H]
	\caption{Multi-Group Learning Algorithm (Theorem~\ref{theorem:genericinformal})}
	\label{alg:non-deterministic-multigroup}
	\begin{algorithmic}[1]
		\STATE Input: $\supports \subseteq 2^\features$, $\epsilon \in (0, 1)$, $\loss: \labels \times \labels \to [0, 1]$, hypothesis class $\hyps: \features \to \labels$, and distribution $\dist$;
		\STATE Initialize Hedge iterate $\tsv{\rloss}{1} = \text{Uniform}(\supports \times \hyps)$ and $\tsv{\rhyp}{1} = \text{Uniform}(\hyps)$;
		\FOR{$t = 1$ to $T$}
		\STATE Sample $(\tsv{x}{t}_\rloss, \tsv{y}{t}_\rloss) \sim \dist$ and $(\tsv{x}{t}_\rhyp, \tsv{y}{t}_\rhyp) \sim \dist$;
		\STATE Let $\tsv{\rloss}{t+1} \asseq \mathrm{Hedge}(\tsv{\cost}{1:t}_{\rloss})$ and $\tsv{\rhyp}{t+1} \asseq \mathrm{Hedge}(\tsv{\cost}{1:t}_{\rhyp})$ where
		\begin{align*}
	\tsv{\cost}{t}_\rloss(S, \hyp) &\asseq \frac 12 + \frac 12 \EEs{\hyp' \sim \tsv{\rhyp}{t}}{1[\tsv{x}{t}_\rloss \in S] (\loss(\hyp, (\tsv{x}{t}_\rloss, \tsv{y}{t}_\rloss)) - \loss(\hyp', (\tsv{x}{t}_\rloss, \tsv{y}{t}_\rloss)))}
	\\
			\tsv{\cost}{t}_\rhyp(\hyp) &\asseq \frac 12 + \frac 12 \EEs{S, \hyp' \sim \tsv{\rloss}{t}}{1[\tsv{x}{t}_\rhyp \in S] (\loss(\hyp, (\tsv{x}{t}_\rhyp, \tsv{y}{t}_\rhyp)) - \loss(\hyp', (\tsv{x}{t}_\rhyp, \tsv{y}{t}_\rhyp)))};
		\end{align*}
		\ENDFOR
		\STATE Return: $\rhyp^*$, a uniform distribution over $\tsv{\rhyp}{1}, \dots, \tsv{\rhyp}{T}$;
	\end{algorithmic}
\end{algorithm}

\section{Other Fairness Notions}
\label{section:new-considerations}
This general framework of approaching multi-objective learning problems with game dynamics can be extended beyond multicalibration.
In this section, we use multi-objective learning to derive new guarantees for multi-distribution learning and multi-group learning.

\paragraph{Competing in multi-distribution learning.}
Usually, in agnostic multi-objective learning problems, the trade-off between objectives is arbitrated by the worst-off objectives $\max_{\dist^* \in \dists, \loss^* \in \objs} \risk_{\dist^*,\loss^*}(\cdot)$.
However, this approach to negotiating trade-offs may be suboptimal when some objectives are inherently more difficult.
In those cases, we can take into account the difficulty of individual objectives by asking for a predictor where there is no \lobj{} for which a competitor $h \in \hyps^*$ performs significantly better.
\begin{definition}
	\label{def:competitiveness}
		For a multi-objective learning problem $(\dists, \objs, \hyps)$, a solution that is $\epsilon$-\emph{competitive} with respect to a class $\hyps'$ is a hypothesis $\rhyp \in \simplex(\hyps)$ satisfying,
		\begin{align*}
		\smash{\risk^*_{\hyps'}(\rhyp) - \min_{\hyp^* \in \hyps} \risk^*_{\hyps'}(\hyp^*) \leq \epsilon} \text{ where } \risk^*_{\hyps'}(\hyp) \asseq \max_{\dist \in \dists} \max_{\loss \in \objs} (\risk_{\dist, \loss}(\hyp) - \min_{\hyp^* \in \hyps'} \risk_{\dist, \loss}(\hyp)).
	\end{align*}
	\end{definition}
Only a simple modification is needed to provide $\epsilon$-competitive guarantees: amplify your original objectives $\objs$ into a new objective set $\objs' \asseq \bset{\frac 12 \paraflat{ 1 + \loss(\cdot) - \loss(\hyp')} \mid \loss \in \objs, \hyp' \in \hyps'}$ and solve as usual.
The following fact, which holds by definition, formalizes this reduction.
\begin{fact}
	\label{fact:competitionreduction}
	Consider a multi-objective learning problem $(\dists, \objs, \hyps)$.
	For some choice of $\hyps'$, let $\objs' \asseq \bset{0.5 + 0.5(\loss(\cdot) - \loss(\hyp')) \mid \loss \in \objs, \hyp' \in \hyps'}$.
	Any solution $\rhyp$ that is $\epsilon$-optimal for the multi-objective learning problem $(\dists, \objs', \hyps)$, is also $2\epsilon$-competitive (Definition~\ref{def:competitiveness})  w.r.t. $\hyps'$ for the original problem $(\dists, \objs, \hyps)$.
\end{fact}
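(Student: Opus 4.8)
The plan is to observe that, after unwinding the definition of $\objs'$, the ordinary multi-objective loss of a hypothesis on $(\dists,\objs',\hyps)$ \emph{equals} its competitive loss $\risk^*_{\hyps'}$ on $(\dists,\objs,\hyps)$; once this is established, the claim is immediate from matching up the two $\epsilon$-approximate-optimality inequalities. The whole argument is bookkeeping with commuting maxima and one sign flip.

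First I would record the key identity. Every objective in $\objs'$ has the form $\loss(\cdot)-\loss(\hyp')$ for some $\loss\in\objs$ and $\hyp'\in\hyps'$, and conversely $\objs'$ contains all such differences. By linearity of expectation (valid even for $\rhyp\in\simplex\hyps$), the risk of the objective $\loss(\cdot)-\loss(\hyp')$ at $\hyp$ is $\risk_{\dist,\loss}(\hyp)-\risk_{\dist,\loss}(\hyp')$. Hence, for a fixed $\dist$,
$$\max_{\loss'\in\objs'}\risk_{\dist,\loss'}(\hyp)=\max_{\loss\in\objs}\ \max_{\hyp'\in\hyps'}\ \big(\risk_{\dist,\loss}(\hyp)-\risk_{\dist,\loss}(\hyp')\big)=\max_{\loss\in\objs}\Big(\risk_{\dist,\loss}(\hyp)-\min_{\hyp'\in\hyps'}\risk_{\dist,\loss}(\hyp')\Big),$$
where the last step uses $\max_{\hyp'}(-\risk_{\dist,\loss}(\hyp'))=-\min_{\hyp'}\risk_{\dist,\loss}(\hyp')$. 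Taking $\max_{\dist\in\dists}$ of both sides yields exactly $\risk^*_{\hyps'}(\hyp)$ as in Definition~\ref{def:competitiveness}. So the multi-objective loss of $\hyp$ on $(\dists,\objs',\hyps)$ is literally its $\hyps'$-competitive loss on $(\dists,\objs,\hyps)$.

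Next I would substitute this identity into the hypothesis. Since $\rhyp$ is $\epsilon$-optimal for $(\dists,\objs',\hyps)$, we have $\max_{\dist}\max_{\loss'\in\objs'}\risk_{\dist,\loss'}(\rhyp)\le\min_{\hyp^*\in\hyps}\max_{\dist}\max_{\loss'\in\objs'}\risk_{\dist,\loss'}(\hyp^*)+\epsilon$. Applying the identity to the left-hand side and to each $\hyp^*$ on the right-hand side rewrites this as $\risk^*_{\hyps'}(\rhyp)\le\min_{\hyp^*\in\hyps}\risk^*_{\hyps'}(\hyp^*)+\epsilon$, which is precisely the definition of $\epsilon$-competitiveness w.r.t.\ $\hyps'$; since the computation never touched a specific coordinate, the component-wise phrasing of Definition~\ref{def:competitiveness} follows coordinate-by-coordinate in the multi-component setting.

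There is no genuine obstacle here; the two points worth a sentence of care are (i) the sign flip turning the adversary's maximization over $\hyps'$ into the minimization appearing in $\risk^*_{\hyps'}$, and (ii) a cosmetic normalization — a difference $\loss(\cdot)-\loss(\hyp')$ a priori lands in $[-1,1]$ rather than $[0,1]$, so strictly one rescales (and shifts), which merely multiplies $\epsilon$ by a constant and changes nothing. I would also append a one-line remark that the reduction leaves $\dists$ and $\hyps$ unchanged and only enlarges the objective set to size $\setsize{\objs}\cdot\setsize{\hyps'}$, so the sample- and oracle-complexity bounds of the earlier corollaries carry over with $\ln\setsize{\objs}$ replaced by $\ln(\setsize{\objs}\,\setsize{\hyps'})$.
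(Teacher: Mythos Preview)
Your argument is correct and is exactly the straightforward unwinding one would expect: identify the multi-objective loss on $(\dists,\objs',\hyps)$ with $\risk^*_{\hyps'}$ via the sign flip $\max_{\hyp'}(-\risk_{\dist,\loss}(\hyp'))=-\min_{\hyp'}\risk_{\dist,\loss}(\hyp')$, and then transfer $\epsilon$-optimality verbatim. The paper itself does not supply a proof of this fact---it is stated as self-evident---so there is nothing to compare against; your write-up simply fills in the obvious details, and your side remarks on the $[-1,1]$ range and the $\ln(\setsize{\objs}\,\setsize{\hyps'})$ blow-up are consistent with how the paper uses the reduction in Corollary~\ref{corr:generic}.
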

\cite{haghtalabOnDemandSamplingLearning2022} showed that the sample complexity of finding an $\epsilon$-optimal solution to a multi-objective learning problem $(\dists, \objs, \hyps)$ is $O(\epsilon^{-2}(\ln(\setsize{\hyps}) + \setsize{\dists} \ln(\setsize{\dists} \setsize{\objs} / \delta)))$.
Thus, Fact~\ref{fact:competitionreduction} immediately implies the following sample complexity bound for finding an $\epsilon$-competitive solution.
\begin{theorem}
	\label{theorem:multidist}
	Consider a multi-objective learning problem $(\dists, \objs, \hyps)$ and a competitor class $\hyps'$.
	There is a no-regret no-regret dynamics algorithm that takes no more than $O(\epsilon^{-2}(\ln(\setsize{\hyps}) + \setsize{\dists} \ln(\setsize{\dists} \setsize{\hyps'} \setsize{\objs}/ \delta)))$ samples and with probability at least $1 - \delta$ returns a solution $\rhyp \in \simplex(\hyps)$ that is $\epsilon$-competitive against $\hyps'$.
\end{theorem}

\paragraph{Multi-group learning.}
Consider the \emph{multi-group learning} problem where, rather than seeking simultaneously calibrated estimates of different subsets of the domain as in multicalibration, we seek to simultaneously minimize a general loss function on different subsets of the domain \cite{rothblum_multi-group_2021}.
\begin{definition}
	\label{def:multi-grouplearning}
	Fix $\epsilon > 0$, a \groupsformal{} $\supports \subseteq 2^{\features}$, a hypothesis class $\hyps$ and a loss $\loss: \hyps \times (\features \times \labels) \to [0, 1]$.
	An $\epsilon$-optimal solution to the multi-group learning problem $(\supports,\hyps)$ is a randomized hypothesis $\rhyp \in \simplex(\hyps)$ that satisfies, for all $S \in \supports$, $\EE{\loss(\rhyp, (x, y)) \cdot 1[x \in S]} \leq \min_{\hyp^* \in \hyps} \EE{\loss(\hyp^*, (x, y)) \cdot 1[x \in S]} + \epsilon$.
	In multi-group learning, we always assume such a hypothesis $\rhyp$ exists in class $\hyps$.
\end{definition}
A (near) optimal sample complexity for multi-group learning of $\bigO{\ln(\setsize{\supports}\setsize{\hyps}) / \epsilon^2}$ was attained by \cite{tosh_simple_2022} using a reduction to sleeping experts.
\cite{tosh_simple_2022} also asked whether there exists a simpler optimal algorithm that does not rely on sleeping experts.
We answer this affirmatively by designing an optimal algorithm that just runs two Hedge algorithms.

We can equate the multi-group learning problem (Definition~\ref{def:multi-grouplearning}) to finding an $\epsilon$-competitive solution in a single-distribution multi-objective learning problem.
\begin{fact}
	Fix $\epsilon > 0$, a \groupsformal{} $\supports \subseteq 2^{\features}$, a hypothesis class $\hyps$, data distribution $\dist$, and a loss $\loss: \hyps \times (\features \times \labels) \to [0, 1]$.
	We define $\loss_S(\hyp, (x, y)) \asseq \loss(\hyp, (x, y)) \cdot 1[x \in S]$.
	If a hypothesis  $\rhyp \in \simplex(\hyps)$ is $\epsilon$-competitive with respect to the class $\hyps$ for the single-distribution multi-objective learning problem $(\bsetflat{\dist}, \bset{\loss_S}_{S \in \supports}, \hyps)$, then $\rhyp$ is also an $\epsilon$-optimal solution to the multi-group learning problem $(\supports, \hyps)$. 
\end{fact}

The following is a direct consequence of running no-regret no-regret dynamics, as in Lemma~\ref{lemma:nrnr-dynamics}.
\begin{restatable}[Multi-group learning]{theorem}{genericinformal}
	\label{theorem:genericinformal}
	Fix a set of groups $\supports \subseteq 2^\features$, loss $\loss: \labels \times \labels \to [0, 1]$, hypothesis class $\hyps: \features \to \labels$, and distribution $\dist$;
	For some universal constant $C$ and $T = C \ln(\setsize{\supports}\setsize{\hyps}) / \epsilon^2$, Algorithm~\ref{alg:non-deterministic-multigroup} takes $2 T = \bigO{\ln(\setsize{\supports}\setsize{\hyps}) / \epsilon^2}$ samples from $\dist$ and returns an $\epsilon$-optimal solution to the multi-group learning problem $(\supports, \hyps)$.
\end{restatable}

\section{Empirical Results}
\label{section:experiments}
In this section, we study the empirical performance of batch multicalibration algorithms on real-world classification datasets.

\subsection{Experiment Setup}
We conduct three sets of experiments to evaluate different batch multicalibration algorithms.
The three sets of experiments we conduct correspond to three datasets: the UCI Adult Income dataset \cite{ucidataset}, a real-world dataset for predicting individuals' incomes based on the US Census, the UCI Bank Marketing dataset \cite{moro2014data}, a dataset for predicting whether an individual will subscribe to a bank's term deposit, and the Dry Bean Dataset \cite{koklu2020multiclass}, a dataset for predicting a dry bean's variety.

In every experiment, the performance of a multicalibration algorithm is measured based on the multicalibration violations of the average iterate and last iterate, where multicalibration violations are as defined in Definition~\ref{def:multicalibration}.
In every experiment, we empirically evaluate six multicalibration algorithms.
Four algorithms are based on no-regret best-response dynamics, using an empirical risk minimizer as the adversary and implementing either Hedge \cite{freund_decision-theoretic_1997} (Hedge-ERM), Prod \cite{littlestone87} (Prod-ERM), Optimistic Hedge \cite{rakhlinS13} (OptHedge-ERM), or Gradient Descent (GD-ERM) as the learner.
Two algorithms are based on no-regret no-regret dynamics, using either Hedge (Hedge-Hedge) or Optimistic Hedge (OptHedge-OptHedge) as both the learner and adversary.
We also note that the most commonly used multicalibration algorithms are included in these comparisons.
The original multicalibration algorithm of \cite{hebert-johnsonMulticalibrationCalibrationComputationallyIdentifiable2018} is equivalent to GD-ERM.
The revised, boosting-inspired, multicalibration algorithm of \cite{kimMultiaccuracyBlackBoxPostProcessing2019,dwork_outcome_2021} is equivalent to Hedge-ERM.

Additional details on datasets, hyperparameter tuning, and replication are deferred to Appendix~\ref{appendix:experiments}.

\subsection{Results}
The results of these experiments are summarized in Table~\ref{table:performance}, which reports the multicalibration errors of each algorithm's final iterate and average iterate.
In Appendix~\ref{appendix:experiments}, Figure~\ref{fig:evol} plots the evolution of training and testing multicalibration errors over the duration of the training process.
We identify two key trends that are statistically significant and hold consistently in all three experiments.

\begin{table}[ht]
	\begin{center}
		\small %
		\setlength{\tabcolsep}{4pt} %
		\renewcommand{\arraystretch}{1.4} %
	\begin{tabular}{ c c | c | c | c | }
	\cline{3-5}
	& & \multicolumn{3}{c|}{Error Measures} \\
	\cline{3-5}
	& & Train Error (Det) & Test Error (Det) & Test Error (Non-Det) \\
	\cline{2-5}
	\parbox[t]{2mm}{\multirow{6}{*}{\rotatebox[origin=c]{90}{UCI Adult}}}
	& \multicolumn{1}{|l|}{Hedge-Hedge (NRNR)} & 2.0e-2 $\pm$ 2.0e-3 &  \best{3.0e-2} $\pm$ 3.0e-3 & \best{2.3e-4 }$\pm$ 2.7e-5 \\  
	\cline{2-5}  
	& \multicolumn{1}{|l|}{OptHedge-OptHedge (NRNR)} & 7.0e-3 $\pm$ 0.0 & \best{2.7e-2} $\pm$ 3.0e-3 & 2.6e-4 $\pm$ 2.8e-5  \\
	\cline{2-5}
	& \multicolumn{1}{|l|}{OptHedge-ERM (NRBR)} & \best{0.0} $\pm$ 0.0 & 4.7e-2 $\pm$ 1.0e-3 & 4.8e-4 $\pm$ 9.0e-6 \\  
	\cline{2-5}
	& \multicolumn{1}{|l|}{Hedge-ERM (NRBR)} & \best{0.0} $\pm$ 0.0 & 6.4e-2 $\pm$ 1.0e-3 & 6.4e-4 $\pm$ 1.1e-5 \\  
	\cline{2-5}  
	& \multicolumn{1}{|l|}{Prod-ERM (NRBR)} & \best{0.0} $\pm$ 0.0 & 5.3e-2 $\pm$ 4.0e-3 & 5.3e-4 $\pm$ 4.4e-5 \\  
	\cline{2-5}
	& \multicolumn{1}{|l|}{GD-ERM (NRBR)} & 5.3e-2 $\pm$ 1.1e-2 & 8.3e-2 $\pm$ 3.0e-3 & 9.5e-4 $\pm$ 6.5e-5 \\  
	\cline{2-5}  
	\vspace{0mm}\\
	\cline{2-5}
	\parbox[t]{2mm}{\multirow{6}{*}{\rotatebox[origin=c]{90}{UCI Bank}}}
	& \multicolumn{1}{|l|}{Hedge-Hedge (NRNR)} & 2.4e-2 $\pm$ 1.0e-3 & 4.3e-2 $\pm$ 1.1e-2 & 5.3e-4 $\pm$ 1.2e-4 \\  
	\cline{2-5}
	& \multicolumn{1}{|l|}{OptHedge-OptHedge (NRNR)} & 1.3e-2 $\pm$ 1.0e-3 & \best{2.0e-2} $\pm$ 1.0e-3 & \best{2.1e-4 }$\pm$ 5.0e-6\\   
	\cline{2-5}
	& \multicolumn{1}{|l|}{OptHedge-ERM (NRBR)} & 2.0e-3 $\pm$ 1.0e-3 & \best{1.8e-2} $\pm$ 0.0 & 2.2e-4 $\pm$ 6.0e-6\\  
	\cline{2-5}
	& \multicolumn{1}{|l|}{Hedge-ERM (NRBR)} &2.0e-3 $\pm$ 0.0 & 5.2e-2 $\pm$ 1.0e-3 & 5.3e-4 $\pm$ 8.0e-6\\   
	\cline{2-5}  
	& \multicolumn{1}{|l|}{Prod-ERM (NRBR)} & \best{0.0} $\pm$ 0.0 & 4.6e-2 $\pm$ 3.0e-3 & 5.1e-4 $\pm$ 2.3e-5 \\  
	\cline{2-5}
	& \multicolumn{1}{|l|}{GD-ERM (NRBR)} & 8.0e-3 $\pm$ 1.0e-3 & 9.9e-2 $\pm$ 6.0e-3 & 1.1e-3 $\pm$ 7.1e-5 \\  
	\cline{2-5}  
	\vspace{0mm}\\
	\cline{2-5}
	\parbox[t]{2mm}{\multirow{6}{*}{\rotatebox[origin=c]{90}{Dry Bean}}}
	& \multicolumn{1}{|l|}{Hedge-Hedge (NRNR)} & 3.2e-2 $\pm$ 5.0e-3 & \best{4.6e-2} $\pm$ 4.0e-3 & \best{2.4e-5} $\pm$ 1.0e-6 \\  
	\cline{2-5}
	& \multicolumn{1}{|l|}{OptHedge-OptHedge (NRNR)} & 1.9e-2 $\pm$ 1.0e-3 & 5.3e-2 $\pm$ 1.0e-3 & 2.7e-5 $\pm$ 1.0e-6 \\   
	\cline{2-5}
	& \multicolumn{1}{|l|}{OptHedge-ERM (NRBR)} & 1.3e-2 $\pm$ 0.0 & \best{5.2e-2} $\pm$ 2.0e-3 & 2.6e-5 $\pm$ 1.0e-6\\   
	\cline{2-5}
	& \multicolumn{1}{|l|}{Hedge-ERM (NRBR)} & 1.4e-2 $\pm$ 0.0 & 5.5e-2 $\pm$ 1.0e-3 & 2.6e-5 $\pm$ 1.0e-6\\  
	\cline{2-5}  
	& \multicolumn{1}{|l|}{Prod-ERM (NRBR)} & 1.2e-2 $\pm$ 4.0e-3 & 6.5e-2 $\pm$ 1.6e-2 & 2.9e-5 $\pm$ 5.0e-6 \\  
	\cline{2-5}
	& \multicolumn{1}{|l|}{GD-ERM (NRBR)} & \best{6.0e-3} $\pm$ 0.0 & 7.6e-2 $\pm$ 1.0e-3 & 3.1e-5 $\pm$ 1.0e-6 \\  
	\cline{2-5}  
	\end{tabular}
	\end{center}
	\caption{Average ($\pm$ standard error) of multicalibration violations on UCI Adult Dataset (20 seeds), UCI Bank Marketing Dataset (5 seeds), and the Dry Bean Dataset (5 seeds). \emph{Train Error (Det)} and \emph{Test Error (Det)} evaluate the last iterate (deterministic predictor) on training and test splits; \emph{Test Error (Non-Det)} measures the average iterate (non-deterministic predictor) on the test split.}
	\label{table:performance}
\end{table}
	
\textbf{The last iterates of no-regret no-regret dynamics are surprisingly multicalibrated.}
On all datasets, the algorithms based on no-regret no-regret dynamics, namely Hedge-Hedge and OptHedge-OptHedge, consistently yield not only among the most multicalibrated randomized predictors (with their average iterate) but also the most multicalibrated deterministic predictors (with their last iterate).
This is surprising because the last iterate of these algorithms is not guaranteed to be multicalibrated, and only enjoys a theoretical advantage over no-regret best-response algorithms in terms of average iterate guarantees.
As corroborated by Figure~\ref{fig:evol}, this trend does not appear to be an artifact of early stopping or learning rates, but may rather indicate that their more stable adversary updates provide regularization to these algorithms.

\textbf{One's choice of no-regret algorithm matters.}
On all datasets, we see that the best multicalibration results are consistently achieved by algorithms that instantiate the Optimistic Hedge no-regret algorithm.
This is consistent with the theoretical results of \cite{rakhlinS13}, which show that  Optimistic Hedge converges faster than a standard Hedge in games.
We also see that the original multicalibration algorithm of \cite{hebert-johnsonMulticalibrationCalibrationComputationallyIdentifiable2018}, based on gradient descent, consistently attains the worst multicalibration errors, both in terms of average-iterate and last-iterate.
This is consistent with gradient descent being a theoretically less effective no-regret algorithm, as it is unstable near the boundaries of a probability simplex.

Due to the superficial similarity between boosting and multicalibration, the field has already begun adopting multicalibration algorithms with Hedge's multiplicative updates rather than gradient descent's additive ones, as suggested by \cite{kimMultiaccuracyBlackBoxPostProcessing2019}.
Our findings offer the first theoretical and empirical endorsement of this shift.
Moreover, our results suggest that practitioners should further explore the use of Optimistic Hedge-based algorithms and algorithms based on no-regret no-regret dynamics---even when one is only interested in deterministic predictors.

\section{Acknowledgements}
This work was supported in part by the National Science Foundation under grant CCF-2145898, a C3.AI Digital Transformation Institute grant, and the Mathematical Data Science program of the Office of Naval Research.
This work was partially done while Haghtalab and Zhao were visitors at the Simons Institute for the Theory of Computing.
The authors thank Aaron Roth for noting an error in an earlier version of the paper, and Kunhe Yang, Huijia Lin, Daniel Lee, Pranay Tankala, Christopher Jung, and Abhishek Shetty for valuable discussions.

\bibliographystyle{alpha}
\newcommand{\etalchar}[1]{$^{#1}$}
\newcommand{\nips}[1]{Advances in Neural Information Processing Systems #1}

\newpage
\appendix
\tableofcontents
\newpage
\section{Additional Background}
\label{appendix:additional}
In this section, we supplement our discussion of online learning with some additional notations and results.

\paragraph{No-regret algorithms.}
In the following lemma, we state the regret bound of the Hedge algorithm~\citep{freund_decision-theoretic_1997} for general choices of the learning rate $\eta \in (0, 1)$.
\begin{lemma}[Hedge \cite{kale2007efficient}]
	\label{lemma:hedge-basic-regret-bound-full}
	Consider an online learning problem on the simplex $\simplex_\numcls$ and any adversarial sequence of linear cost functions $\tsv{\cost}{1:T}$.
    If Hedge, with learning rate $\eta \in (0, 1)$, outputs $\tsv{\action}{1:T}$, then $\regret(\tsv{\action}{1:T}, \tsv{\cost}{1:T}) \leq T \eta + \frac{\ln(\numcls)}{\eta}$.
    If $\eta = \sqrt{\ln(\setsize{\actionset})/T}$, $\regret(\tsv{\action}{1:T}, \tsv{\cost}{1:T}) \leq 2 \sqrt{\ln(\numcls) T}$.
\end{lemma}
It is helpful to note that the learning rate of the Hedge algorithm also bounds how far its iterates move in the primal space.
\begin{lemma}[Hedge Iterate Stability]
	\label{lemma:movement-of-hedge}
	Consider an online learning problem on the interval $[0, 1]$ and any sequence of linear costs.
	Let $\tsv{\action}{1:T} \in [0, 1]$ be the actions that get picked by Hedge instantiated with learning rate $\eta \in (0, 1)$.
	Then, for every timestep $t \in [T]$, $\abs{\tsv{\action}{1} - \tsv{\action}{t}} \leq 2 \eta T$.
\end{lemma}
\begin{proof}
	By the triangle inequality, it suffices to prove that the learner's actions move by at most $2 \eta$ at each timestep.
	That is, $\abs{\tsv{\action}{t+1} - \tsv{\action}{t}} \leq 2 \eta$ at every timestep $t \in [T]$.
	By definition of the Hedge algorithm, the learner's action at timestep $t$ is given by
	\begin{align*}
		\tsv{\action}{t+1} = \frac{\exp(-\eta \sum_{\tau=1}^t \tsv{\cost}{\tau}(1))}{\exp(-\eta \sum_{\tau=1}^t \tsv{\cost}{\tau}(0))
		+
		\exp(-\eta \sum_{\tau=1}^t \tsv{\cost}{\tau}(1))}.
	\end{align*}
	We can rearrange this as $\tsv{\action}{t+1} = \tsv{\action}{t} \cdot \alpha \cdot \exp(-\eta \tsv{\cost}{t}(1))$, where $\alpha$ is a ratio of normalization terms defined as
	\begin{align*}
		\alpha & \asseq \frac{\exp(-\eta \sum_{\tau=1}^{t - 1} \tsv{\cost}{\tau}(0))
			+
			\exp(-\eta \sum_{\tau=1}^{t - 1} \tsv{\cost}{\tau}(1))
		}{
			\exp(-\eta \sum_{\tau=1}^{t - 1} \tsv{\cost}{\tau}(0)) \cdot \exp(-\eta \tsv{\cost}{t}(0))
			+
			\exp(-\eta \sum_{\tau=1}^{t - 1} \tsv{\cost}{\tau}(1)) \cdot \exp(-\eta \tsv{\cost}{t}(1))
		}.
	\end{align*}

	Note that $\alpha \geq 1$.
	Let us write $x_1 = \exp(-\eta \sum_{\tau=1}^{t - 1} \tsv{\cost}{\tau}(0))$, $x_2 = \exp(-\eta \sum_{\tau=1}^{t - 1} \tsv{\cost}{\tau}(1))$, $x_3 = \exp(-\eta \tsv{\cost}{t}(0))$ and $x_4 = \exp(-\eta \tsv{\cost}{t}(1))$, noting that the values $x_1, x_2, x_3, x_4$ are all positive.
	Observe that $\frac{x_1 + x_2}{x_1 \cdot x_3 + x_2 \cdot x_4} \leq \max\bsetflat{\frac{1}{x_3}, \frac{1}{x_4}}$.
	To see this, suppose without loss of generality that $x_4 \geq x_3$.
	Then $x_2 \leq \frac{x_2 \cdot x_4}{x_3}$, which implies that $x_1 + x_2 \leq \frac{x_1 \cdot x_3 + x_2 \cdot x_4}{x_3}$, or equivalently, $\frac{x_1 + x_2}{x_1 \cdot x_3 + x_2 \cdot x_4} \leq \frac{1}{x_3}$.
	Substituting in our values for $x_1, x_2, x_3, x_4$, we therefore have
	\begin{align*}
		1 \leq \alpha & \leq \max\bsetflat{\frac{1}{\exp(-\eta \tsv{\cost}{t}(0))}, \frac{1}{\exp(-\eta \tsv{\cost}{t}(1))}} \leq \frac{1}{\exp(-\eta)}.
	\end{align*}
	The second inequality is because costs are bounded in $[0, 1]$.

	This gives the iterate movement bound of
	\begin{align*}
		\abs{\tsv{\action}{t+1} - \tsv{\action}{t}}
		& \le \max \bsetflat{\abs{\tsv{\action}{t} \frac{\exp(-\eta \tsv{\cost}{t}(1))}{\exp(-\eta)} - \tsv{\action}{t}}, \abs{\tsv{\action}{t} \exp(-\eta \tsv{\cost}{t}(1)) - \tsv{\action}{t}}} \\
		& \le \max \bset{\abs{\tsv{\action}{t} \exp(\eta) - \tsv{\action}{t}}, \abs{\eta \tsv{\cost}{t}(1)}}.
	\end{align*}
	Here, the second inequality applies the fact that $\abs{e^{-x} - 1} \le \abs{x}$ for any $x \ge 0$ to the right-hand value.
	Since $\eta \in (0, 1)$, we can use the fact that $\abs{\exp(\eta) - 1} \le 2 \eta$.
	Thus, we attain the desired bound $\abs{\tsv{\action}{t+1} - \tsv{\action}{t}} \leq 2 \eta$.
\end{proof}
We can modify the Hedge algorithm to obtain \emph{strongly adaptive} regret bounds that provides guarantees for every contiguous interval.
\begin{lemma}[Strongly Adaptive Regret \cite{stronglyadaptive}]
	\label{lemma:hedge-strongly-adaptive}
	Consider an online learning problem on the simplex $\simplex_\numcls$.
	There is a modified Hedge algorithm \cite{stronglyadaptive} that, for any adversarial sequence of linear costs $\tsv{\cost}{1:T}$, outputs actions $\tsv{\action}{1:T}$ such that, for every interval $T_1 < T_2 \leq T$, 
	\begin{align*}
		\regret(\tsv{\act}{T_1:T_2}, \tsv{\cost}{T_1:T_2})
		 & \asseq \sum_{t=T_1}^{T_2} \tsv{\cost}{t}(\tsv{\act}{t}) - \min_{\act^* \in \actionset} \sum_{t=T_1}^{T_2} \tsv{\cost}{t}(\act^*) \\
		 & \leq \bigO{\para{\sqrt{\ln(\numcls)} + \ln(T)} \sqrt{T_2 - T_1}}.
	\end{align*}
\end{lemma}
There are also online learning algorithms that provide \emph{second-order} regret bounds.
We present the regret bound of one such algorithm, Prod \cite{secondorder}.
\begin{lemma}[Second-Order Regret Bound of Prod \cite{secondorder}]
	\label{lemma:second-order-regret-bound}
	Consider an online learning problem on the simplex $\actions = \simplex_\numcls$ and any adversarial sequence of linear costs $\tsv{\cost}{1:T}$. 
    If the Prod algorithm of \cite{secondorder} is used and outputs the actions $\tsv{\action}{1:T}$, then 
	\begin{align*}
		\regret(\tsv{\action}{1:T}, \tsv{\cost}{1:T}) \leq \bigO{\max_{\action^* \in \actions} \sqrt{\ln(\numcls) \sum_{t=1}^T (\tsv{\cost}{t}(\tsv{\action}{t}) - \tsv{\cost}{t}(\action^*))^2}}.
	\end{align*}
\end{lemma}
We also know that there are adversarial bandit algorithms with sublinear regret guarantees.
\begin{lemma}[Semi-Bandit Regret Bounds \cite{haghtalabOnDemandSamplingLearning2022}]
	\label{lemma:semibandit}
	Consider an online learning problem on the simplex $\simplex_\numcls$, and any adversarial sequence of linear costs $\tsv{\cost}{1:T}$.
	Let $\cI$ be a partition of $[\numcls]$ into $r$ groups. %
	There is a high-probability variant of \cite{mannorFromBanditsExperts2011}'s ELP algorithm that, for any adversarial sequence of costs $\tsv{\cost}{1:T}$, outputs a sequence of mixtures $\tsv{\action}{1:T} \in \simplex_\numcls$ such that, with probability at least $1 - \delta$,
	\begin{align*}
		\regret(\tsv{\action}{1:T}, \tsv{\cost}{1:T})
		 & \leq \bigO{\sqrt{r \ln(\numcls / \delta) T}}.
	\end{align*}
        Moreover, after each time the algorithm chooses $\tsv{\action}{t}$, the algorithm samples an integer $\tsv{i}{t} \sim \tsv{\action}{t}$ (unseen by the adversary).
        Let $\cI(\tsv{i}{t})$ is the group in $\cI$ that $\tsv{i}{t}$ belongs to.
        The algorithm will only ever observe the components of its cost vector corresponding to $\cI(\tsv{i}{t})$:  $\tsv{\bset{\tsv{\cost}{\tau}(\delta_i) \mid i \in I(\tsv{i}{t})}}{t}$.
\end{lemma}

\paragraph{Best-response algorithms.}
We can efficiently find an $\epsilon$ best-response to the expected value of a stochastic cost function using agnostic learning oracles.
For example, computing a single $\epsilon$ best-response requires at most $\bigOsmol{\ln(\setsize{\actionset})/\epsilon^2}$ samples by uniform convergence.
In game dynamics, we will often need to provide best-responses for a sequence of stochastic cost functions.
These sequences are usually \emph{adaptive}, in that which stochastic cost functions appear later in the sequence depending on how we responded to previous stochastic cost functions.
Adaptive data analysis provides sample-efficient algorithms for these settings.
\begin{restatable}[Adaptive Data Analysis~\cite{bassilyAlgorithmicStabilityAdaptive2021} Corollary 6.4]{lemma}{noisymaxexistence}
	\label{lemma:noisymaxexistence}
	There is an algorithm that, for any adaptive sequence of stochastic costs $\tsv{\ncost}{1:T}$, is guaranteed with probability $1 - \delta$ to $\epsilon$-best respond to each cost in $\tsv{\bset{\risk_{\dist, \tsv{\ncost}{t}}}}{1:T}$ while drawing at most
	\begin{align*}
		\bigO{\frac{\sqrt{T}}{\epsilon^2}\ln\para{\frac{\setsize{\actionset}}{\epsilon} } \ln^{3/2} \para{\frac{1}{\epsilon\delta} }} \approx \bigOtilde{\frac{\sqrt{T} \ln(\setsize{\actionset}/\delta)}{\epsilon^2}}
	\end{align*}
	samples from $\dist$. Here, $\bigOtilde{\cdot}$ suppresses $\ln(1/\epsilon)$ and $\ln^{1/2}(1/\delta)$ factors.
\end{restatable}

\begin{remark}
It often suffices, for our results, that a sequence of actions $\tsv{\action}{1:T}$ is \emph{on average} $\epsilon$-best responding to a cost sequence $\tsv{\cost}{1:T}$; that is, $\sum_{t=1}^T \tsv{\cost}{t}(\tsv{\action}{t}) \leq \sum_{t=1}^T \min_{\action^* \in \actionset} \tsv{\cost}{t}(\action^*) + T \epsilon$.
Thus, it may be possible to use more efficient minimization oracles than Lemma~\ref{lemma:noisymaxexistence} in our algorithms.
\end{remark}
\newpage
\section{Proofs for Section~\ref{section:dynamics}}
We first recall our characterization of multi-objective learning as a two-player zero-sum game.
In this game, a \emph{learner} player chooses a non-deterministic hypothesis $\rhyp \in \simplex(\hyps)$ and an \emph{adversary} player chooses a joint distribution over data distributions and \lobjs{} $\rloss \in \simplex(\dists \times \objs)$.
The payoff of the game is the expected \lobjv{} $\risk_{\rloss}(\rhyp)$.
In single-distribution multi-objective learning problems where the adversary only has one data distribution $\dist$ to choose from, we sometimes write $\rloss \in \simplex(\objs)$ for simplicity.
In online multi-objective learning, the adversary does not have control over which data distribution $\dist$ is chosen by nature.
In these cases, the adversary only chooses \lobjs{} $\rloss \in \simplex(\objs)$ and the game payoff function becomes $\risk_{\dist, \rloss}(\rhyp)$.

\subsection{Game Dynamics}
\label{appendix:maindynamics}

We now prove formalizations of Lemma~\ref{lemma:nrnr-dynamics}, Lemma~\ref{lemma:nrbr-dynamics}, Lemma~\ref{lemma:find}, and Lemma~\ref{lemma:brnr-dynamics} from Section~\ref{section:dynamics}.

The following is a formal restatement of Lemma~\ref{lemma:nrnr-dynamics}.
\begin{restatable}[No-Regret vs. No-Regret]{lemma}{nrnr-dynamics-formal}
	\label{lemma:nrnr-dynamics-formal}
	Consider a multi-objective learning problem $(\dists, \objs, \hyps)$, and two sequences $\tsv{\rhyp}{1:T} \in \simplex(\hyps)$ and $\tsv{\rloss}{1:T} \in \simplex(\dists \times \objs)$.
	Suppose $\regret_{B}\para{\tsv{\rhyp}{1:T}, \tsv{\bsetflat{\risk_{\tsv{\rloss}{t}}(\cdot)}}{1:T}} \leq T \epsilon$ and $\regret\paraflat{\tsv{\rloss}{1:T}, \tsv{\bsetflat{1 - \risk_{(\cdot)}(\tsv{\rhyp}{t})}}{1:T}} \leq T \epsilon$ where $B \in \reals$.
	Then, the non-deterministic hypothesis $\overline{\rhyp} \in \simplex(\hyps)$ defined as $\overline{\rhyp} \asseq \text{Uniform}(\tsv{\rhyp}{1:T})$ satisfies $\risk^*(\overline{\rhyp}) \leq B + 2\epsilon$.
	If the baseline $B$ is the min-max baseline $\weakbaseline$, then $\overline{\rhyp}$ is a $2\epsilon$-optimal solution for the problem $(\dists, \objs, \hyps)$.
\end{restatable}
\begin{proof}
	By connecting the adversary's regret bound and the learner's (weak) regret bounds of
	\begin{align*}
		\max_{\dist^* \in \dists, \loss^* \in \objs} \sum_{t=1}^T \risk_{\dist^*, \loss^*}(\tsv{\rhyp}{t}) - T \epsilon
		\leq \sum_{t=1}^T \risk_{\tsv{\rloss}{t}}(\tsv{\rhyp}{t}) \text{  and  }
		\sum_{t=1}^T \risk_{\tsv{\rloss}{t}}(\tsv{\rhyp}{t})
		\leq T (\epsilon + B),
	\end{align*}
	we directly observe that $\max_{\dist^* \in \dists, \loss^* \in \objs} \frac{1}{T} \sum_{t=1}^T \risk_{\dist^*, \loss^*}(\tsv{\rhyp}{t}) \leq 2 \epsilon + B$.
	Linearity of expectation allows us to equate $\frac{1}{T} \sum_{t=1}^T \risk_{\dist^*, \loss^*}(\tsv{\rhyp}{t}) = \risk_{\dist^*, \loss^*}(\overline{p}))$, which yields our first claim that $\risk^*(\overline{p}) \leq 2 \epsilon + B$.
	The second claim just plugs $\weakbaseline$ into the previous inequality to obtain the definition of a $2 \epsilon$-optimal solution.
\end{proof}
The following is a formal restatement of Lemma~\ref{lemma:nrbr-dynamics}.
\begin{restatable}[No-Regret vs. Best-Response]{lemma}{nrbr-dynamics-formal}
	\label{lemma:nrbr-dynamics-formal}
	Consider a multi-objective learning problem $(\dists, \objs, \hyps)$ and two sequences $\tsv{\rhyp}{1:T} \in \simplex(\hyps)$ and $\tsv{\rloss}{1:T} \in \simplex(\dists \times \objs)$.
	Suppose $\regret_{B}\para{\tsv{\rhyp}{1:T}, \tsv{\bsetflat{\risk_{\tsv{\rloss}{t}}(\cdot)}}{1:T}} \leq T \epsilon$ where $B \in \reals$.
	Further suppose $\tsv{\rloss}{1:T}$ are, on average, $\epsilon$ best-responses to the cost functions $\tsv{\bsetflat{1 - \risk_{(\cdot)}(\tsv{\rhyp}{t})}}{1:T}$.
	Then there exists a $t \in [T]$ where $\risk^*(\tsv{\rhyp}{t}) \leq B + 2\epsilon$.
	If $B = \weakbaseline$, then $\tsv{\rhyp}{t}$ is a $2 \epsilon$-optimal solution for the problem $(\dists, \objs, \hyps)$.
\end{restatable}
\begin{proof}
	Assume the contrary, namely that $\risk^*(\tsv{\rhyp}{t}) > B + 2 \epsilon$ at all $t \in [T]$.
	Then
	\begin{align*}
		T \epsilon & \geq \sum_{t=1}^T \risk_{\tsv{\rloss}{t}}(\tsv{\rhyp}{t}) - T B,\; & \text{(bounded regret w.r.t. $B$)}             \\
		           & \geq  \sum_{t=1}^T \risk^*(\tsv{\rhyp}{t}) - T \epsilon - T B,\;   & \text{(on average $\epsilon$-best-responding)} \\
		           & > \sum_{t=1}^T (B + 2 \epsilon - \epsilon - B),\;                  & \text{(assumption to contrary)}
	\end{align*}
	gives us a contradiction that $T(2 \epsilon - 2 \epsilon) > 0$.
	This proves our first claim.
	The second claim follows by plugging $B = \weakbaseline$ into our inequality $\risk^*(\tsv{\rhyp}{t}) \leq B + 2\epsilon$ to obtain the definition of a $2 \epsilon$-optimal solution.
\end{proof}
The following is a formal restatement of Lemma~\ref{lemma:find}.
\begin{restatable}{lemma}{find-formal}
	\label{lemma:find-formal}
	Consider a multi-objective learning problem $(\dists, \objs, \hyps)$ and a sequence $\tsv{\rhyp}{1:T} \in \simplex(\hyps)$ of hypotheses where at least one hypothesis $\tsv{\rhyp}{t}$ is $\epsilon$-optimal.
	We can find a $3 \epsilon$-optimal solution $\tsv{\rhyp}{t^*} \in \tsv{\rhyp}{1:T}$ with probability at least $1 - \delta$ by taking only $\bigOsmol{\epsilon^{-2} \ln(T \setsize{\dists} \cdot \setsize{\objs} / \delta)}$ samples from each distribution $\dist \in \dists$.
	If we further have access to a sequence $\tsv{\rloss}{1:T} \in \simplex(\dists \times \objs)$ of $\epsilon$ best-responses to the cost functions $\tsv{\bsetflat{1 - \risk_{(\cdot)}(\tsv{\rhyp}{t})}}{1:T}$, then a $5\epsilon$-optimal solution can be found with only $\bigOsmol{\epsilon^{-2} \ln(T / \delta)}$ samples from each $\dist \in \dists$.
\end{restatable}
\begin{proof}
	Let the $\epsilon$-optimal solution be denoted $\tsv{\rhyp}{t_{\text{good}}}$.
	This claim is a simple uniform convergence argument.
	Suppose that $N$ i.i.d. samples $\bx_\distribution$ are drawn from each distribution $\distribution \in \dists$.
	We can then use $\hat{\risk}_{\dist, \loss}(\rhyp)$ to denote the empirical value of the loss function $\loss$ of hypothesis $\rhyp$ on distribution $\dist$, as approximated by the samples $\bx_\dist$.
	We similarly define $\hat{\risk}^*$ as the empirical analog of the multi-objective value $\risk^*$.
	Since the range of each $\lobj{}$ is bounded in $[0, 1]$, Chernoff's bound guarantees that for any $\rloss \in \simplex(\dists \times \objs)$ and $\rhyp \in \simplex(\hyps)$ we have $\Pr_{\bx}\left[\abs{\hat{\risk}_{\rloss}(\rhyp) - \risk_{\rloss}(\rhyp)} \geq \epsilon \right] \leq 2 \exp\left(-2 N \epsilon^2 \right)$.
	Taking a union bound over each $\dist \in \dists, t \in [T]$ and $\loss \in \objs$ gives
	\begin{align*}
		\Pr_{\bx}\left[\exists \dist \in \dists, \loss \in \objs, t \in [T]: \abs{\hat{\risk}_{\dist, \loss}(\tsv{\rhyp}{t}) - \risk_{\dist, \loss}(\tsv{\rhyp}{t})} \geq \epsilon \right] \leq 2 \setsize{\dists} \setsize{\objs} T \exp\left(-2 N \epsilon^2 \right).
	\end{align*}
	Letting $t^* = \argmin_{t \in [T]} \hat{\risk}^*(\tsv{\rhyp}{t})$ and $N = \bigO{\epsilon^{-2} \ln(T \setsize{\dists} \cdot \setsize{\objs} / \delta )}$ guarantees
	\begin{align*}
		\risk^*_{\dist^*, \loss^*}(\tsv{\rhyp}{t^*}) - \epsilon
		 & \leq \hat{\risk}^*(\tsv{\rhyp}{t^*})
		\leq \hat{\risk}^*(\tsv{\rhyp}{t_{\text{good}}})
		\leq \risk^*(\tsv{\rhyp}{t_{\text{good}}}) + \epsilon
		\leq \min_{\hyp^* \in \hyps} \risk^*(\hyp^*) + 2 \epsilon,
	\end{align*}
	with probability at least $1 - \delta$.
	We can therefore return $\tsv{\rhyp}{t^*}$ as our solution.
 
	To prove our second claim, take instead a union bound over each $t \in [T]$ so that
	\begin{align*}
		\Pr_{\bx}\left[\exists t \in [T]: \abs{\hat{\risk}_{\tsv{\rloss}{t}}(\tsv{\rhyp}{t}) - \risk_{\tsv{\rloss}{t}}(\tsv{\rhyp}{t})} \geq \epsilon \right] \leq 2 T \exp\left(-2 N \epsilon^2 \right).
	\end{align*}
	Letting $t^* = \argmin_{t \in [T]} \risk_{\tsv{\rloss}{t}}(\tsv{\rhyp}{t})$ and $N = \bigO{\epsilon^{-2} \ln(T / \delta )}$ guarantees with probability at least $1 - \delta$ that we can return $\tsv{\rhyp}{t^*}$ as our solution, as
	\begin{align*}
		\underbrace{\risk^*(\tsv{\rhyp}{t^*}) - 2 \epsilon
		\leq \risk_{\tsv{\rloss}{t^*}}(\tsv{\rhyp}{t^*}) - \epsilon}_{\text{(since $\tsv{\rloss}{t^*}$ is an $\epsilon$-best response.)}}
		\leq \hat{\risk}_{\tsv{\rloss}{t^*}}(\tsv{\rhyp}{t^*})
		 & \leq \hat{\risk}_{\tsv{\rloss}{t_{\text{good}}}}(\tsv{\rhyp}{t_{\text{good}}})
		\leq \risk_{\tsv{\rloss}{t_{\text{good}}}}(\tsv{\rhyp}{t_{\text{good}}}) + \epsilon \\
		 & \leq \risk^*(\tsv{\rhyp}{t_{\text{good}}}) + 2\epsilon
		\leq \min_{\hyp^* \in \hyps} \risk^*(\hyp^*) + 3\epsilon.
	\end{align*}
\end{proof}

The following is a formal restatement of Lemma~\ref{lemma:brnr-dynamics}.
\begin{restatable}[Best-Response vs. No-Regret]{lemma}{brnr-dynamics-formal}
	\label{lemma:brnr-dynamics-formal}
	Consider a online multi-objective learning problem $(\dists, \objs, \hyps)$ and the sequences $\tsv{\rhyp}{1:T} \in \simplex(\hyps)$, $\tsv{\rloss}{1:T} \in \simplex(\objs)$, and $\tsv{\dist}{1:T} \in \dists$.
	Suppose $\regret\paraflat{\tsv{\rloss}{1:T}, \tsv{\bsetflat{1 - \risk_{\tsv{\dist}{t}, (\cdot)}(\tsv{\rhyp}{t})}}{1:T}} \leq T \epsilon$ and $\tsv{\rhyp}{1:T} \in \simplex(\hyps)$ are distribution-free $\epsilon'$ best-responses to $\tsv{\rloss}{1:T}(\cdot, (x, y))$.
	Then, $\tsv{\rhyp}{1:T}$ are $(\epsilon + \epsilon')$-optimal on $\tsv{\dist}{1:T}$.
\end{restatable}
\begin{proof}
	Let $\dists^*$ be the set of all data distributions over $\features \times \labels$.
	By linearity of expectation, the maximum $\argmax_{\dist \in \dists^*} \risk_{\dist}(\tsv{\rhyp}{t})$ can always be attained on a degenerate distribution supported only on a single point $(x, y)$.
	Thus, the learner's distribution-free best-response guarantee provides the bound
	\begin{align*}
		\risk_{\tsv{\dist}{t}, \tsv{\loss}{t}}(\tsv{\rhyp}{t}) - \epsilon' \leq
		 & \max_{x \in \features, y \in \labels} \tsv{\rloss}{t}(\tsv{\rhyp}{t}, (x, y)) - \epsilon'
		\leq \max_{\dist^* \in \dists^*} \min_{\hyp^* \in \hyps} \risk_{\dist^*, \tsv{\loss}{t}}(\hyp^*).
	\end{align*}
	We can obtain our desired claim by recovering \eqref{eq:def-online-multiobj-max-min} with a triangle inequality between the adversary's regret bound and a summation of the previous inequality over $t \in [T]$:
	\begin{align*}
		\max_{\loss^* \in \objs} \frac{1}{T} \sum_{t=1}^T \risk_{\tsv{\dist}{t}, \loss^*}(\tsv{\rhyp}{t}) - T \epsilon
		 & \leq \frac{1}{T} \sum_{t=1}^T \risk_{\tsv{\dist}{t}, \tsv{\rloss}{t}}(\tsv{\rhyp}{t}),      \\
		\frac{1}{T} \sum_{t=1}^T \risk_{\tsv{\dist}{t}, \tsv{\loss}{t}}(\tsv{\rhyp}{t}) - T \epsilon'
		 & \leq \max_{\dist^* \in \dists^*}\min_{\hyp \in \hyps} \risk_{\dist^*, \tsv{\loss}{t}}(\hyp)
		\leq \max_{\dist^* \in \dists^*} \min_{\hyp \in \hyps} \max_{\loss^* \in \objs} \risk_{\dist^*, \loss^*}(\hyp).
	\end{align*}
\end{proof}

\section{Proofs for Theorem~\ref{theorem:online-learning-linear-objectives} and Generalizations}
\label{appendix:dynamics}
In this section, we prove generalizations of Theorem~\ref{theorem:multicalibration-objectives} and Theorem~\ref{theorem:multicalibration-distribution-free-best-response} from Section~\ref{section:dynamics} that address a broad range of multi-objective learning problems with separable \lobjs{}.
Specifically, we will see that, like with multicalibration \lobjs{}, having a learner's cost functions all be separable \lobjs{} introduces two major advantages.

\paragraph{Advantage 1: There exist no-regret learning strategies that do not require one to randomize their actions, or even sample any data, yet still guarantee domain-independent regret bounds.}
The following theorem, which proves this advantage of separable \lobjs{}, is a generalization of Theorem~\ref{theorem:multicalibration-objectives}.
\onlinelearninglinearobjectives*
\begin{proof}
	First, suppose that $\dists = \bset{\dist}$, that is $\setsize{\dists} = 1$.
	Then, for any $x \in \features$, $\tsv{\cost}{1:T}_x$ are linear costs and thus $\calg$ guarantees $\regret(\tsv{\bsetflat{\tsv{\hyp}{t}(x)}}{1:T}, \tsv{\cost}{1:T}_x) \leq R(T)$.
	By the law of total expectation,
	\begin{align*}
		2 R(T) 
		&\geq \EEs{x \sim \dist^*}{\sum_{t=1}^T \EEs{\loss \sim \tsv{\rloss}{t}}{\frac{d\dist(x)}{d\dist^*(x)} f_\loss(x, \tsv{\hyp}{t})} \cdot \tsv{\hyp}{t}(x)
		- \min_{w^* \in \cW} \sum_{t=1}^T \EEs{\loss \sim \tsv{\rloss}{t}}{\frac{d\dist(x)}{d\dist^*(x)}f_\loss(x, \tsv{\hyp}{t})} \cdot w^*} \\
		&= \EEs{x \sim \dist}{\sum_{t=1}^T \EEs{\loss \sim \tsv{\rloss}{t}}{f_\loss(x, \tsv{\hyp}{t})} \cdot \tsv{\hyp}{t}(x)}
		- \EEs{x \sim \dist}{\min_{w^* \in \cW} \sum_{t=1}^T \EEs{\loss \sim \tsv{\rloss}{t}}{f_\loss(x, \tsv{\hyp}{t})} \cdot w^*}\\
		&= \sum_{t=1}^T \EEs{x \sim \tsv{\dist}{t}, \loss \sim \tsv{\rloss}{t}}{f_\loss(x, \tsv{\hyp}{t}) \cdot \tsv{\hyp}{t}(x)} 
		- \min_{\hyp^* \in \hyps} \sum_{t=1}^T \EEs{x \sim \tsv{\dist}{t}, \loss \sim \tsv{\rloss}{t}}{f_\loss(x, \tsv{\hyp}{t}) \cdot \hyp^*(x)}.
	\end{align*}

	If $\setsize{\dists} > 1$, $\calg$ further guarantees
	\begin{align*}
		\Pr_{\dist^*}(x) \cdot \regret(\tsv{\bsetflat{\tsv{\hyp}{t}(x)}}{1:T}, \tsv{\cost}{1:T}_x)
		&\leq \Pr_{\dist^*}(x) \cdot \max\limits_{w \in \cW} R(T) \sqrt{\frac{1}{T} \sum_{t=1}^T (\tsv{\cost}{t}_x(\tsv{\hyp}{t}(x)) - \tsv{\cost}{t}_x(w))^2} \\
		& \leq \Pr_{\dist^*}(x) \cdot R(T) \sqrt{\frac{1}{2T} \sum_{t=1}^T \para{\frac{d\tsv{\dist}{t}(x)}{d\dist^*(x)}}^2}\\
		& \leq R(T) \cdot \sqrt{\frac{1}{2T} \sum_{t=1}^T \Pr_{\tsv{\dist}{t}}(x)^2}.
	\end{align*}
	For simplicity, we assumed above that $\tsv{\dist}{1:T}$ each have discrete support.
	The law of total expectation gives
	\begin{align*}
		&\sqrt{2} \cdot R(T) \cdot \sum_{x \in \features} \sqrt{\frac{1}{2T} \sum_{t=1}^T \Pr_{\tsv{\dist}{t}}(x)^2}\\
		&\geq \EEs{x \sim \dist^*}{\sum_{t=1}^T \EEs{\loss \sim \tsv{\rloss}{t}}{\frac{d\tsv{\dist}{t}(x)}{d\dist^*(x)} f_\loss(x, \tsv{\hyp}{t})} \cdot \tsv{\hyp}{t}(x)
		- \min_{w^* \in \cW} \sum_{t=1}^T \EEs{\loss \sim \tsv{\rloss}{t}}{\frac{d\tsv{\dist}{t}(x)}{d\dist^*(x)}f_\loss(x, \tsv{\hyp}{t})} \cdot w^*} \\
		&= \sum_{t=1}^T \EEs{x \sim \tsv{\dist}{t}, \loss \sim \tsv{\rloss}{t}}{f_\loss(x, \tsv{\hyp}{t}) \cdot \tsv{\hyp}{t}(x)} 
		- \min_{\hyp^* \in \hyps} \sum_{t=1}^T \EEs{x \sim \tsv{\dist}{t}, \loss \sim \tsv{\rloss}{t}}{f_\loss(x, \tsv{\hyp}{t}) \cdot \hyp^*(x)}.
	\end{align*}
	To bound the left-hand term, we can apply Cauchy-Schwartz to get
	\begin{align*}
		\sum_{x \in \features} \sqrt{\frac{1}{T} \sum_{t=1}^T \Pr_{\tsv{\dist}{t}}(x)^2}
		= \sum_{x \in \features} \sqrt{\sum_{\dist \in \dists} \frac{T_\dist}{T} \Pr_{\dist}(x)^2} 
		\leq \sum_{x \in \features} \sum_{\dist \in \dists} \Pr_{\dist}(x) \sqrt{\frac{T_\dist}{T}} 
		=  \sum_{\dist \in \dists} \sqrt{\frac{T_\dist}{T}} \sum_{x \in \features} \Pr_{\dist}(x)  
		\leq \sqrt{\setsize{\dists}},
	\end{align*}
	where $T_\dist$ is the number of timesteps $t \in [T]$ where $\tsv{\dist}{t} = \dist$.
	Thus, whether $\setsize{\dists} = 1$ or $\setsize{\dists}  > 1$, we have
	\begin{align*}
		\sum_{t=1}^T \EEs{x \sim \tsv{\dist}{t}, \loss \sim \tsv{\rloss}{t}}{f_\loss(x, \tsv{\hyp}{t}) \cdot \tsv{\hyp}{t}(x)} 
		- \min_{\hyp^* \in \hyps} \sum_{t=1}^T \EEs{x \sim \tsv{\dist}{t}, \loss \sim \tsv{\rloss}{t}}{f_\loss(x, \tsv{\hyp}{t}) \cdot \hyp^*(x)}
		\leq 2 \sqrt{\setsize{\dists}} R(T).
	\end{align*}

	Adding and subtracting the term $\sum_{t=1}^T \EEs{(x,y) \sim \tsv{\dist}{t}, \loss \sim \tsv{\rloss}{t}}{f_\loss(x, \tsv{\hyp}{t}(x)) (g_\loss(y))}$, we have
	\begin{align*}
		 & \sum_{t=1}^T \EEs{\substack{(x, y) \sim \tsv{\dist}{t} \\ \loss \sim \tsv{\rloss}{t} }}{f_\loss(x, \tsv{\hyp}{t}(x))(\tsv{\hyp}{t}(x) - g_\loss(y))}  \leq 2 \sqrt{\setsize{\dists}} R(T) + \min_{\hyp^* \in \hyps} \sum_{t=1}^T \EEs{\substack{(x, y) \sim \tsv{\dist}{t} \\ \loss \sim \tsv{\rloss}{t} }}{f_\loss(x, \tsv{\hyp}{t}(x))(\hyp^*(x) - g_\loss(y))}.
	\end{align*}
	Adding the constant $c$ to both sides, we have
	\begin{align*}
		\sum_{t=1}^T \risk_{\tsv{\dist}{t}, \tsv{\rloss}{t}}(\tsv{\hyp}{t})
		& \leq 2 \sqrt{\setsize{\dists}} R(T) + T \min_{\hyp^* \in \hyps} \frac{1}{T} \sum_{t=1}^T \EEs{\substack{(x, y) \sim \tsv{\dist}{t} \\ \loss \sim \tsv{\rloss}{t} }}{c + f_\loss(x, \tsv{\hyp}{t}(x))(\hyp^*(x) - g_\loss(y))} \\
		& \leq 2 \sqrt{\setsize{\dists}} R(T) + T \min_{\hyp^* \in \hyps} \frac{1}{T} \max_{\hyp' \in \hyps} \sum_{t=1}^T \EEs{\substack{(x, y) \sim \tsv{\dist}{t} \\ \loss \sim \tsv{\rloss}{t} }}{c + f_\loss(x, \hyp'(x))(\hyp^*(x) - g_\loss(y))}.
	\end{align*}
\end{proof}

We can directly prove Theorem~\ref{theorem:multicalibration-objectives} by instantiating the algorithm of Theorem~\ref{theorem:online-learning-linear-objectives} with Hedge as the no-regret learning algorithm $\calg$.
\multicalibrationobjectives*
\begin{proof}
	First, we observe that every \lobj{} in multicalibration is separable.
	Specifically, for every $i \in \bsetflat{\pm 1}, j \in [\numcls], S \in \supports, v \in \levelsets^\numcls$, the corresponding \lobj{} $\loss_{i, j, S, v} \in \mcobjs$ can be written in the form $\loss_{i,j,S,v}(\hyp, (x,y)) = c + f_{i, j, S, v}(x, \hyp(x)) \cdot (\hyp(x) - g_{i, j, S, v}(y))$ where $c = 0.5$, $\cW = \simplex_\numcls \cup - \simplex_\numcls$, $g_{i, j, S, v}(y) = \delta_y$ and $f_{i, j, S, v}(x, \hyp(x)) = \delta_j \cdot 0.5 \cdot i \cdot 1[\hyp(x) \in v, x \in S]$.
	Next, we recall that by Lemma~\ref{lemma:hedge-basic-regret-bound-full}, the Hedge algorithm guarantees a regret bound of $4\sqrt{\ln(\numcls) T}$ for any linear costs on $\cW = \simplex_\numcls \cup -\simplex_\numcls$ (Lemma~\ref{lemma:hedge-basic-regret-bound-full}).
	The algorithm of Theorem~\ref{theorem:online-learning-linear-objectives}, for any stochastic costs $\tsv{\rloss}{1:T} \in \simplex(\mcobjs)$, outputs $\tsv{\hyp}{1:T}$  where
	\begin{align*}
		\sum_{t=1}^T \tsv{\rloss}{t}(\tsv{\hyp}{t})
		 & \leq 8\sqrt{\ln(\numcls)T} + T \cdot \min_{\hyp^* \in \hyps} \max_{\loss^* \in \objs} \max_{\hyp' \in \hyps} \EEs{(x,y) \sim \dist}{c + f_{\loss^*}(x, \hyp'(x)) \cdot (\hyp^*(x) - g(y))} \\
		 & \leq8\sqrt{\ln(\numcls)T} + T \cdot  \max_{\loss^* \in \objs} \max_{\hyp' \in \hyps} \EEs{(x,y) \sim \dist}{c + f_{\loss^*}(x, \hyp'(x)) \cdot (g(y) - g(y))}                             \\
		 & \leq 8\sqrt{\ln(\numcls)T} + T c,
	\end{align*}
	if we choose $\hyp^*(x) = \EEsc{(x,y) \sim \dist}{\tsv{g}{t}(y)}{x}$.
	Since the adversary is choosing from \lobjs{} symmetric around $c$, the min-max baseline $\weakbaseline$ is at least $c$, meaning that $\weakregret(\tsv{\hyp}{1:T}, \tsv{\bsetflat{\risk_{\dist, \tsv{\rloss}{t}}(\cdot)}}{1:T}) \leq 8\sqrt{\ln(\numcls)T}$.
\end{proof}

\paragraph{Advantage 2: There (almost) always exists distribution-free best responses.}
This advantage is what allows multicalibration to be achievable in online settings where data arrives adversarially.
\cite{foster1999proof,hartCalibratedForecastsMinimax2022} first showed that this advantage exists in calibrated online forecasting when Hart pointed out that Foster's asymptotic calibrated forecasting result could be obtained by appealing to Blackwell approachability.
Moreover, their proof extends trivially to online multicalibration.
Later on, \cite{guptaOnlineMultivalidLearning2022,noarovOnlineMultiobjectiveMinimax2021} independently rediscovered this same advantage and proof.
In the following theorem, we provide a rigorous treatment---largely in line with the prior works---that generalizes to all multi-objective learning problems that have separable \lobjs{} and satisfy some weak compactness conditions.
\begin{restatable}{theorem}{distribution-free-best-response}
	\label{theorem:distribution-free-best-response}
	Consider an online multi-objective learning problem $(\dists, \objs, \hyps)$ with separable \lobjs{} and where $\dists$ is the unrestricted set of all data distributions. %
	Further, consider any \lobj{} mixture $\rloss \in \simplex(\objs)$ with finite support and suppose that for every \lobj{} $\loss \in \objs$, (1) there is a finite subset $\labels_\loss \subseteq \labels$ s.t. $\bsetflat{g_{\loss}(y) \mid y \in \labels_\loss}$ is an $(\epsilon/3)$-net for $\bsetflat{g_{\loss}(y) \mid y \in \labels}$, (2) the range of $g_\loss$ is convex, compact and finite-dimensional, and (3) fixing any $x, y$ pair, $\loss(\hyp, (x, y))$ is, in the argument $\hyp$, a function of bounded variation.
	Then, for any $\epsilon > 0$, there is a non-deterministic hypothesis $\rhyp \in \simplex(\hyps)$ (given by \eqref{eq:separable-best-response-form}) that is a distribution-free $\epsilon$ best-response for $\rloss$.
\end{restatable}
\begin{proof}
	Let $\objs_\rloss = \support{\rloss}$ be the support of our \lobj{} mixture, noting that $\setsize{\objs_\rloss} < \infty$ by assumption.
	Let $\cW_g \subseteq \prod_{\loss \in \objs_\rloss} \cW$ denote the range of the vector-valued function $g: \labels \to \cW_g$ where $g(y) \asseq [g_\loss(y)]_{\loss \in \objs_\rloss}$.
	Fixing some $x \in \features$, observe that we can rewrite our loss $\rloss(\hyp, (x, y))$ as the function $\tilde{\rloss}_x: \cW \times \cW_g \to [0, 1]$ where $\rloss(\hyp, (x, y)) = \tilde{\rloss}_x(\hyp(x), g(y))$ and we define $\tilde{\rloss}_x(w, v) \asseq c + \EEs{\loss \sim \rloss}{f_\loss(w, x) (w - v_\loss)}$.

	By assumption (3), $\tilde{\rloss}_x(w, v)$ is a function of bounded variation in $w$ for all $x \in \features, v \in \cW_g$.
	By Lemma~\ref{lemma:finite-covering}, for any $x \in \features, y \in \labels$, there must exist some finite subset $\cW_{x, y} \subseteq \cW$ such that $\bsetflat{\tilde{\rloss}_x(w, (x, g(y)))}_{w \in \cW_{x,y}}$ is an $(\epsilon / 3)$-net of $\bsetflat{\tilde{\rloss}_x(w, (x, g(y)))}_{w \in \cW}$.
	Let us choose $\cW_x \asseq \bigcup_{\loss \in \objs_\rloss} \bigcup_{y \in \labels_\loss} \cW_{x,y}$, which is a finite set as $\setsize{\labels_\loss} < \infty$ by assumption (1) and we $\setsize{\objs_\rloss} < \infty$ by assumption.
	Moreover, by construction, for any $y \in \labels$, $\bsetflat{\tilde{\rloss}_x(w, (x, y))}_{w \in \cW_x}$ is an $(2 \epsilon/3)$-net of $\bsetflat{\tilde{\rloss}_x(w, (x, y))}_{w \in \cW}$.

	We now define our best response $\rhyp$ pointwise at each $x \in \features$, letting
	\begin{align}
		\label{eq:separable-best-response-form}
		\rhyp(x) = \argmin_{w^* \in \simplex(\cW_x)} \max_{y \in \labels} \EEs{w \sim w^*}{\tilde{\rloss}_x(w, (x, g(y)))} = \argmin_{w^* \in \simplex(\cW_x)} \max_{g(y) \in \cW_g} \EEs{w \sim w^*}{\tilde{\rloss}_x(w, (x, g(y)))}.
	\end{align}
	Because each \lobj{} $\loss$ is separable, $\tilde{\rloss}_x(w, (x, g(y)))$ is linear in $g(y)$.
	By linearity of expectation, we also know that $\EEs{w \sim w^*}{\tilde{\rloss}_x(w, (x, g(y)))}$ is linear in $w^*$.
	Thus, for any $x \in \features$,
	\begin{align*}
		\max_{g(y) \in \cW_g} \tilde{\rloss}_x(\rhyp(x), (x, g(y)))
		 & =
		\min_{w^* \in \simplex(\cW_x)} \max_{g(y) \in \cW_g} \EEs{w \sim w^*}{\tilde{\rloss}_x(w, (x, g(y)))},\;(\text{construction of } f)                        \\
		 & = \max_{g(y) \in \cW_g} \min_{w^* \in \simplex(\cW_x)} \EEs{w \sim w^*}{\tilde{\rloss}_x(w, (x, g(y)))},\; (\text{minimax theorem})                     \\
		 & \leq  \max_{g(y) \in \cW_g} \min_{w^* \in \simplex(\cW)} \EEs{w \sim w^*}{\tilde{\rloss}_x(w, (x, g(y)))} + r \epsilon,\; (\text{discretization error}) \\
		 & \leq  \max_{g(y) \in \cW_g} \min_{\rhyp^* \in \simplex(\hyps)} \tilde{\rloss}_x(\rhyp^*(x), (x, g(y))) + r \epsilon.
	\end{align*}
	In the above, we were able to apply the minimax theorem because of assumption (2).
	Recall that our construction of $\cW_x$, guarantees, for any $w \in \cW,  y \in \labels$, there is a $w' \in \cW_x, y' \in \labels'$ such that
	\begin{align*}
		 & \abs{\tilde{\rloss}_x(w, (x, g(y))) - \tilde{\rloss}_x(w, (x, g(y')))} \leq \epsilon / 3,\; (y' \in \labels')               \\
		 & \abs{\tilde{\rloss}_x(w, (x, g(y'))) - \tilde{\rloss}_x(w', (x, g(y')))} \leq \epsilon / 3,\; (w' \in \cW_{x, y'}) \\
		 & \abs{\tilde{\rloss}_x(w', (x, g(y'))) - \tilde{\rloss}_x(w', (x, g(y)))} \leq \epsilon / 3,\; (y' \in \labels')             \\
		 & \abs{\tilde{\rloss}_x(w, (x, g(y))) - \tilde{\rloss}_x(w', (x, g(y)))} \leq \epsilon.\; (\text{triangle  inequality})
	\end{align*}
	We, therefore, have that
	\begin{align*}
		\max_{\dist \in \dists} \risk_{\dist, \rloss}(\rhyp)
		 & = \max_{x \in \features, y \in \labels} \tilde{\rloss}_x(\rhyp(x), (x, g(y)))                                               \\
		 & = \max_{x \in \features, g(y) \in \cW_g} \tilde{\rloss}_x(\rhyp(x), (x, g(y)))                                              \\
		 & \leq  \max_{x \in \features, g(y) \in \cW_g} \min_{\rhyp^* \in \simplex(\hyps)} \tilde{\rloss}_x(\rhyp^*(x), (x, g(y))) + r \epsilon \\
		 & =  \max_{x \in \features, y \in \labels} \min_{\rhyp^* \in \simplex(\hyps)} \tilde{\rloss}_x(\rhyp^*(x), (x, g(y))) + r \epsilon     \\
		 & =  \max_{\dist \in \dists} \min_{\rhyp^* \in \simplex(\hyps)} \risk_{\dist, \rloss}(\rhyp^*) + r \epsilon.
	\end{align*}
\end{proof}

We can now directly prove Theorem~\ref{theorem:multicalibration-distribution-free-best-response}.
\multicalibrationdistributionfreebestresponse*
\begin{proof}
	Let $(\dists', \mcobjs', \predictors)$ be the relaxation of the problem $(\dists, \mcobjs, \predictors)$ where, instead of assuming that nature can only sample discrete labels $y \in [\numcls]$, nature too can sample mixed labels $y \in \simplex_\numcls$.
	By linearity of expectation, we have that $\max_{\dist' \in \dists'} \risk_{\dist', \rloss'}(\rhyp) = \max_{\dist \in \dists} \risk_{\dist, \rloss}(\rhyp)$ for every predictor $\rhyp \in \simplex(\predictors)$ and objective $\rloss \in \mcobjs$, where $\rloss' \in \mcobjs'$ is the relaxation of $\rloss$.
	Thus, a distribution-free $\epsilon$ best-response \eqref{eq:distfreebr} for the relaxed problem is a distribution-free $\epsilon$ best-response for our original problem $(\dists, \mcobjs, \predictors)$.
	We next observe that $(\dists', \mcobjs', \predictors)$ satisfies the conditions of Theorem~\ref{theorem:distribution-free-best-response}.

	Recall that every \lobj{} $\rloss' \in \mcobjs'$ can be written in a separable format where, for every $i \in \bsetflat{\pm 1}, j \in [\numcls], S \in \supports, v \in \levelsets^\numcls$, we can write $\loss_{i,j,S,v}(\hyp, (x,y)) = c + f_{i, j, S, v}(x, \hyp(x)) \cdot (\hyp(x) - g_{i, j, S, v}(y))$ where $c = 0.5$, $g_{i, j, S, v}(y) = y$ and $f_{i, j, S, v}(x, \hyp(x)) \in [0, 1]^\numcls$ is $\delta_j \cdot 0.5 \cdot i \cdot 1[\hyp(x) \in v, x \in S]$.
	The domain and range of identity $g_{i, j, S, v}(y) = y$ is exactly $\simplex_\numcls$ which is convex, compact, and $\numcls$-dimensional; thus, it has a finite $\epsilon$-covering.
	Finally, we observe that $f_{i, j, S, v}(\hyp(x), (x,y))$, and by extension $\loss_{i,j,S,v}(\hyp, (x,y))$, is always a piecewise constant function in $\hyp(x)$ with finite discontinuities.
	It follows that $\loss_{i,j,S,v}$ must be of bounded total variation in $\hyp(x)$ as variation along any linear segment is at most $1$.
	Thus, Theorem~\ref{theorem:distribution-free-best-response} states there exists a distribution-free $\epsilon$ best-response for $(\dists', \mcobjs', \predictors)$ and by extension $(\dists, \mcobjs, \predictors)$.
\end{proof}

\begin{remark}[Closed-form of distribution-free best-responses]
	For calibration and multicalibration problems, the best-response implicitly defined in Theorem~\ref{theorem:distribution-free-best-response} takes a clean closed-form: for any given $x \in \features$, $\rhyp(x)$ randomizes on two neighboring actions.
	This simple closed form was first derived by \cite{foster1999proof} and independently rediscovered by \cite{guptaOnlineMultivalidLearning2022}.
\end{remark}

In Theorem~\ref{theorem:distribution-free-best-response}, we referenced the notion of a function being of bounded variation.
This is a common notion in analysis which says that a function cannot go up and down too many times; thus reasonable \lobjs{} (including every finite loss function you can think of) should all be of bounded variation.
For completeness, we prove below that bounded variation implies finite domain coverings.
\begin{lemma}
	\label{lemma:finite-covering}
	Consider a function $f: K \rightarrow [0, 1]$ where $K$ is a convex compact subset of $\mathbb{R}^n$.
	Suppose that $f$ has pathwise bounded variation on $K$: that is, there exists a finite constant $M$ such that for any linear path $\gamma: [0, 1] \rightarrow K$,
	$V_{\gamma}(f) = \sup_{0 = t_0 \le t_1 \le \cdots \le t_N = 1} \sum_{i=1}^{N} \|f(\gamma(t_i)) - f(\gamma(t_{i-1}))\| \le M$.
	Then for any $\epsilon > 0$, there exists a finite subset $S \subseteq K$ such that for every $y \in f(K)$, there exists an $x_i \in S$ with $\|f(x_i) - y\| < \epsilon$.
\end{lemma}
\begin{proof}
	Since $K$ is compact, it is totally bounded. For any $\delta > 0$, there exists a finite set $T$ such that $K$ is covered by balls of radius $\delta$ centered at points in $T$. Let $\delta = \frac{\epsilon}{2M}$, where $M$ is the constant from the pathwise bounded variation condition.
	For each point $t \in T$, choose a point $s_t \in K$ such that $\|s_t - t\| < \delta$. Define $S = \{s_t : t \in T\}$. Let $y \in f(K)$. Then there exists an $x \in K$ with $f(x) = y$. Since $K$ is covered by balls of radius $\delta$ centered at points in $T$, there exists a $t \in T$ with $\|x - t\| < \delta$. By the construction of $S$, we also have $\|s_t - t\| < \delta$. Then, by the triangle inequality,
	$\|s_t - x\| \le \|s_t - t\| + \|t - x\| < 2\delta = \frac{\epsilon}{M}$.
	Consider a linear continuous path $\gamma: [0, 1] \rightarrow K$ with $\gamma(0) = x$ and $\gamma(1) = s_t$; such a path must exist by the convexity of $K$. By the pathwise bounded variation condition, we have
	$V_{\gamma}(f) \le M$.
	For the partition $0 = t_0 \le t_1 = 1$, we have
	$\|f(s_t) - y\| = \|f(s_t) - f(x)\| \le V_{\gamma}(f) \cdot \|s_t - x\| \le M \cdot \frac{\epsilon}{M} = \epsilon$.
	Thus, for every $y \in f(K)$, there exists an $s_t \in S$ such that $\|f(s_t) - y\| < \epsilon$.
\end{proof}

\newpage
\section{Proofs and Algorithms for Section~\ref{section:considerations}}
\label{appendix:considerations}

\subsection{Conditional Multicalibration}

\condmulticalibrationAsMultiobjectiveLearning*
\begin{proof}
	In the multi-objective learning problem $(\bset{\dist_\conds}_{\conds \in \condsupports}, \mcobjs, \predictorsnumcls)$, the multi-objective value of a predictor $\rhyp$, $\risk^*(\rhyp) \asseq \max_{\dist^* \in \dists, \loss^* \in \objs} \risk_{\dist^*, \loss^*}(\rhyp)$, is exactly the (rescaled and shifted) magnitude of the predictor's multicalibration violation.
	Formally,
	\begin{align*}
		\risk^*(\rhyp) = \frac 12+\frac 12 \max_{\substack{j \in [\numcls], S \in \supports, \conds \in \condsupports, v \in \levelsets^\numcls}} \absflat{\EEs{\substack{(x,y) \sim \dist_\conds, \hyp \sim \rhyp}}{(\hyp(x)_j - \dyj) \cdot 1[\hyp(x) \in v, x \in S]}}.
	\end{align*}
	The optimal multi-objective value of the problem $(\dists, \mcobjs, \predictorsnumcls)$ is $0.5$, as $\mcobjs$ are symmetric around $0.5$ and the Bayes classifier still achieves a loss of $0.5$.
	\begin{align*}
		\min_{\hyp^* \in \predictorsnumcls} \risk^*(\hyp^*) = \frac 12 + \frac 12 \max_{i \in \bsetflat{\pm 1}} i \cdot \squarebrackets{\max_{\substack{j \in [\numcls], S \in \supports \\ v \in \levelsets^\numcls}} \EEs{\substack{(x,y) \sim \dist \\ \hyp \sim \rhyp}}{(\hyp(x)_j - \dyj) \cdot 1[\hyp(x) \in v, x \in S]}}
		= \frac 12.
	\end{align*}
	Thus, $\risk^*(\rhyp) - \min_{\hyp^* \in \predictorsnumcls} \risk^*(\hyp^*) = \epsilon$ if and only if our conditional multicalibration violation is $2 \epsilon$.
\end{proof}

\subsection{Moment Multicalibration}

\begin{restatable}{lemma}{onlinelearninglinearobjectivesmoment}
	\label{lemma:online-learning-linear-objectives-moment}
	In the Algorithm~\ref{alg:moment-deterministic-multicalibration}, the sequence of hypotheses $\tsv{\hyp}{1:T}$ satisfies the bounded regret condition $\weakregret(\tsv{\hyp}{1:T}, \tsv{\bset{\tsv{\loss_m}{t} + \tsv{\loss_\mu}{t}}}{1:T}) \in \bigO{\sqrt{m} T^{3/4}}$.
\end{restatable}
\begin{proof}
	For simplicity, we'll round $T$ to the next largest square.
	Recall that using Hedge to select $T$ actions from the interval $[0, 1]$ with learning rate $T^{-3/4} m^{-1/2}$ guarantees a regret bound of $2 \ln(2) \sqrt{m} T^{3/4}$ (Lemma~\ref{lemma:hedge-basic-regret-bound-full}).
	By Theorem~\ref{theorem:online-learning-linear-objectives}, since we chose $\calg$ to be Hedge with learning rate $T^{-3/4}$ when learning $\tsv{\hyp}{t}_\mu$, $\sum_{t=1}^T \tsv{\loss}{t}_\mu(\tsv{\hyp}{t}) \leq 0.5 T + 2 \ln(2) \sqrt{m} T^{3/4}$.
	Similarly, by Theorem~\ref{theorem:online-learning-linear-objectives}, having chosen $\calg$ to be strongly adaptive Hedge when learning $\tsv{\hyp}{t}_m$, we have for any $T_1 \in [T]$ and $\hyp^*_m \in \hyps_m$,
	\begin{align*}
		 & {\sum_{t=T_1}^{T_1 + \sqrt{T}} \para{\risk_{ \tsv{\loss}{t}_m}(\tsv{\hyp}{t}) - 0.5 - \EEs{(x, y) \sim \dist}{ \tsv{f}{t}_m(x, \tsv{\hyp}{t}(x)) (\hyp^*_m(x) -  \tsv{g}{t}_m(y))}} \leq \bigO{T^{1/4} \ln(T)} }.
	\end{align*}
	In the above equation, $\tsv{f}{t}_m$ and $\tsv{g}{t}_m$ are the separable components of the loss $\tsv{\loss}{t}_m$, as defined in Definition~\ref{def:separable-loss}.
	We now turn to bound the expectation term.
	By triangle inequality, $\tsv{f}{t}_m(x, \tsv{\hyp}{t}(x)) (\hyp^*_m(x) -  \tsv{g}{t}_m(y)) \leq  \tsv{f}{t}_m(x, \tsv{\hyp}{t}(x)) (\hyp^*_m(x) - \tsv{g}{T_1}_m(y)) + \abs{\tsv{g}{T_1}_m(y) -  \tsv{g}{t}_m(y)}$.
	Using the inequality that $\abs{a^m - b^m} \leq m \abs{a - b}$ when $a, b \in [0, 1]$, and the movement upper bound of Hedge (Lemma~\ref{lemma:movement-of-hedge}), we have $\abs{\tsv{g}{T_1}_m(y) -  \tsv{g}{t}_m(y)} \leq m \abs{  \tsv{\hyp}{t}_\mu(x) - \tsv{\hyp}{T_1}_\mu(x)} \leq m \eta_\mu \sqrt{T}$, where $\eta_\mu =  T^{-1/2} T^{-3/4}$ is the learning rate of the mean predictor $\hyp_\mu$.
	Thus, for all $T_1$, by choosing $\hyp^*_m(x) = \EEc{\tsv{g}{T_1}_m(y)}{x}$, we have that
	\begin{align*}
		 \sum_{t=T_1}^{T_1 + \sqrt{T}} (\risk_{ \tsv{\loss}{t}_m}(\tsv{\hyp}{t}) - 0.5)                                                                                                                     
		 & \leq \min_{h^*_m \in \hyps_m} \sum_{t=T_1}^{T_1 + \sqrt{T}} \EEs{(x, y) \sim \dist}{ \tsv{f}{t}_m(x, \tsv{\hyp}{t}(x)) (\hyp^*_m(x) - \tsv{g}{T_1}_m(y))} + m  T  \eta_\mu + \bigO{T^{3/4} \ln(T)} \\
		 & \leq \sqrt{m} T^{3/4} + \bigO{T^{3/4} \ln(T)}.
	\end{align*}
	This gives $
		\smash{	\frac{1}{2} \sum_{t=1}^T \risk_{ \tsv{\loss}{t}_m}(\tsv{\hyp}{t}) + \risk_{\tsv{\loss}{t}_\mu}(\tsv{\hyp}{t}) - 1 \leq \bigO{(\ln(T) + \sqrt{m}) T^{3/4}}}$ as desired.
\end{proof}
\newpage
\section{Figures for Section~\ref{section:experiments}}
\label{appendix:experiments}
Figure~\ref{fig:evol} plots the evolution of training and testing multicalibration errors over the duration of the training process.
These plots confirm that the relative performance of different multicalibration algorithms is fairly monotonic and regular, even across the duration of training and the learning rate schedule.
\begin{figure}[!ht]
	\centering
	\begin{subfigure}[b]{\textwidth}
		\includegraphics[width=0.95\textwidth]{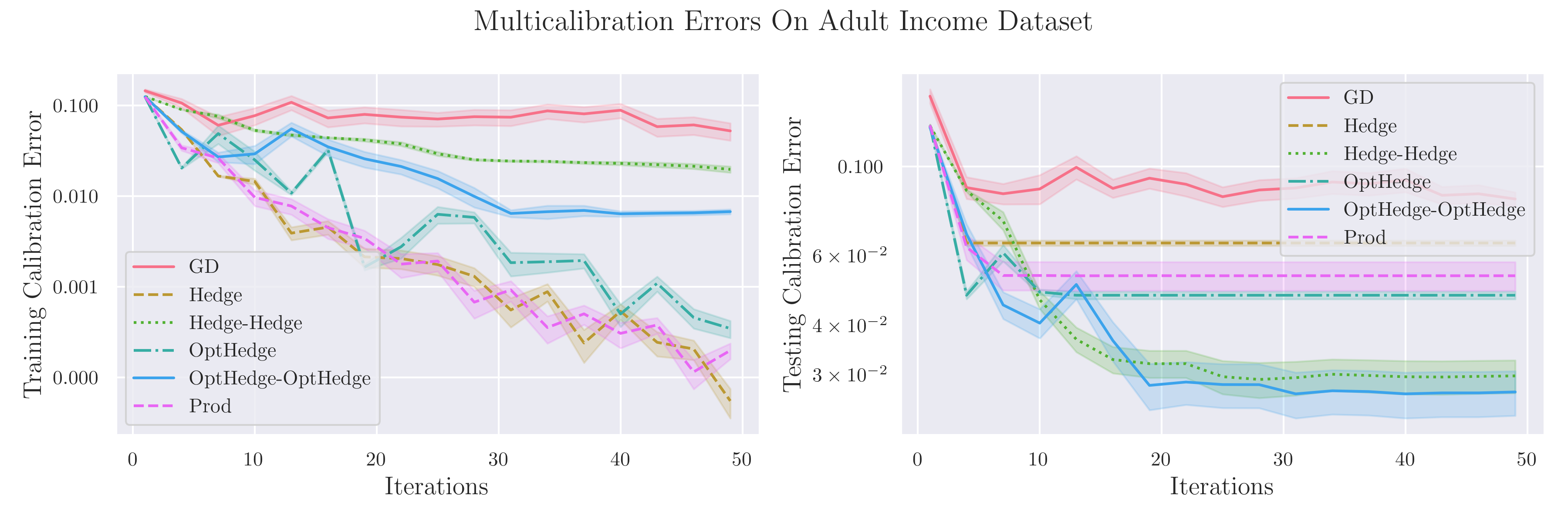}
		\label{fig:calibration}
	\end{subfigure}
	\begin{subfigure}[b]{\textwidth}
		\includegraphics[width=0.95\textwidth]{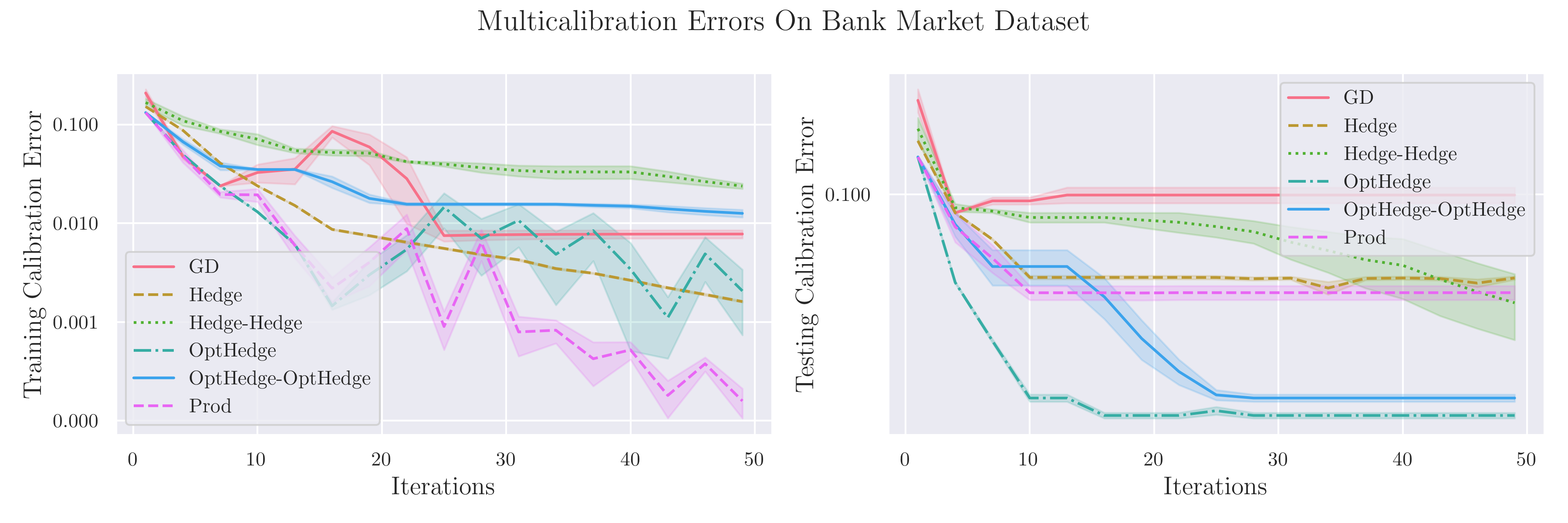}
		\label{fig:calibrationbank}
	\end{subfigure}
	\begin{subfigure}[b]{\textwidth}
		\centering
		\includegraphics[width=0.95\textwidth]{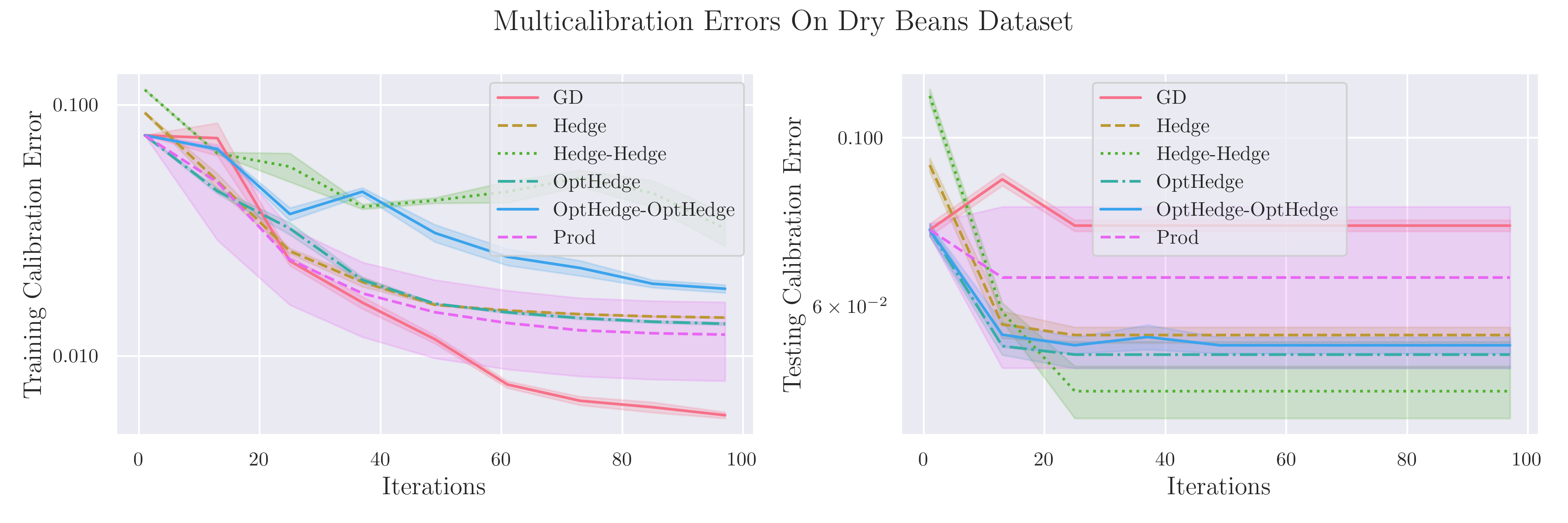}
		\label{fig:calibrationbean}
	\end{subfigure}

	\caption{These plots depict the multicalibration violations (Definition~\ref{def:multicalibration}) of various multicalibration algorithms on the UCI Adult Income dataset (top left), Bank Market dataset (top right), and Dry Bean dataset (bottom).
		The lines plot how much the current iterate violates the multicalibration condition on the training data (top plot) and testing data (bottom plot), with error bars denoting standard error.
		The iterates of the OptHedge-OptHedge algorithm, which implements no-regret vs no-regret dynamics using the Optimistic Hedge algorithm, are the most multicalibrated predictors.}
	\label{fig:evol}
\end{figure}

\subsection{Experiment Details}
The source code for these experiments is included in the repository \url{https://github.com/ericzhao28/multicalibration}.
Model checkpoints for replicating our results can be found at \url{https://drive.google.com/drive/folders/1CVusrPZkB-15_55VVkoU3KrLXQGZzne4?usp=sharing}.
All experiments were performed on a 2021 MacBook Pro, with a M1 Pro chip.
The total compute time for replicating the experiments in this section, including hyperparameter tuning, is approximately 20 hours.

\paragraph{UCI Adult Income Dataset.}
The UCI Adult Income dataset \cite{ucidataset} is a dataset for predicting individuals' incomes based on the US Census.
Our experiments use the dataset's binary `income' attribute as the target label and form 129 protected groups using eight of the dataset's labeled attributes: `age',  `workclass', `education', `marital-status', `occupation', `relationship', `race', and `sex'.
We perform random 80-20 train/test splits of the dataset, resulting in approximately 24000 training samples, 6000 test samples, and 130 groups.
We discretize the label space into $0.1$-width bins ($\lvl=10$).
Experiments are repeated for 20 seeds, with multicalibration algorithms running for 50 iterations.

\paragraph{UCI Bank Market Dataset.}
The UCI Bank Marketing dataset \cite{moro2014data} is a dataset for predicting individuals' subscriptions to a term deposit at a bank.
Our experiments use the dataset's binary `y' attribute as the target label and form 129 protected groups using eight of the dataset's labeled attributes: `age', `job', `marital', `education', `default', `housing', `loan', and `contact'.
We round ages to the nearest age divisible by 5 since `age' is a continuous attribute.
We perform random 80-20 train/test splits of the dataset, resulting in approximately 36000 training samples, 9000 test samples, and 180 groups.
We discretize the label space into $0.1$-width bins ($\lvl=10$).
Experiments are repeated for 5 seeds, with multicalibration algorithms running for 50 iterations.

\paragraph{Dry Bean Dataset.}
The Dry Bean Dataset \cite{koklu2020multiclass} is a dataset for predicting a dry bean's variety using its physical attributes.
Our experiments use the dataset's bean `type' attribute (which takes 7 possible values) as the target label and form 80 protected groups using eight of the dataset's labeled attributes: `perimeter', `major-axis-length', `minor-axis-length', `aspect-ratio', `eccentricity', `convex-area', and `equivalent-diameter'.
We discretize all numerical/continuously-valued attributes by evenly dividing the range of possible values into 10 segments.
We perform random 80-20 train/test splits of the dataset, resulting in approximately 11000 training samples, 2700 test samples, and 80 groups.
We discretize the label space into $0.25$-width bins ($\lvl=4$).
Experiments are repeated for 5 seeds, with multicalibration algorithms run for 100 iterations.

\paragraph{Hyperparameter tuning.}

In each experiment, the learning rates of the algorithms are tuned on the training set using 10 seeds (Adult Income dataset), 5 seeds (Bank Market dataset), and 5 seeds (Dry Bean dataset).
We sweep over the learning rate decay rates of $\eta \in [0.8, 0.85, 0.9, 0.95]$ for the learner and (if applicable) $\eta \in [0.9, 0.95, 0.98, 0.99]$ for the adversary, where the learning rate of the learner at the $t$th iteration is $\eta^t$ and the adversary is $100 \cdot \eta^t$.
In the Dry Bean dataset, learning rates are universally doubled to $2 \eta^t$ and the adversary is $200 \cdot \eta^t$.
The selected learning rate decays are summarized below.

\begin{table}[htbp]
	\centering
	\footnotesize
	\setlength{\tabcolsep}{4pt}
	\renewcommand{\arraystretch}{1.2}
	\footnotesize
	\begin{tabular}{|l|l|l|l|l|l|l|}
		\hline
		{Dataset}                                               & \begin{tabular}[c]{@{}l@{}}{Hedge-}\\ Hedge\end{tabular} & \begin{tabular}[c]{@{}l@{}}{OptHedge-}\\ OptHedge\end{tabular} & \begin{tabular}[c]{@{}l@{}}{OptHedge-}\\ ERM\end{tabular} & \begin{tabular}[c]{@{}l@{}}{Hedge-}\\ ERM\end{tabular} & Prod-ERM      & GD-ERM        \\
		\hline
		\begin{tabular}[c]{@{}l@{}}Adult\\Income\end{tabular}   & $\eta = (0.95, 0.9)$                                     & $\eta = (0.95, 0.9)$                                           & \begin{tabular}[c]{@{}l@{}}$\eta = 0.9$\end{tabular}      & \begin{tabular}[c]{@{}l@{}}$\eta = 0.9$\end{tabular}   & $\eta = 0.9$  & $\eta = 0.9$  \\
		\hline
		\begin{tabular}[c]{@{}l@{}}Bank\\Marketing\end{tabular} & $\eta = (0.95, 0.95)$                                    & $\eta = (0.95, 0.95)$                                          & \begin{tabular}[c]{@{}l@{}}$\eta = 0.95$\end{tabular}     & \begin{tabular}[c]{@{}l@{}}$\eta = 0.95$\end{tabular}  & $\eta = 0.95$ & $\eta = 0.85$ \\
		\hline
		\begin{tabular}[c]{@{}l@{}}Dry\\Bean\end{tabular}       & $\eta = (0.99, 0.98)$                                    & $\eta = (0.95, 0.99)$                                          & \begin{tabular}[c]{@{}l@{}}$\eta = 0.95$\end{tabular}     & \begin{tabular}[c]{@{}l@{}}$\eta = 0.95$\end{tabular}  & $\eta = 0.95$ & $\eta = 0.95$ \\
		\hline
	\end{tabular}
\end{table}

\newpage

\newpage
\section{Square-root Multicalibration Guarantees}
\label{section:new}

The following Theorem~\ref{theorem:deterministic-conditional-multicalibration-guarantee-weak-full-new} is a stronger restatement of Theorem~\ref{theorem:deterministic-conditional-multicalibration-guarantee-weak}.

\begin{restatable}{theorem}{deterministicconditionalmulticalibrationnew}
	\label{theorem:deterministic-conditional-multicalibration-guarantee-weak-full-new}
	Fix $\epsilon > 0$, $\lvl, \numcls \in \integers_+$ and \groupsformals{} $\supports \subseteq 2^{\features}$.
    Set $T = C \ln(\numcls) / \epsilon^2$ for some universal constant $C$.
	Algorithm~\ref{alg:deterministic-conditional-multicalibration-weak}, with probability at least $1 - \delta$, requires no more than $\bigOtildesmol{\ln(\numcls) \cdot (\ln(k \setsize{\supports} / \epsilon \delta) + \numcls \ln(\lvl))/ \epsilon^4}$ samples\footnote{Here, the tilde-O hides log-log factors.} from $\dist$ to find a deterministic $\numcls$-class predictor $\hyp$ satisfying
	\begin{align*}
		\smash{\abs{\EEs{{(x,y) \sim \dist}}{(\hyp(x)_j - \dyj ) \cdot 1[\hyp(x) \in v, x \in S]}} \leq \epsilon \sqrt{\Pr(x \in S)},}
	\end{align*}
	for all $S \in \supports, v \in \levelsets^\numcls, j \in [\numcls]$.
   That is, $\hyp$ is a deterministic predictor that is $(\supports, \sqrt{\Pr(x \in S)} \cdot \epsilon, \lambda)$-multicalibrated, where the error tolerance $\sqrt{\Pr(x \in S)} \cdot \epsilon$ depends on the group mass.
\end{restatable}

Before proceeding to a proof, we first introduce some technical results which are simple variants of lemmas that the reader has seen previously in the manuscript.
Note that, in the following multi-objective learning problem construction, we will allow negative objective values for simpler notation.
\begin{restatable}{fact}{condmulticalibrationAsMultiobjectiveLearningweak}
	\label{fact:cond-multicalibration-as-multiobjective-learning-weak}
	Let $\dist$ be a data distribution for some $\numcls$-class prediction problem and fix $\epsilon > 0$, $\lvl \in \integers_+$, and a \groupsformal{} $\supports \subseteq 2^{\features}$.
	We define the following set of multicalibration losses:
	\begin{align}
		{\mcobjs' \asseq \bset{\frac{1}{\sqrt{\Pr_\dist(x \in S)}} \cdot i \cdot (\hyp(x)_j - \dyj) \cdot 1[\hyp(x) \in v, x \in S]}_{i \in \pm\bsetflat{1}, j \in [\numcls], S \in \supports, v \in \levelsets^\numcls}}
	\end{align}
	Predictor $\rhyp \in \simplex(\predictorsnumcls)$ is a $\epsilon$-optimal solution to the multi-objective learning problem $(\bset{\dist}, \mcobjs, \predictorsnumcls)$ if and only if $\rhyp$ is $(\supports, \epsilon \sqrt{\Pr_\dist(x \in S)}, \lvl)$-multicalibrated for $\dist$.
\end{restatable}
\begin{proof}
	In the multi-objective learning problem $(\bset{\dist}, \mcobjs', \predictorsnumcls)$, the multi-objective value of a predictor $\rhyp$, $\risk^*(\rhyp) \asseq \max_{\loss_{i,j,S,v} \in \objs} \risk_{\dist, \loss_{i,j,S,v}}(\rhyp)$, is exactly the magnitude of the predictor's multicalibration violation.
	Formally,
	\begin{align*}
		\risk^*(\rhyp) = \max_{\substack{j \in [\numcls], S \in \supports, v \in \levelsets^\numcls}} \frac{\absflat{\EEs{\substack{(x,y) \sim \dist, \hyp \sim \rhyp}}{(\hyp(x)_j - \dyj) \cdot 1[\hyp(x) \in v, x \in S]}}}{\sqrt{\Pr_\dist(x \in S)}}.
	\end{align*}
	The optimal value is $0$, as the objectives are symmetric around $0$ and the Bayes clasifier achieves a loss of $0$.
	Thus, $\risk^*(\rhyp) - \min_{\hyp^* \in \predictorsnumcls} \risk^*(\hyp^*) \geq \epsilon$ if and only if our multicalibration violation is at least $\epsilon \sqrt{\Pr(x \in S)}$.
\end{proof}

\begin{lemma}
\label{lemma:sampling_v}
Let $D$ be a data distribution for some $k$-class prediction problem, $\epsilon \in (0, 0.6), \delta \in (0, 1)$ and fix a set of groups $\supports \subseteq 2^\features$ where, for all $S \in \supports$, $\Pr_\dist(x \in S) \geq \epsilon^2$.
With only $O(\ln(\supports/\delta)/\epsilon^4)$ samples,  one can find with probability at least $1 - \delta$ a vector $v \in \reals^{\setsize{\supports}}$ where, for all $S \in \supports$,
\begin{align}
&\abs{v_S - \Pr_\dist(x \in S)} \leq \epsilon \Pr_\dist(x \in S), \label{eq:non_sq_ratio_ineq} \\
\label{eq:inverse_ratio_ineq}
&\abs{\frac{1}{\sqrt{v_S}} - \frac{1}{\sqrt{\Pr_\dist(x \in S)}}} \leq \frac{\epsilon }{\sqrt{\Pr_\dist(x \in S)}}, \\
\label{eq:ratio_ineq}
&\abs{\sqrt{v_S} - \sqrt{\Pr_\dist(x \in S)}} \leq \epsilon \sqrt{\Pr_\dist(x \in S)}.
\end{align}
\end{lemma}
\begin{proof}
Fix some sufficiently large universal constant $C$.
Sample $N = C \ln(\supports/\delta)/\epsilon^4$ datapoints $X$ from $D$, and let $v_S = \frac{1}{\setsize{X}} \sum_{(x,y) \in X} 1[x \in S]$.
Observe that $\EE{v} = [\Pr_\dist(x \in S)]_{S \in \supports}$.
The multiplicative Chernoff bound says that, fixing an $S \in \supports$,
\begin{align*}
    \Pr(\absflat{v_S - \Pr_D(x \in S)}\geq \epsilon \Pr_D(x \in S)) & \leq 2 \exp(-\epsilon^2 N \Pr_D(x \in S) / 3) \\& \leq 2 \exp(-\epsilon^4 N / 3).
\end{align*}
With our choice of $N$, taking a union bound over all $S \in \supports$, with probability at least $1 - \delta$, we have our first claim.
Taking a square-root of both sides, we also have that for all $S \in \supports$, $\sqrt{(1 - \epsilon) \Pr_D(x \in S)} \leq \sqrt{v_S} \leq \sqrt{(1 + \epsilon) \Pr_D(x \in S)}$.
The second claim then follows by observing that, for any $\epsilon \in (0, 0.6)$, $1 - \frac{1}{\sqrt{1+\epsilon}} \leq \epsilon$, and $\frac{1}{\sqrt{1-\epsilon}} - 1 \leq \epsilon$.
The third claim follows by observing that, for any $\epsilon \in (0, 1)$, $\sqrt{1 + \epsilon} - 1 \leq \epsilon$ and $1 - \sqrt{1 - \epsilon} \leq \epsilon$.
\end{proof}

The following lemma is a modification of Theorem~\ref{theorem:online-learning-linear-objectives} that states that the learner has a no-regret strategy on the calibration objectives given in Fact~\ref{fact:cond-multicalibration-as-multiobjective-learning-weak}.
\begin{restatable}{lemma}{multicalibrationobjectivesnew}
    \label{lemma:spooooky}
	Consider the set of $\numcls$-class predictors $\predictorsnumcls$, a data distribution $\dist$, and any adversarial sequence of stochastic costs $\tsv{\loss}{1:T} \in \mcobjs'$, where $\mcobjs'$ are the multicalibration \lobjs{} defined in Lemma~\ref{fact:cond-multicalibration-as-multiobjective-learning-weak}.
    Suppose you are given $v \in \reals$ satisfying \eqref{eq:inverse_ratio_ineq} and \eqref{eq:non_sq_ratio_ineq} for all $S \in \supports$ and that, for all $S \in \supports$, $\Pr_\dist(x \in S) \geq \epsilon^2$.
	There is a no-regret algorithm that outputs (deterministic) predictors
 $\tsv{\hyp}{1:T}\in \predictorsnumcls'$
such that $\weakregret(\tsv{\hyp}{1:T}, \tsv{\bsetflat{ \risk_{\dist, \tsv{\loss}{t}}}}{1:T}) \leq O(\sqrt{\ln(\numcls) T} + \epsilon T)$.
     Moreover, the algorithm does not need any samples from $\dist$.
\end{restatable}
\begin{proof}
	Consider the following algorithm.
	At each feature $x \in \features$, initialize a Prod algorithm that picks an action $\tsv{\hyp}{t}(x) \in \simplex(\labels)$ at each timestep $t \in [T]$.
     Aggregating each algorithm's action yields our learner's overall action $\tsv{\hyp}{t} \in \predictors$.
For each $x \in \features$, let $\tsv{\hyp}{t+1}(x)$ be the outcome of Prod at step $t+1$ after observing linear loss functions $\tsv{f}{\tau}_{\tsv{\hyp}{\tau}, x}: \reals^\numcls \to [0, 1]$ for $\tau\in [t]$:
	\begin{align}
	\tsv{f}{\tau}_{\tsv{\hyp}{\tau}, x}(z) \asseq \frac{1}{2 \sqrt{v_{\tsv{S}{\tau}}} } \para{1 + z_{\tsv{j}{\tau}} \cdot \tsv{i}{\tau} \cdot 1[\tsv{\hyp}{\tau}(x) \in \tsv{v}{\tau}, x \in \tsv{S}{\tau}]}.
	\end{align}
	Prod gives $\sum_{t=1}^T \tsv{f}{t}_{\tsv{\hyp}{t}, x}(\tsv{\hyp}{t}(x))
		- \min_{z^* \in \simplex(\labels)} \sum_{t=1}^T \tsv{f}{t}_{\tsv{\hyp}{t}, x}(z^*) \leq 
       C  \sqrt{\ln(\numcls) \sum_{t=1}^T v_{\tsv{S}{t}}^{-1}}$ (Lemma~\ref{lemma:second-order-regret-bound}) for some universal constant $C$.
	Since this inequality holds for all $x \in \features$, applying the law of total expectation to Lemma~\ref{lemma:second-order-regret-bound},
	\begin{align*}
		&\sum_{x \in \features} C \Pr_\dist(x) \sqrt{\ln(\numcls) \sum_{t=1}^T v_{\tsv{S}{t}}^{-1}} \\
         &\geq \sum_{x \in \features} \sum_{t=1}^T \Pr_\dist(x) \tsv{f}{t}_{\tsv{\hyp}{t}, x}(\tsv{\hyp}{t}(x))
		- \sum_{x \in \features} \min_{z^* \in \simplex(\labels)} \sum_{t=1}^T \Pr_\dist(x) \tsv{f}{t}_{\tsv{\hyp}{t}, x}(z^*)  \\
        &= \sum_{x \in \features} \sum_{t=1}^T \Pr_\dist(x) \tsv{f}{t}_{\tsv{\hyp}{t}, x}(\tsv{\hyp}{t}(x))
		- \min_{\hyp^* \in \predictorsnumcls} \sum_{x \in \features} \sum_{t=1}^T \Pr_\dist(x) \tsv{f}{t}_{\tsv{\hyp}{t}, x}(\hyp^*(x))\\
        &= \max_{\hyp^* \in \predictorsnumcls}  \sum_{x \in \features} \sum_{t=1}^T \Pr_\dist(x) \tsv{f}{t}_{\tsv{\hyp}{t}, x}(\tsv{\hyp}{t}(x)  - \hyp^*(x)),
	\end{align*}
	with the equality following because we allow arbitrary predictors.
		Expanding the definition of $f$,
	\begin{align*}
		&\max_{\hyp^* \in \predictorsnumcls}  \sum_{x \in \features} \sum_{t=1}^T \Pr_\dist(x) \tsv{f}{t}_{\tsv{\hyp}{t}, x}(\tsv{\hyp}{t}(x)  - \hyp^*(x)) \\
        &= \max_{\hyp^* \in \predictorsnumcls}  \sum_{x \in \features} \sum_{t=1}^T \Pr_\dist(x)  \frac{1}{2\sqrt{v_{\tsv{S}{t}}} }  \para{ (\tsv{\hyp}{t}(x)  - \hyp^*(x)) \cdot \tsv{i}{t} \cdot 1[\tsv{\hyp}{t}(x) \in \tsv{v}{t}, x \in \tsv{S}{t}]}\\
        &\geq \sum_{t=1}^T \EEs{x \sim \dist}{\frac{1}{2\sqrt{v_{\tsv{S}{t}}} }  \para{ (\tsv{\hyp}{t}(x)  - \EEsc{y\sim\dist_{\tsv{S}{t}}}{\delta_y}{x}) \cdot \tsv{i}{t} \cdot 1[\tsv{\hyp}{t}(x) \in \tsv{v}{t}, x \in \tsv{S}{t}]}}\\
        &= \sum_{t=1}^T \sqrt{\frac{\Pr_\dist(x \in \tsv{S}{t})}{v_{\tsv{S}{t}}}} \risk_{\dist, \tsv{\loss}{t}}(\tsv{\hyp}{t}).
	\end{align*}
        By definition of $v$,
        \begin{align*}
        \sum_{t=1}^T \sqrt{\frac{\Pr_\dist(x \in \tsv{S}{t})}{v_{\tsv{S}{t}}}} \risk_{\dist, \tsv{\loss}{t}}(\tsv{\hyp}{t}) 
        \geq  \sum_{t=1}^T \sqrt{1 - \epsilon} \risk_{\dist, \tsv{\loss}{t}}(\tsv{\hyp}{t})
        \geq \sum_{t=1}^T \risk_{\dist, \tsv{\loss}{t}}(\tsv{\hyp}{t}) - T \epsilon.
        \end{align*}
        
    We also know by construction of the losses in $\mcobjs'$ that the weak baseline $\weakbaseline = 0$.
    We can therefore bound
    \begin{align*}
        \weakregret(\tsv{\hyp}{1:T}, \tsv{\bsetflat{\risk_{\dist, \tsv{\loss}{t}}}}{1:T}) &\leq T \epsilon + \sum_{x \in \features} C \Pr_\dist(x) \sqrt{\ln(\numcls) \sum_{t=1}^T \frac{1}{v_{\tsv{S}{t}}}} \\
        &\leq T \epsilon + \sum_{x \in \features} C \Pr_\dist(x) \sqrt{\ln(\numcls) \sum_{t=1}^T (1 + \epsilon) \frac{1}{\Pr_\dist(x \in \tsv{S}{t})}} \\
        &= T \epsilon + C\sqrt{T \ln(\numcls) (1 + \epsilon)} \sum_{x \in \features} \sqrt{ \Pr_{\dist}(x)} \sqrt{\frac{1}{T}\sum_{t=1}^T  \Pr_{\dist_{\tsv{S}{t}}}(x)}.
    \end{align*}
    To bound the last inequality, let $u \in \reals^\features, v \in \reals^\features$ be vectors defined as $ u_x = \sqrt{\Pr_\dist(x)}$ and $ v_x = \sqrt{ \sum_{t=1}^T\Pr_{\dist_{\tsv{S}{t}}}(x)}$.
    Then, by Cauchy-Schwarz inequality, we have that $u \cdot v \leq \norm{u}_2 \norm{v}_2$.
    In other words,
    \begin{align*}
        C\sqrt{T\ln(\numcls)} \sum_{x \in \features} \sqrt{ \Pr_{\dist}(x) \frac{1}{T}\sum_{t=1}^T  \Pr_{\dist_{\tsv{S}{t}}}(x)}
     &   \leq  C\sqrt{T\ln(\numcls)} \para{\sum_{x \in \features} \Pr_{\dist}(x)} \para{\sum_{x \in \features} \frac{1}{T}\sum_{t=1}^T  \Pr_{\dist_{\tsv{S}{t}}}(x)} \\
      &  \leq  C\sqrt{T\ln(\numcls)}.
    \end{align*}
\end{proof}

The following lemma states that the adversary can efficiently best-respond to the objectives from Fact~\ref{fact:cond-multicalibration-as-multiobjective-learning-weak}.
\begin{lemma}
    \label{lemma:spooks}
	Consider a $\numcls$-class predictor $\hyp \in \predictorsnumcls$, a data distribution $\dist$, and set of multicalibration objectives $\mcobjs'$.
    Suppose you are given $v \in \reals$ satisfying \eqref{eq:ratio_ineq} for all $S \in \supports$.
    There is an algorithm that, taking only  $\epsilon^{-2}(\ln(\setsize{\supports} / \delta) + \numcls \ln(\lvl))$ samples for some universal constant $C$, returns a $\loss_{i,j,S,v} \in \mcobjs'$ such that, with probability at least $1 - \delta$, $\risk_{\dist, \loss_{i,j,S,v}}(\hyp)  + 2 \epsilon \geq \max_{i^*, j^*, S^*, v^*} \risk_{\dist, \loss_{i^*,j^*,S^*,v^*}}(\hyp)$.
\end{lemma}
\begin{proof}
    Consider the following algorithm.
    Draw $N = C \epsilon^{-2}(\ln(\setsize{\supports} / \delta) + \numcls \ln(\lvl))$ samples $\tsv{(x,y)}{1:N}$ from $\dist$.
    For all $S \in \supports$, initialize the empty buffer $X_{S} = \bset{}$.
    Then, for every value of $r=1, \dots, N$ and for every group $S \in \supports$, if $\tsv{x}{r} \in S$, append ${\tsv{(x,y)}{r}}$ to $X_{S}$.
    Let $\loss_{i,j,S,v} = \max_{i, j, S, v} \sqrt{\frac{v_S}{\Pr_\dist(x \in S)}} \frac{1}{\setsize{X_S}} \sum_{(x,y) \in X_S} \loss_{i,j,S,v}(\hyp, (x, y))$ be the multicalibration objective that minimizes the empirical risk on its respective buffer $X_{S}$.
    Note that finding $\loss_{i,j,S,v}$ does not require knowledge of $\Pr_\dist(x \in S)$, as the explicit factor of $\sqrt{1/\Pr_\dist(x \in S)}$ is cancelled out by $\loss_{i,j,S,v}$.

    We first observe that $\setsize{X_{S}}$ is a binomial random variable with parameters $N$ and $\Pr(x \in S)$.
	By Chernoff's bound, with probability $1 - \delta \setsize{\supports}^{-1}$, 
	$\setsize{X_{S}} \geq \Pr(x \in S) \cdot C' \epsilon^{-2} (\ln(\setsize{\supports} / \delta) + \numcls \ln(\lvl))$ for some universal constant $C'$, where we choose $C$ to be large enough so that $C' \geq \frac{C}{2}$. 
	By union bound, with probability $1 - \delta$, for every $S \in \supports$, $\setsize{X_{S}} \geq \Pr(x \in S)  N / 2$.

	Condition on this event and fix a $\loss_{i,j,S,v} \in \mcobjs'$.
	We observe that each $(x,y) \in X_{S}$ is an unbiased sample from $\dist_S$, we have at least $\Pr(x \in S)  N / 2$ samples, and $\loss_{i,j,S,v} \in \pm \bsetflat{1/\sqrt{\Pr(x \in S)}}$.
	Thus, by Chernoff's bound, with probability at least $1 - \delta \setsize{\mcobjs'}^{-1}$,
	\begin{align*}
		\abs{\setsize{X_{S}}^{-1} \sum_{(x,y) \in X_{S}}\sqrt{\Pr(x \in S)}  \loss_{i,j,S,v}(\hyp, (x, y)) - \sqrt{\Pr(x \in S)}  \risk_{\dist_S, \loss_{i,j,S,v}}(\hyp)} \leq \epsilon/ (3\sqrt{\Pr(x \in S)}).
	\end{align*}
	Thus with probability at least $1 -\delta$, for all $\loss_{i,j,S,v} \in \mcobjs'$,
	\begin{align*}
	&	\abs{\setsize{X_{S}}^{-1} \sum_{(x,y) \in X_{S}} \Pr(x \in S)  \loss_{i,j,S,v}(\hyp, (x, y)) - \Pr(x \in S)  \risk_{\dist_S, \loss_{i,j,S,v}}(\hyp)}\\&= \abs{\setsize{X_{S}}^{-1} \sum_{(x,y) \in X_{S}} \Pr(x \in S)  \loss_{i,j,S,v}(\hyp, (x, y)) - \risk_{\dist, \loss_{i,j,S,v}}(\hyp)} \\&\leq \epsilon.
	\end{align*}
	Taking a union bound over all $\loss \in \mcobjs'$, by uniform convergence, with probability at least $1 - 2\delta$, our returned $\loss_{i, j, S, v}$ is an $\epsilon$ best-response to the cost function $\loss_{i, j, S,v} \mapsto - \sqrt{\frac{v_S}{\Pr_\dist(x \in S)}} \risk_{\dist, \loss_{i, j,S ,v}}(\tsv{\hyp}{t})$.
        By \eqref{eq:ratio_ineq}, $\max_{\loss_{i^*, j^*,S^*, v^*}} \risk_{\dist, \loss_{i^*, j^*, S^*, v^*}}(\tsv{\hyp}{t}) - \risk_{\dist, \loss_{i,j,S,v}}(\tsv{\hyp}{t})
        \leq \epsilon$.
        Thus,  $\loss_{i, j, S, v}$ is an $2 \epsilon$ best-response to the cost function $\loss_{i, j, S,v} \mapsto -  \risk_{\dist, \loss_{i, j,S ,v}}(\tsv{\hyp}{t})$.
\end{proof}

\begin{proof}[Proof of Theorem~\ref{theorem:deterministic-conditional-multicalibration-guarantee-weak-full-new}]
    First, we will assume without loss of generality that, for every group $S \in \supports$, $\Pr_\dist(x \in S) \geq \epsilon^2 / 32$.
    This is because for such $S$, our multicalibration constraint is trivially satisfied for tolerances of at least $\epsilon / \sqrt{32}$.
    We can remove such $S$ from our group set $\supports$ by testing if $\Pr_\dist(x \in S) \leq \epsilon^2 / 32$; $O(\frac{\ln(\setsize{\supports} / \delta)}{\epsilon^4})$ samples suffices.

    Next, we will sample $v$ according to Lemma~\ref{lemma:sampling_v}, which also takes at most $O(\frac{\ln(\setsize{\supports} / \delta)}{\epsilon^4})$  samples.
	Lemma~\ref{lemma:spooooky} then guarantees that, since $T \geq C \ln(\numcls) / \epsilon^2$, $\weakregret(\tsv{\hyp}{1:T}, \tsv{\bsetflat{\risk_{\dist, \tsv{\loss}{t}}(\cdot)}}{1:T}) \leq T \epsilon / 8$.
    By Lemma~\ref{lemma:spooks}, $\tsv{\loss}{t}$ is an $(\epsilon / 8)$ best-response to the cost function $ - \risk_{\dist, (\cdot)}(\tsv{\hyp}{t})$  with probability at least $1 - \delta/T$ at each timestep $t$.
	By Lemma~\ref{lemma:nrbr-dynamics}, since the learner has at most $T \epsilon / 8$ weak regret and the adversary is $\epsilon / 8$ best-responding, there is a timestep $t \in [T]$ where $\tsv{\hyp}{t}$ is $(\epsilon/4)$-optimal for the problem $(\bset{\dist}, \mcobjs', \predictors)$.
	By Lemma~\ref{lemma:find}, the $\tsv{\hyp}{t}$ found by the algorithm is $(\epsilon/2)$-optimal with probability at least $1 - \delta$.
	By Fact~\ref{fact:cond-multicalibration-as-multiobjective-learning-weak}, this is an $(\supports, \epsilon, \lambda)$-multicalibrated predictor.
\end{proof}

\begin{algorithm}
	\caption{Conditional Multicalibration Algorithm (Theorem~\ref{theorem:deterministic-conditional-multicalibration-guarantee-weak-full-new})}
	\label{alg:deterministic-conditional-multicalibration-weak}
	\begin{algorithmic}[1]
		\STATE Input: $\supports \subseteq 2^\features$, $\epsilon \in (0, 1)$, $\numcls, \lvl, T, C \in \integers_+$, and distribution $\dist$;
		\STATE Initialize Prod iterate $\tsv{\hyp}{1} = [1/\numcls, \dots, 1/\numcls]^\features$;
  \STATE 
Sample $C \ln(\supports/\delta)/\epsilon^4$ datapoints $X$ from $D$ and, for each $S \in \supports$, remove $S$ from $\supports$ if $\frac{1}{\setsize{X}} \sum_{(x,y) \in X} 1[x \in S] \leq \epsilon^2 / 32$;
  \STATE 
Sample $C \ln(\supports/\delta)/\epsilon^4$ datapoints $X$ from $D$, and let $v_S = \frac{1}{\setsize{X}} \sum_{(x,y) \in X} 1[x \in S]$ for all $S \in \supports$;
		\FOR{$t = 1$ to $T$}
		\STATE Draw $N = C \epsilon^{-2}(\ln(\setsize{\supports} T / \delta) + \numcls \ln(\lvl))$ samples $\tsv{(x,y)}{1:N}_t$ from $\dist$;
		\STATE For all $S \in \supports$, let $X_{t,S} = \bset{(x,y)_t^i \mid i \in [N], x_i \in S}$;
		\STATE Let $\tsv{\advcost}{t}(S, \loss) \asseq \sqrt{v_S} \setsize{X_{t,S} }^{-1} \sum_{(x,y) \in X_{t,S}} \loss(\tsv{\hyp}{t}, (x, y))$;
		\STATE Let $ \tsv{\loss}{t} = \argmax_{\loss \in \mcobjs'} \tsv{\advcost}{t}(S, \loss)$;
		\STATE If $\tsv{\advcost}{t}(\tsv{S}{t}, \tsv{\loss}{t})\leq \epsilon / 8$ terminate and return $\tsv{\hyp}{t}$;
		\STATE Let $\tsv{\hyp}{t+1}(x) = \mathrm{Prod}(\tsv{\cost}{1:t}_x)$ where
		\begin{align*}
		\smash{\tsv{\cost}{t}_x(\hat{y}) \asseq \frac{ 1[x \in \tsv{S}{t}]}{2\sqrt{v_{\tsv{S}{t}}}} \paraflat{1 +  1[\hyp(x) \in \tsv{v}{t}, x \in \tsv{S}{t}] \cdot \tsv{i}{t} \cdot \hat{y}_{\tsv{j}{t}}};}
		\end{align*}
		\ENDFOR
  \STATE Return $p^*$, a uniform distribution over $\tsv{h}{1}, \dots, \tsv{h}{T}$;
	\end{algorithmic}
\end{algorithm}

\end{document}